\documentclass[11pt]{article}
\usepackage[margin=1in]{geometry}

\usepackage[authoryear,round]{natbib}

\RequirePackage{amsthm,amsmath,amsfonts,amssymb}
\RequirePackage[authoryear]{natbib}%% uncomment this for author-year citations
\RequirePackage[colorlinks,citecolor=blue,urlcolor=blue,filecolor=red]{hyperref}
\RequirePackage{graphicx}
\usepackage{enumitem}
\usepackage{soul}
\usepackage{booktabs}
\usepackage{bm}
\usepackage{algorithm,algorithmic}
\usepackage{booktabs,multirow}
\usepackage{adjustbox}

\usepackage[dvipsnames]{xcolor}
\usepackage{booktabs}
\usepackage{array}

\usepackage{setspace}

\newtheorem{assumption}{Assumption}

\newtheorem{theorem}{Theorem}
\newtheorem{lemma}{Lemma}

\newtheorem{corollary}{Corollary}

\newtheorem{remark}{Remark}

\usepackage{bm}

\newcommand{\best}[1]{\ifmmode\bm{#1}\else\textbf{#1}\fi}
\newcommand{\second}[1]{\ifmmode\underline{#1}\else\underline{#1}\fi}
\newcolumntype{L}[1]{>{\raggedright\arraybackslash}p{#1}}

\usepackage{tcolorbox}
\usepackage[toc,page]{appendix}

\title{Kernelized Advantage Estimation: From Nonparametric Statistics to LLM Reasoning}
\author{
\begin{tabular*}{0.85\textwidth}{@{\extracolsep{\fill}}cccc@{}}
Shijin Gong$^{1,*}$ &
Kai Ye$^{2,*}$ &
Jin Zhu$^{3}$ &
Xinyu Zhang$^{4,1}$
\end{tabular*}
\\[0.6em]
\begin{tabular*}{0.45\textwidth}{@{\extracolsep{\fill}}cc@{}}
Hongyi Zhou$^{5,\dagger}$ &
Chengchun Shi$^{2}$
\end{tabular*}
}

\date{}

\begin{document}
\maketitle
\begingroup
\renewcommand{\thefootnote}{\fnsymbol{footnote}}
\footnotetext[1]{Equal contribution.}
\footnotetext[2]{Corresponding author.}
\endgroup

% numeric affiliation footnotes
\begingroup
\renewcommand{\thefootnote}{\arabic{footnote}}
\footnotetext[1]{School of Management, University of Science and Technology of China; Email: \texttt{shijin49@mail.ustc.edu.cn}.}
\footnotetext[2]{Department of Statistics, London School of Economics and Political Science; Emails: \texttt{K.Ye1@lse.ac.uk}, \texttt{c.shi7@lse.ac.uk}.}
\footnotetext[3]{School of Mathematics, University of Birmingham; Email: \texttt{j.zhu.7@bham.ac.uk}.}
\footnotetext[4]{Academy of Mathematics and Systems Science, Chinese Academy of Sciences; Email: \texttt{xinyu@amss.ac.cn}.}
\footnotetext[5]{Department of Mathematics, Tsinghua University; Email: \texttt{zhou-hy21@mails.tsinghua.edu.cn}.}
\endgroup

\onehalfspacing     
\begin{abstract}
Recent advances in large language models (LLMs) have increasingly relied on reinforcement learning (RL) to improve their reasoning capabilities. Three types of approaches have been widely adopted: %(i) %Proximal policy optimization and advantage actor-critic 
The first relies on a deep neural network to estimate the value function of the learning policy in order to reduce the variance of the policy gradient. However, estimating and maintaining such a value network incurs substantial computational and memory overhead. %(ii) Group relative policy optimization (GRPO) 
The second avoids training a value network by approximating the value function using sample averages. However, it samples a large number of reasoning traces per prompt for accurate value function approximation, making it computationally expensive. The third samples only a single reasoning trajectory per prompt, which reduces computational cost but suffers from poor sample efficiency.

This paper focuses on a practical, resource-constrained setting in which only a small number of reasoning traces can be sampled per prompt, while low-variance gradient estimation remains essential for high-quality policy learning. To address this challenge, we bring classical nonparametric statistical methods, which are both computationally and statistically efficient, to LLM reasoning. We employ kernel smoothing as a concrete example for value function estimation and the subsequent policy optimization. Numerical and theoretical results demonstrate that our proposal achieves accurate value and gradient estimation, leading to improved policy optimization.
\end{abstract}

\section{Introduction}

Large language models (LLMs) have achieved remarkable success across a wide range of tasks. From early developments in large-scale pretraining, which aims to model the distribution of human language for next-token prediction, to later improvements through task-specific post-training \citep{ouyang2022training}, the evolution of LLMs has progressively shifted from merely ``speaking well'' to ``problem solving''. The emergence of large reasoning models has further accelerated this trend, demonstrating human-level or superhuman performance in complex tasks such as mathematical problem solving and code generation \citep{jaech2024openai}. As detailed in Section \ref{sec:relatedwork}, progress in reasoning has evolved from early prompting-based approaches \citep{wei2022chain}, which explicitly encourage models to generate intermediate reasoning steps without additional training, to more recent large-scale reinforcement learning (RL)–based methods that directly retrain the model by exploring various reasoning trajectories to optimize its outcome \citep{lambert2024tulu}.

Such RL-based reasoning algorithms are mostly policy gradient algorithms \citep[Chapter 13]{sutton2018reinforcement}. These methods, with REINFORCE \citep{williams1992simple} as a prototype, are stochastic gradient ascent algorithms \citep{robbins1951stochastic} in nature that update model parameters by estimating the gradient of the expected outcome under the current policy. In practice, this gradient is approximated by sampling reasoning trajectories from the policy. When tackling complex problems, the model becomes highly uncertain about how to reason, leading to large variability across sampled trajectories. This variability, in turn, introduces substantial noise into the gradient estimation. Consequently, a central objective in policy gradient RL is to obtain sample efficient gradient estimates.

There are two dominant approaches to reducing the variance of gradient estimation. The first type of approaches, including proximal policy optimization \citep[PPO,][]{schulman2017proximal} and advantage actor-critic \citep[A2C,][]{mnih2016asynchronous}, reduces variance by modeling a value function  parameterized by a separate deep neural network in addition to the LLM itself, to serve as a baseline. While sample efficient, these methods require to train and store the deep value network, which incurs substantial computational cost and memory overhead for reasoning tasks. The second type of approaches, represented by group relative policy optimization \citep[GRPO,][]{shao2024deepseekmath} and its variants such as GSPO \citep{zheng2025group}, Dr.~GRPO \citep{liu2025understanding} and GPG \citep{chu2025gpg}, eliminates the value network completely. However, these approaches require generating a sufficiently large number of reasoning trajectories per prompt to accurately approximate the value function, which can be computationally expensive in practice. 

This paper considers a resource-constrained setting in which training a separate value network is computationally infeasible, rendering PPO- or A2C-type algorithms impractical. While GRPO-type algorithms remain applicable, only a limited number of reasoning trajectories can be sampled per prompt. As a result, the variance of the resulting gradient estimator can be substantial, which in turn lowers the quality of the learned policy \citep{greensmith2004variance}. Such settings are common in universities, small research labs, and public sectors with limited computational resources.

To address this challenge, we take a different perspective. Instead of relying on deep value networks or large-scale sampling, we draw inspiration from classical nonparametric statistical methods, which are doubly efficient, both computationally and statistically. More specifically, we leverage nonparametric statistics to enhance value function estimation, employ kernel smoothing \citep{nadaraya1964estimating} as a concrete proposal, and incorporate the resulting estimates into RL-based policy optimization.

\begin{figure}[t]
  \centering
  \includegraphics[width=0.8\linewidth]{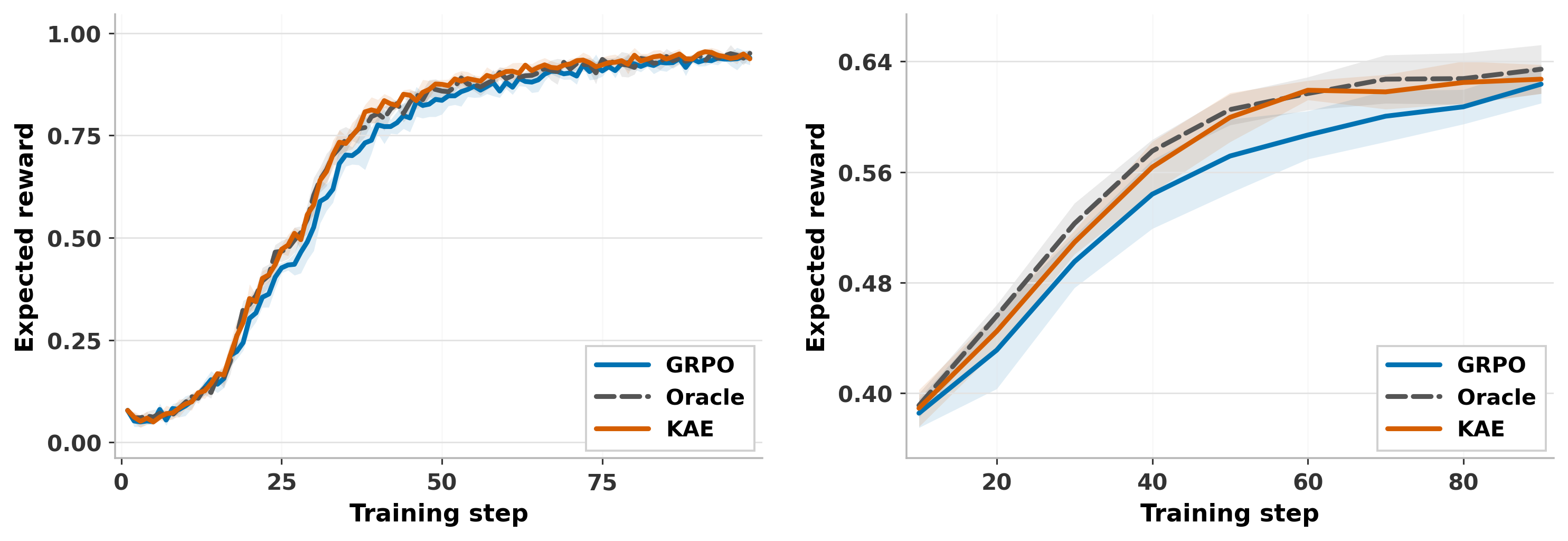}
  \caption{\small Expected rewards of one-shot GRPO \citep{wang2025reinforcement}, the oracle algorithm, and our method (denoted as KAE) on training (left) and testing (right) datasets in the one-shot regime where the training data consists of a single observation. One-shot GRPO applies the standard GRPO algorithm directly to this regime. 
  Shaded areas represent confidence intervals.}
  \label{fig:oneshot}
\end{figure}

Most notably, the proposed methodology enjoys an ``oracle'' property (see Figure~\ref{fig:oneshot} for an illustration): its performance on test data is comparable to that of an oracle algorithm that knows the oracle value function in advance and uses it for gradient estimation. This is particularly remarkable because, under limited computational budgets, we cannot afford to train complex neural networks for value estimation, nor can we sample a large number of trajectories to approximate it. Nevertheless, our method achieves performance comparable to that of an oracle with full access to the true value function. %and  consistently outperforms GRPO in resource-constrained settings.

We validate our proposal both theoretically and empirically. Theoretically, we establish the oracle property of our method and demonstrate its advantages over GRPO- and REINFORCE-type algorithms along three dimensions: the mean squared error (MSE) of the value function estimator (Theorem~\ref{thm:value} \& Corollary~\ref{coro:value}), the MSE of the gradient estimator (Theorem~\ref{thm:grad} \& Corollary \ref{coro:grad}), and the performance of the learned policy (Theorem~\ref{thm:policy} \& Corollary \ref{coro:policy}). Empirically, we conduct extensive experiments to validate these advantages over both GRPO- and REINFORCE-type algorithms in value estimation (Table~\ref{tab:baseline_mse_cross}), gradient estimation (Table~\ref{tab:gradvar_stage_summary}) and  policy optimization (Tables~\ref{tab:qwen7b_math35} and \ref{tab:qwen7b_dapo17k}, Figure~\ref{fig:qwen15b_n1_val_curves}). Additionally, ablation studies confirm that these gains stem from improved value function estimation (Figures~\ref{fig:qwen15b_n1_val_curves} and \ref{fig:ablation_math_curves}). Finally, sensitivity analysis shows that our value estimator's MSE is not overly sensitive to the choice of kernel bandwidth and kernel function  (Figure~\ref{fig:mse_kernel_sensitivity}).

\subsection{Related Work}\label{sec:relatedwork}
Our work sits at the intersection of artificial intelligence (AI) and statistics. On the AI side, it is closely related to the rapidly growing literature on LLM reasoning. On the statistics side, it connects to  classical work on nonparametric estimation and modern work on RL.

\textbf{LLM reasoning}. LLM reasoning methods can be broadly grouped into three categories: (1) prompting-based approaches; (2) inference-time\footnote{Here, ``inference-time'' refers to the stage at which a trained model is applied without retraining. This differs from the notion of statistical inference, which studies e.g., uncertainty quantification for an estimator.} approaches that enhance reasoning through search; and (3) training-time approaches via alignment or RL.

Early work falls into the first two categories. In particular, prompting-based methods such as chain-of-thought  prompting guide LLMs to produce step-by-step reasoning in a manner similar to humans \citep{wei2022chain}. This is often achieved by simple, magical prompts such as ``Let us think step by step.'' Under such instructions, the model generates a chain of thought that decomposes a complex problem into a sequence of intermediate subproblems and produces the reasoning process before arriving at a final answer. Instead of generating a single reasoning trajectory, inference-time approaches explore multiple candidate reasoning paths and select the best one as the final output \citep{zhang2026dagmath,zhu2026align}. 

More recent work has increasingly focused on training-time approaches, such as reinforcement learning from verifiable rewards (RLVR), which fine-tune model parameters using RL to directly enhance LLM reasoning capabilities \citep[see, e.g.,][]{lambert2024tulu,hu2025reinforce,dai2025cde,huang2026learning}. These approaches are most closely related to our proposal. They rely on policy gradient  algorithms such as REINFORCE, A2C and PPO, which estimate the gradient of the expected return of the policy using sampled trajectories and update the model parameters via stochastic gradient ascent. However, REINFORCE is well known to suffer from high variance in its gradient estimates. PPO and A2C mitigate this limitation by using the advantage function, defined as the difference between the return and the value function of the policy, in place of the return when constructing the gradient estimator. These methods estimate the value function via a deep neural network. However, maintaining and updating such a value network can be computationally intensive. 

A major breakthrough in this line of work is GRPO, the post-training algorithm underlying large reasoning models such as DeepSeekMath \citep{shao2024deepseekmath} and DeepSeek-R1 \citep{guo2025deepseek}, as well as a number of open-source LLMs \citep{liu2025fin,yang2025qwen3}. Its main idea is closely related to that of \citet{kool2019buy}: for each prompt, multiple reasoning trajectories are sampled, leading to multiple rewards, which are averaged to approximate the value function for variance reduction. In contrast to PPO and A2C, GRPO eliminates the need for learning a separate value network, which facilitates the computation while maintaining statistically efficient policy optimization.

GRPO has inspired a large number of follow-up methods \citep[e.g.,][]{hao2025on,li2025repo,lin2025cppo,xiong2025a,yan2025learning,zhao2025geometric,zheng2025parallel,li2026disco,li2026bicc}. 
Our work is most closely related to those that focus on improving the statistical or sample efficiency of value and advantage estimators. Among those available, \citet{zeng2025shrinking} and \citet{han2026ebpo} propose shrinkage estimators that replace GRPO's per-prompt reward average with a combination of per-prompt and cross-prompt averages. The intuition behind these methods is closely related to the James--Stein estimator \citep{james1961estimation} in classical statistics. Unlike these approaches, which borrow information across prompts within a single training iteration to improve estimation accuracy, our proposal borrows information across training iterations for the same prompt (see Section \ref{sec:alg}). \citet{wang2025krpo} and \citet{xu2025single} adopt similar ideas and employ Kalman filtering or Bayesian methods to smooth rewards across training iterations. Our proposal differs from these methods in three respects. Methodologically, we employ kernel smoothing rather than Kalman filtering, and we additionally design the prompt sampling schedule to further improve value and advantage estimation accuracy (see Section \ref{sec:experiment}). Theoretically, we establish learning guarantees for the value estimator, the gradient estimator, and the resulting policy. These theoretical results are largely absent from the aforementioned work.

\textbf{Nonparametric statistics and RL}. Nonparametric statistical methods estimate regression functions without imposing restrictive parametric assumptions. Classical approaches include kernel smoothing, local polynomial regression \citep{stone1977consistent}, and sieve estimators \citep{grenander1981abstract}. Statistically, these estimators can achieve the optimal convergence rates established by \citet{stone1982optimal}; see, e.g., \citet{fan1997local,huang1998projection,chen2007large}. Computationally, these algorithms are much more efficient to implement compared to deep neural networks. This makes them well suited for value function estimation in LLM reasoning under resource constraints. In this work, we employ kernel smoothing  as a concrete example, although the framework naturally extends to other nonparametric estimators.

More recently, there has been growing interest in developing RL algorithms in the statistics literature. These methods can be broadly categorized into three classes: (i) approaches designed for learning optimal dynamic treatment regimes without imposing Markov assumptions on the data generating process \citep[see, e.g.,][for reviews]{chakraborty2013statistical,laber2014dynamic,kosorok2019precision,tsiatis2019dynamic,ge2025reviewcausaldecisionmaking,gazi2026statisticalreinforcementlearningreal}; (ii) approaches developed for Markov decision processes (MDPs) under the Markov assumption \citep[e.g.,][]{ertefaie2018constructing,luckett2020estimating,liao2022batch,wang2023projected,chen2024reinforcement,li2024settling,shi2024value,zhou2024estimating,zhou2024federated,bian2025off,chai2025deep,jin2025policy,li2025reinforcement,miao2025reinforcement,liu2025online,zhong2025risksensitive}; and (iii) approaches tailored to RL from human or AI feedback \citep[e.g.,][]{lee2024lowrankcontextualreinforcementlearning,liu2025statistical,liu2025uncertaintyquantificationlargelanguage,lu2025contextual,xiao2025algorithmic,cho2026privacy,liu2026reinforcement,xia2026statistical}. 
Our work is related to those methods based on A-learning, which focus on estimating advantage (contrast, or blip) functions for optimal policy learning \citep{murphy2003optimal,robins2004optimal,lu2013variable,shi2018high,liang2023relative,shi2024statistically}. However, we study a fundamentally different application in LLM reasoning, leading to substantially different methodologies.

\section{Preliminaries: RLVR for LLM Reasoning}\label{sec:prelim}
We adopt a contextual bandit framework \citep[see, e.g.,][]{lai1985asymptotically} to formulate the RLVR problem in LLM reasoning. At each time step, the LLM receives a user query, referred to as a \textit{prompt} $X$. To address this query, the LLM generates a reasoning trajectory along with a final answer. Together, the reasoning trajectory and the answer form a \textit{completion} $Y$. Both $X$ and $Y$ are represented as sequences of tokens via tokenization, where each token (word, subword, or punctuation) is mapped to a unique integer according to a vocabulary that collects all possible tokens. The pair $(X, Y)$ is then evaluated by a verifiable reward function $r$, yielding a scalar \textit{reward} $Z = r(X, Y)$. In mathematics, each problem typically has a unique correct solution, and the reward can be defined as $1$ if the generated answer matches the ground truth and $0$ otherwise. Similarly, in coding tasks, LLM-generated programs can be executed to verify whether they pass the corresponding test cases.

Leading LLMs, such as GPT, Gemini, and Claude, are all autoregressive models. From an RL perspective, they can be viewed as policy networks, denoted by $\pi_{\theta}$, parameterized via the Transformer architecture. Specifically, given an input token sequence $x$ of arbitrary length, $\pi_{\theta}(\bullet |x)$ defines a probability mass function over the vocabulary, representing the distribution of the next token. A completion is generated autoregressively: the model first takes $X$ as input and produces the first token $Y_1$, then conditions on the concatenated sequence $(X, Y_1)$ to generate $Y_2$, and proceeds iteratively until an end-of-sequence token $Y_T$ is produced. This yields the full completion $Y = (Y_1, \ldots, Y_T)^\top$. 
The objective of RLVR is to identify the optimal parameter $\theta^*$ that maximizes the expected reward:
{
\setlength{\abovedisplayskip}{8pt}
\setlength{\belowdisplayskip}{8pt}
\setlength{\abovedisplayshortskip}{3pt}
\setlength{\belowdisplayshortskip}{3pt}
\begin{equation*}
    \theta^* = \arg\max_{\theta \in \Theta} J(\theta)\,\, \hbox{where}\,\,
    J(\theta) := \mathbb{E}^{\pi_{\theta}}(Z),
\end{equation*}}
where the expectation $\mathbb{E}^{\pi_{\theta}}$ is taken with respect to the distribution over completions induced by the policy $\pi_{\theta}$.

As discussed in Section~\ref{sec:relatedwork}, existing RLVR algorithms are policy gradient methods. They are motivated by the observation that the gradient of $J(\theta)$ can be expressed as
{
\setlength{\abovedisplayskip}{8pt}
\setlength{\belowdisplayskip}{8pt}
\setlength{\abovedisplayshortskip}{3pt}
\setlength{\belowdisplayshortskip}{3pt}
\begin{equation*}
    \nabla_{\theta} J(\theta) = \mathbb{E}^{\pi_{\theta}} \big[ Z \nabla_{\theta} \log \pi_{\theta}(Y |X) \big],
\end{equation*}}
that is, as the expectation of the product of the reward $Z$ and the score function $\nabla_{\theta} \log \pi_{\theta}(Y |X)$ (referred to as the policy score), where $\log \pi_{\theta}(Y |X)=\sum_{t=1}^T \log \pi_{\theta}(Y_t |X, Y_1, \cdots, Y_{t-1})$. This representation gives birth to the following stochastic gradient ascent (SGA) algorithm: at each iteration, a minibatch of prompts $\{X^{(b)}\}_{b=1}^B$ is sampled, a completion $Y^{(b)}$ is generated for each prompt, rewards $\{Z^{(b)}\}_{b=1}^B$ are obtained via the verifiable reward function, and the corresponding policy scores are computed. The gradient is then estimated by averaging the products of rewards and policy scores across the sampled batch, denoted by $\widehat{g}(\theta)$, and the policy parameter is updated via $\theta \leftarrow \theta +\eta \widehat{g}(\theta)$ for some learning rate parameter $\eta$. 

When confined to the classical MDP setting, such an algorithm is known as REINFORCE. However, REINFORCE’s gradient estimators are well known to suffer from high variance, which lowers the quality of the learned policy. Three major approaches have been developed in the literature to address this limitation:
\begin{enumerate}[leftmargin=*]
    \item The first approach is A2C, which introduces a critic function $C(X)$ to serve as a baseline and replaces the reward $Z$ with an advantage function $A = Z - C(X)$ in constructing the policy gradient estimator $\widehat{g}(\theta)$. Its main idea is that $\nabla_{\theta} \log \pi_{\theta}(Y|X)$ is a score function, and thus multiplying it by any $C(X)$ yields a vector with zero expectation. Consequently, subtracting $C(X)$ does not bias the gradient estimator. However, by choosing an appropriate baseline, its variance can be substantially reduced. Under an uncorrelatedness condition introduced in Section \ref{sec:theory}, the optimal baseline that minimizes the variance is the value function $V^{\pi_{\theta}}(X)$ \citep{greensmith2004variance}. This motivates A2C, which maintains a separate value network $\widehat{V}$ to approximate the value function and uses the plug-in estimator $Z - \widehat{V}(X)$ as the advantage for constructing the policy gradient.
    \item The second approach is REINFORCE++ \citep{hu2025reinforce}, which replaces the critic with a simple baseline, given by the average reward across all prompts at the current training step, i.e., $\bar{Z}=B^{-1}\sum_{b=1}^B Z^{(b)}$, to eliminate the need for the value network and facilitate the computation. 
    \item The third approach is GRPO-type algorithms, which sample a group of completions $\{Y^{(b,g)}\}_{g=1}^G$ for each prompt $X^{(b)}$ to eliminate the need for a value network. Specifically, for each completion $Y^{(b,g)}$, the corresponding reward is computed as $Z^{(b,g)} = r(X^{(b)}, Y^{(b,g)})$, and the within-group average $\bar{Z}^{(b)} = G^{-1} \sum_{g=1}^G Z^{(b,g)}$ is used as a proxy for the value function. This leads to an advantage function of the form $A^{(b,g)} = Z^{(b,g)} - \bar{Z}^{(b)}$, yielding a variance-reduced policy gradient estimator without requiring a learned value network. 
\end{enumerate}
\begin{figure}[t]
    \centering
    \includegraphics[width=0.8\textwidth]{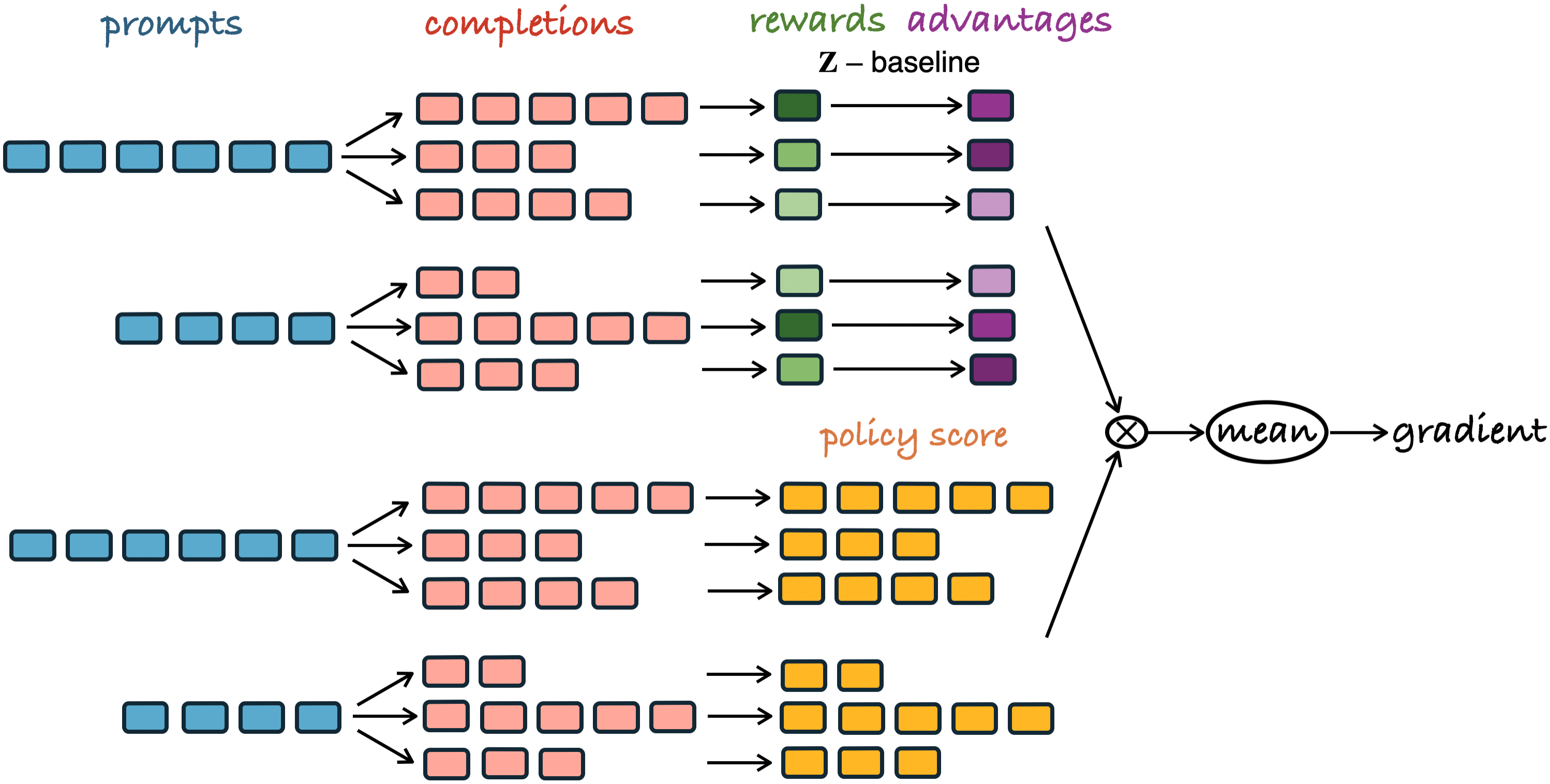}
    \caption{\small Illustrations of a generic algorithm that unifies A2C, REINFORCE- and GRPO-type algorithms. }
    \label{fig:metaalgorithm}
\end{figure} 
Figure \ref{fig:metaalgorithm} unifies the three approaches by introducing a generic baseline for advantage function estimation and illustrates the resulting gradient estimator. When a single completion is sampled per prompt and the baseline is set to the empirical average of rewards across all prompts, the method reduces to REINFORCE++. When the baseline is given by a learned value network, it recovers A2C. Finally, when a group of completions are sampled and the baseline is set to be the empirical group mean, it corresponds to GRPO-style algorithms.

Despite their statistical efficiency, both A2C and GRPO incur substantial computational costs. A2C is computationally intensive due to the requirement to maintain and update a separate value network.
In contrast, GRPO avoids a learned critic but relies on sampling multiple completions per prompt to obtain a sufficiently accurate gradient estimator. In practice, this group size can be large (e.g., $G=64$ in the DeepSeekMath paper). REINFORCE++, on the other hand, reduces computational cost by sampling only a single completion per prompt. However, its baseline aggregates rewards across all prompts and is a biased estimator of the value function for a specific prompt. Consequently, it lowers the statistical efficiency. 
In summary, achieving both statistical and computational efficiency in gradient estimation remains challenging. This is precisely where nonparametric statistical methods can play a central role in achieving both types of efficiency, and it motivates the proposed methodology. 

\section{Kernelized Advantage Estimation}
\label{sec:alg}
We first present the motivation of the proposed algorithm. We focus on a resource-constrained setting where the number of completions that can be sampled per prompt is limited. As a result, GRPO's value estimator, constructed via averaging rewards over multiple completions, suffers from large variance. To address this limitation, we propose to borrow information across different training iterations to improve the estimation accuracy of the value function, and consequently, the advantage function.

The motivation for our algorithm is best illustrated in the one-shot regime, where the training data consists of a single prompt $x$. Surprisingly, recent work suggests that LLMs can still generalize well even in such regimes \citep{wang2025reinforcement}. One possible explanation is that, although there is only one data example, it consists of many tokens, so the effective sample size is not necessarily one. We do not pursue this phenomenon further here. Instead, we use this setting to illustrate our main idea. 

In the one-shot regime, because the same prompt $x$ appears repeatedly over training, rewards collected at previous iterations also contain information about the current value function. Therefore, rather than estimating $V^{\pi_{\theta_i}}(x)$ solely from the current batch, we may borrow information from past rewards. Of course, historical rewards cannot be pooled naively. Since the policy parameter $\theta_i$ evolves over training, its value function $V^{\pi_{\theta_i}}(x)$ is generally different from $V^{\pi_{\theta_j}}(x)$ for $j < i$. Thus, older rewards should contribute less to the current estimate than more recent rewards. 

This leads naturally to a one-dimensional nonparametric regression problem: We treat the training iteration index $i$ as the predictor and the corresponding value function $V^{\pi_{\theta_i}}(x)$ as the target. The goal is to estimate $V^{\pi_{\theta_i}}(x)$ by combining rewards observed at both the current iteration and previous iterations. In this way, we connect value and advantage estimation in LLM reasoning to classical nonparametric regression, allowing us to leverage classical nonparametric statistical methods to borrow information over time. 

In this paper, we adopt kernel smoothing as a concrete nonparametric method to implement this idea. For simplicity, suppose that at each training iteration $i$ we sample a single completion and let $Z_i$ denote its reward. Let $K(\cdot)$ be a kernel function that satisfies $\int_0^{\infty} K(u)du=1$. We estimate the current value function by a kernel-weighted average of these rewards,
{
\setlength{\abovedisplayskip}{8pt}
\setlength{\belowdisplayskip}{8pt}
\setlength{\abovedisplayshortskip}{3pt}
\setlength{\belowdisplayshortskip}{3pt}
\begin{eqnarray*}
\widehat V_i(x)
=
\frac{1}{ih}\sum_{j=0}^{i-1} K\!\left(\frac{i-j}{ih}\right) Z_j,
\end{eqnarray*}}
where $h > 0$ is a bandwidth parameter. 
Due to the use of kernel smoothing for value estimation, and the resulting construction of the advantage estimator, we refer to our method as \emph{kernelized advantage estimation} (KAE).
\begin{algorithm}[!t]
\caption{Kernelized Advantage Estimation (KAE).}\label{alg1}
% \singlespacing
\begin{algorithmic}[1]
    \STATE \textbf{Input:} Prompt set $\mathcal{X}$, initial parameter $\theta_0 \in \Theta$, learning rates $\{\eta_i\}_{i=0}^{n-1}$, batch size $B$, per-prompt group size $G$, kernel function $K(\cdot)$, bandwidth $h$, and sets of historical rewards $\mathcal{H}_i(x)=\emptyset$ for any $i<n$ and $x\in \mathcal{X}$.
    \FOR{$i = 0,1,2,\dots,n-1$}
        \STATE Sample a minibatch of prompts $\{X^{(b)}\}_{b=1}^B$ from $\mathcal{X}$.
        \FOR{$b = 1,\dots,B$}
            \STATE Generate a group of completions $\{Y^{(b,g)}\}_{g=1}^G \sim \pi_{\theta_i}(\bullet | X^{(b)})$.
            \STATE Compute rewards $Z^{(b,g)} = r(X^{(b)}, Y^{(b,g)})$ for $g=1,\dots,G$.
\IF{$\mathcal{H}_i(X^{(b)})=\emptyset$} 
\STATE Set $\widehat V_i^{(g)}(X^{(b)})=(G-1)^{-1}\sum_{k\neq g} Z^{(b,k)}$ for $g=1,\cdots,G$.
            \ELSE
            \STATE \[
            \widehat V_i^{(g)}(X^{(b)})=
             \frac{1}{M_i(X^{(b)})}\Big[\sum_{(I_j,Z_{j}) \in \mathcal{H}_i(X^{(b)})} K\Big(\frac{i-I_j}{ih}\Big) Z_{j}+\sum_{k\neq g} K(0) Z^{(b,k)}\Big],
            \]
            where $M_i(X^{(b)})$ denotes the normalizing constant $h|\mathcal{H}_i(X^{(b)})|+(G-1)K(0)$. 
            \ENDIF
            \STATE For each $g=1,\dots,G$, define the advantage estimate $A^{(b,g)} = Z^{(b,g)} - \widehat V_i^{(g)}(X^{(b)})$.
        \ENDFOR
        \STATE Estimate the policy gradient:
        \[
        \widehat g_{\textrm{KAE}}(\theta_i)
        =
        \frac{1}{BG}
        \sum_{b=1}^B \sum_{g=1}^G
        A^{(b,g)} \nabla_{\theta}\log \pi_{\theta_i}(Y^{(b,g)}|X^{(b)}).
        \]
        \STATE Update the policy parameter $\theta_{i+1} \gets \theta_i + \eta_i \widehat g_{\textrm{KAE}}(\theta_i)$.
        \STATE Update the sets of historical rewards:
        \[
        \mathcal{H}_{i+1}(X^{(b)}) \gets \mathcal{H}_i(X^{(b)}) \cup \{(i, Z^{(b,g)}): g=1,\dots,G\},
        \qquad b=1,\dots,B.
        \]
    \ENDFOR
    \STATE \textbf{Output:} $\pi_{\theta_n}$.
\end{algorithmic}
\end{algorithm}
In more general regimes, prompts used in LLM post-training are often drawn from a fixed set (e.g., a collection of mathematical questions), denoted by $\mathcal{X}=\{x_1,x_2,\cdots,x_m\}$ with $m$ being the total number of prompts. Consequently, the same prompts may be encountered multiple times during training, and past rewards remain informative when estimating the value function under the current prompt. We summarize the complete procedure in Algorithm \ref{alg1} and highlight some main steps below.

Consistent with Section~\ref{sec:prelim}, we sample a minibatch of $B$ prompts $\{X^{(b)}\}_{b=1}^B$ at each training iteration (Line 3), generate $G$ completions $\{Y^{(b,g)}\}_{g=1}^G$ for each prompt (Line 5), and obtain their rewards $\{Z^{(b,g)}\}_{g=1}^G$. On Lines 7 and 8, the value function is estimated in a leave-one-out manner similar to \citet{kool2019buy}. Specifically, to construct the advantage estimator $A^{(b,g)}$ for the $g$th completion, we use all other rewards in the current group (excluding the $g$th), together with historical rewards associated with the same prompt. Although the $g$th reward could also be incorporated, doing so would only change the advantage estimator by a multiplicative constant asymptotically. Given these advantage estimates, the policy gradient is computed on Line 12 by averaging the product of the advantage and the policy score over all prompt–completion pairs. Finally, on Line 13, the model parameters are updated via stochastic gradient ascent using the learning rate $\eta_i$.

%{\color{red} [whether to discuss KAE can be applied to not only GRPO but many other methods here? Or in the Discussion section]}

\section{Theoretical Results}\label{sec:theory}
In this section, we establish the statistical properties of KAE. We begin with a high-level summary of our theoretical findings. As mentioned earlier, our theories are organized along three dimensions: (i) value estimation; (ii) gradient estimation and (iii) policy optimization. Their connections highlight the motivation behind KAE.

\textbf{Summary}. KAE is designed to improve value estimation. We first establish an MSE bound for the proposed value estimator (Theorem~\ref{thm:value}). %and show that, under an appropriate choice of kernel bandwidth, it achieves Stone’s optimal rate of convergence (Corollary~\ref{coro:value}). 
We next show that improvements in value estimation directly translate into improved gradient evaluation: the MSE of our gradient estimator differs from that of a corresponding \emph{oracle} algorithm, only by an extra term that is proportional to the MSE of the value estimator asymptotically (Theorem~\ref{thm:grad}). Here, the oracle algorithm is identical to Algorithm~\ref{alg1}, except that it uses the true value function to construct the advantage and the resulting policy gradient. Finally, we show that improved gradient estimation further translates into better policy optimization. In particular, the upper bound on the suboptimality gap of the learned policy, defined as the difference in expected value between the optimal policy and the learned policy, depends directly on the MSE of the gradient estimator (Theorem~\ref{thm:policy}).
Taken together, these results establish the oracle property of KAE and imply its advantages over REINFORCE- and GRPO-type algorithms along all three dimensions (Corollaries~\ref{coro:value} -- \ref{coro:policy}).

Next, we impose the following technical conditions. 
\begin{assumption}[I.i.d. sampled prompts]\label{assump:iidsampledprompt}
The minibatches of prompts $\{X^{(b)}\}_{b=1}^B$ are sampled i.i.d. across training iterations. At each iteration, a minibatch is drawn uniformly at random from $\mathcal{X}$ without replacement, and independently of previously sampled minibatches.
\end{assumption}

\begin{assumption}[Boundedness]\label{assump:boundedreward}
    $Z$ and $\nabla_{\theta}\log \pi_{\theta}(Y|X)$ are almost surely bounded. 
\end{assumption}

\begin{assumption}[Kernel function]\label{assump:kernelfun}
    $K$ is supported on $[0,1]$, bounded, Lipschitz continuous and satisfies $\int_{0}^1 K(u)du=1$. %and $\int_{0}^1 u |K(u)|du<\infty$.
\end{assumption}

\begin{assumption}[Smoothness]
\label{assump:psmooth}
For any $x\in\mathcal X$, $V^{\pi_{\theta}}(x)$ is
Lipschitz continuous with respect to $\theta$, i.e., there exists some constant $L$ such that $|V^{\pi_{\theta_1}}(x)-V^{\pi_{\theta_2}}(x)|\le L\|\theta_1-\theta_2\|$ for all $\theta_1,\theta_2$. Additionally, $\nabla_{\theta} J(\theta)$ is Lipschitz continuous with respect to $\theta$ as well.
\end{assumption}
\begin{assumption}[Learning rate]\label{assump:learningrate}
    The sequence of learning rates $\{\eta_i\}_i$ follows a $1/i$ schedule,  where $\eta_i = \beta/i$ for any $i\ge 1$ and some $\beta>0$.
\end{assumption}

The i.i.d. assumption in Assumption~\ref{assump:iidsampledprompt} is imposed for simplicity. In practice, prompts may be sampled dependently across iterations, as in our implementation (see Section~\ref{sec:experiment}). Assumption~\ref{assump:boundedreward} is mild. In LLM reasoning, rewards are bounded, for example, as a binary indicator of the correctness of the model’s output, potentially with an additional format reward that encourages the output to follow a prescribed form. Assumption \ref{assump:kernelfun} is standard in the kernel smoothing literature. 
Assumption~\ref{assump:psmooth} is also mild. The first part only requires the value function to be Lipschitz continuous in the policy parameter $\theta$, rather than differentiable. This accommodates policy networks with ReLU activations, which are Lipschitz continuous but not differentiable everywhere. The second part imposes a stronger smoothness condition on the objective $J(\theta)$. Since $J(\theta)$ is obtained by averaging $V^{\pi_\theta}(x)$ over $x$, such averaging may smooth out pointwise nonsmoothness of the value function. As a simple illustration, the function $\max(X-\theta,0)$ is nonsmooth in $\theta$ pointwise, but when $X\sim N(0,1)$, its expectation $\mathbb{E}\max(X-\theta,0)$ is infinitely differentiable as a function of $\theta$. Finally, the $1/i$ learning rate schedule in Assumption~\ref{assump:learningrate} is motivated by stochastic gradient algorithms, under which the parameter estimator achieves optimal convergence rates and admits a tractable limiting distribution \citep{zhang2016central}.

%Conditions similar to Assumption \ref{assump:kernelfun} is weaker than the kernelconditions commonly imposed in classical high-order kernel smoothing since our analysis only uses a first-order local approximation. 
%also differs from the smoothness assumptions in traditional kernel regression \citep{wand1994kernel}. Classical kernel smoothing typically assumes that the target function is $p$-times continuously differentiable with respect to the smoothing variable. However, such condition might be violated in setting where the policy class is parametrized by neural-network with non-smooth activations.

The following theorem upper bounds the bias and variance of the KAE value estimator. 

\begin{theorem}[Bias and variance of value estimator]\label{thm:value}
Suppose Assumptions \ref{assump:iidsampledprompt} -- \ref{assump:learningrate} hold. %and the learning rates $\eta_i \leq \beta/i$ for some constant $\beta>0$, the bandwidth $h$ satisfies $h\to0$ and $ih\to \infty$ as $i\to \infty$. 
Then for any $x$ that is sampled at the $i$th iteration and any bandwidth $h<1$,
{
\setlength{\abovedisplayskip}{8pt}
\setlength{\belowdisplayskip}{8pt}
\setlength{\abovedisplayshortskip}{3pt}
\setlength{\belowdisplayshortskip}{3pt}
\begin{eqnarray*}
    \text{Bias}(\widehat{V}_i^{(g)}(x)):= \mathbb{E}[\widehat{V}_i^{(g)}(x)] - V^{\pi_{\theta_i}}(x) = O(h) + O\Big(\frac{1}{i h}\Big),\quad 
    \text{Var}(\widehat{V}_i^{(g)}(x)) = O\Big(\frac{1}{N_i(x) h}\Big),
\end{eqnarray*}}
where $N_i(x)$ is the sample size for estimating $V^{\pi_{\theta_i}}(x)$, given by $G$ times the number of past occurrences of $x$ plus $G-1$ current samples (due to leave-one-out). The bias and variance are computed conditional on $x$ being sampled at iteration $i$ and on the past sampling history, under which $N_i(x)$ is treated as fixed.
\end{theorem}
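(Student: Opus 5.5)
Condition throughout on $x$ being sampled at iteration $i$ and on the past sampling history, so that the set of past iterations $\{I_j\}$ at which $x$ was sampled, and hence $N_i(x)$ and $M_i(x)$, are fixed. Write the estimator as a weighted average $\widehat V_i^{(g)}(x)=\sum_\ell w_\ell Z_\ell$. The weights are $w_\ell=K((i-I_\ell)/(ih))/M_i(x)$ for historical rewards and $K(0)/M_i(x)$ for the $G-1$ current leave-one-out rewards. Each $Z_\ell$ satisfies $\mathbb E[Z_\ell\mid\theta_{I_\ell}]=V^{\pi_{\theta_{I_\ell}}}(x)$. Because $K$ is supported on $[0,1]$, only iterations with $i-I_\ell\le ih$ contribute, i.e.\ the window $[i(1-h),i]$, which is nonempty in the index scale since $h<1$.

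\textbf{Bias.} Decompose
\[
\mathbb E\widehat V_i^{(g)}(x)-V^{\pi_{\theta_i}}(x)=\mathbb E\sum_\ell w_\ell\big(V^{\pi_{\theta_{I_\ell}}}(x)-V^{\pi_{\theta_i}}(x)\big)+\Big(\sum_\ell w_\ell-1\Big)\text{-type normalization error}.
\]
For the first term, use Assumption~\ref{assump:psmooth} together with Assumptions~\ref{assump:boundedreward} and~\ref{assump:learningrate}. Since $\|\widehat g\|$ is bounded a.s., we get $\|\theta_i-\theta_{I_\ell}\|\le C\beta\sum_{k=I_\ell}^{i-1}1/k\le C\beta\log(i/I_\ell)$. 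Within the window $I_\ell\ge i(1-h)$, this gives $\log(1/(1-h))=O(h)$ for $h$ bounded away from $1$, hence an $O(h)$ bias. In general one bounds $\log(i/I_\ell)\le (i-I_\ell)/I_\ell$.

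The second source is discretization. The normalizer $M_i=h|\mathcal H_i|+(G-1)K(0)$ is a surrogate for $\sum_\ell K(\cdot)$. One should compare the Riemann sum $\frac1{ih}\sum_{j}K((i-j)/(ih))$ with $\int_0^1K=1$. By Lipschitz continuity and boundedness of $K$ (Assumption~\ref{assump:kernelfun}), the Riemann-sum error is $O(1/(ih))$, and the current-group $K(0)$ terms contribute a further $O(1/(ih))$ relative weight. Under i.i.d.\ uniform prompt sampling (Assumption~\ref{assump:iidsampledprompt}), the occurrences of $x$ are spread so that the normalized weights average to one up to this error. Since $Z$ is bounded, any weight mismatch of size $O(1/(ih))$ contributes bias of that order.

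\textbf{Variance.} Write $Z_\ell=V^{\pi_{\theta_{I_\ell}}}(x)+\varepsilon_\ell$, where the $\varepsilon_\ell$ form a martingale-difference sequence with respect to the filtration generated by the training history; completions within a group are conditionally independent given $\theta_i$. Then
\[
\operatorname{Var}\widehat V_i^{(g)}\le 2\,\mathbb E\Big(\sum_\ell w_\ell\varepsilon_\ell\Big)^2+2\operatorname{Var}\Big(\sum_\ell w_\ell V^{\pi_{\theta_{I_\ell}}}(x)\Big).
\]
The martingale cross terms vanish, so the first term is at most $C\sum_\ell w_\ell^2\le C\max_\ell w_\ell\sum_\ell w_\ell$. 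Since $\max w_\ell\le\|K\|_\infty/M_i$ and $M_i\asymp h N_i(x)$ (as $h|\mathcal H_i|\asymp hN_i/G$ plus a constant), this yields $O(1/(N_ih))$. The second term is handled by centering at $V^{\pi_{\theta_i}}(x)$: by the Lipschitz argument above it is $O(h^2)$, which can be absorbed by treating it as part of the MSE bias, or shown smaller via the $1/i$ step sizes.

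\textbf{Main obstacle.} The hardest step is the $O(1/(ih))$ bias term. The estimator's normalizer $M_i$ is built from $h|\mathcal H_i|$ rather than from the actual kernel weights, and the past occurrences of $x$ are random. I would first handle this by approximating the kernel sum over the realized occurrence times by its expectation under uniform sampling, a Riemann sum with mesh $1/(ih)$. I would then use the Lipschitz property of $K$ to control the gap from the integral. If the conditioning on a fixed history makes this awkward, I would fall back to the cleaner one-shot case, where every iteration is an occurrence, and note that general histories only rescale the effective mesh.
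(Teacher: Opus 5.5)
There is a genuine gap, and it sits in the conditioning you flagged as the main obstacle. You fix the realized occurrence times $\{I_\ell\}$ of $x$. Under that conditioning neither bound holds, because the normalizer $M_i(x)=h|\mathcal H_i(x)|+(G-1)K(0)$ uses $h|\mathcal H_i(x)|$ as a stand-in for the realized kernel mass $\sum_{\ell}K((i-I_\ell)/(ih))$ of the historical rewards, and for a fixed history the two can be far apart. If every past occurrence lies before iteration $i(1-h)$, all historical weights vanish. Then $\mathbb E\,\widehat V_i^{(g)}(x)=\frac{(G-1)K(0)}{M_i(x)}V^{\pi_{\theta_i}}(x)$, an order-one bias once $h|\mathcal H_i(x)|$ is large. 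If instead all occurrences fall in the window just before $i$, then $\sum_\ell w_\ell\asymp 1/h$ and the mean is inflated by a factor of order $1/h$; your bound $\sum_\ell w_\ell^2\le\max_\ell w_\ell\sum_\ell w_\ell$ then only gives $O(1/(h^2N_i(x)))$. Even for a typical random history (with $B<m$), the realized mass deviates from $h|\mathcal H_i(x)|$ by a relative amount of order $(n_i(x)h)^{-1/2}$, not $O(1/(ih))$. So ``approximating the realized sum by its expectation'' cannot produce the stated bias. The one-shot fallback does not transfer either, because a general history is not a rescaled grid. Your remark that the normalized weights ``average to one'' is true only in expectation over the occurrence times, which is incompatible with holding them fixed.

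The missing idea, which is how the paper proceeds, is to condition only on the count $n_i(x)=|\mathcal H_i(x)|/G$ (equivalently $N_i(x)$) and on the event that $x$ is drawn at iteration $i$, and then average over where the past occurrences fell. Under Assumption~\ref{assump:iidsampledprompt}, given the count, each occurrence time is uniform on $\{0,\dots,i-1\}$ (Lemma~\ref{lem:Uniform-distribution}) and independent of the current draw. The expected historical numerator is therefore $Gn_i(x)\,\frac1i\sum_{t=1}^{i}K(t/(ih))V_{i-t}(x)=Gn_i(x)\big[h\int_0^1K(u)V_{i-ihu}(x)\,du+O(1/i)\big]$. This matches $h|\mathcal H_i(x)|V_i(x)$ up to $O(N_i(x)h^2)+O(N_i(x)/i)$, and dividing by $M_i(x)\gtrsim hN_i(x)$ gives $O(h)+O(1/(ih))$. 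The same averaging gives $\mathbb E\,K^2((i-I_j)/(ih))=O(h)$, which is what yields $1/(hN_i(x))$ rather than $1/(h^2N_i(x))$ in the variance. Your other ingredients coincide with the paper's: the $\log(t_2/t_1)$ modulus of $t\mapsto V_t(x)$ from $\eta_k=\beta/k$ and bounded gradients, and the $O(1/(ih))$ Riemann error from Lipschitz $K$. The statement's phrase ``on the past sampling history'' does invite your reading, but the result holds only in this averaged sense.

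Separately, the term $\operatorname{Var}\big(\sum_\ell w_\ell V^{\pi_{\theta_{I_\ell}}}(x)\big)$ in your variance split cannot be ``absorbed into the bias'' without changing the claim. If the parameter path is random, it also picks up $(\sum_\ell w_\ell)^2\operatorname{Var}(V^{\pi_{\theta_i}}(x))$, which need not be small. The paper avoids this by treating $t\mapsto V_t(x)$ as a fixed sequence, as measuring bias against $V^{\pi_{\theta_i}}(x)$ already presupposes. The variance then comes only from reward noise and the occurrence times.
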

Our results in Theorem~\ref{thm:value} are generally consistent with classical kernel smoothing theory, with the exception of an additional bias term of order $O((ih)^{-1})$. This term arises from approximating a continuous kernel integral by a discrete sum over the iteration index. 
Nevertheless, this term is of the same order as the variance under the resource-constrained setting where $N_i(x) \le Gi = O(i)$, and its square is therefore of higher order.
\begin{corollary}[Consistency of value estimator]\label{coro:value}
    Suppose the kernel bandwidth $h$ is proportional to $[N_i(x)]^{-1/3}$. 
    Under the conditions in Theorem \ref{thm:value} and in the resource-constrained setting where $G$ is fixed, KAE's value estimator achieves the convergence rate $\text{MSE}(\widehat{V}_i^{(g)}(x)) = O([N_i(x)]^{-2/3})$. In contrast, neither the GRPO nor the REINFORCE++ value estimator is consistent in this resource-constrained setting.
\end{corollary}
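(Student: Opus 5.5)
The plan is to start from the decomposition $\text{MSE}(\widehat V_i^{(g)}(x)) = \text{Bias}^2 + \text{Var}$ and plug in Theorem~\ref{thm:value}. That gives
\[
\text{MSE}(\widehat V_i^{(g)}(x)) = O\Big(h^2 + \frac{1}{i^2h^2} + \frac{1}{N_i(x)h}\Big).
\]
In the resource-constrained setting $G$ is fixed, so $N_i(x) \le G i$, hence $N_i(x)=O(i)$ and $1/i = O(1/N_i(x))$. The cross-discretization term then satisfies
\[
\frac{1}{(ih)^2} = O\Big(\frac{1}{N_i(x)^2h^2}\Big) = O\Big(\frac{1}{N_i(x)h}\cdot \frac{1}{N_i(x)h}\Big).
\]
This is of smaller order than the variance term as soon as $N_i(x)h\to\infty$.

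Now substitute $h \asymp N_i(x)^{-1/3}$. The squared bias is $h^2 \asymp N_i(x)^{-2/3}$, and the variance is $1/(N_i(x)h) \asymp N_i(x)^{-2/3}$. The discretization term is $O(N_i(x)^{-4/3})$, which is negligible. Two side conditions remain. First, $h<1$ is needed for Theorem~\ref{thm:value}, and it holds once $N_i(x)$ is large, after adjusting the proportionality constant. Second, the conditioning in Theorem~\ref{thm:value} treats $N_i(x)$ as fixed given the sampling history, so the rate holds conditionally, which is what the statement asserts. Together these give $\text{MSE}=O(N_i(x)^{-2/3})$.

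For the negative claims, I would compute directly rather than use the theorem.

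\emph{GRPO.} The leave-one-out group mean $(G-1)^{-1}\sum_{k\ne g}Z^{(b,k)}$ is an average of $G-1$ conditionally i.i.d. rewards from $\pi_{\theta_i}(\cdot\mid x)$, so it is unbiased. Its variance is $\sigma_i^2(x)/(G-1)$, where $\sigma_i^2(x)=\text{Var}^{\pi_{\theta_i}}(Z\mid x)$. With $G$ fixed this does not vanish unless the conditional reward variance degenerates. For binary rewards $\sigma_i^2=V(1-V)$, so the estimator is inconsistent whenever $V^{\pi_{\theta_i}}(x)$ stays away from $\{0,1\}$.

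\emph{REINFORCE++.} The baseline $\bar Z$ averages over prompts, so its conditional mean is a weighted cross-prompt average of $V^{\pi_{\theta_i}}(x')$. This has a nonvanishing bias $\bar V_i - V^{\pi_{\theta_i}}(x)$ whenever value functions are heterogeneous across prompts, and $B$ is fixed. I would conclude by exhibiting such a nondegenerate case, which suffices for "not consistent in general."

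The main obstacle is making the negative statements precise, since consistency can fail only under nondegeneracy conditions. I would state them as holding whenever $\sigma_i^2(x)$ is bounded below, or the value functions differ across prompts, respectively. The positive rate follows routinely from Theorem~\ref{thm:value}.
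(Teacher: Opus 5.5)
Your proposal is correct and follows essentially the same route as the paper. The paper also writes MSE as bias squared plus variance using Theorem~\ref{thm:value} and picks $h\asymp N_i(x)^{-1/3}$. For the negative claims it likewise argues that GRPO's leave-one-out mean is unbiased but keeps an $O(1/G)$ variance, and that REINFORCE++'s baseline has a cross-prompt bias. You make explicit two details the paper's proof leaves implicit: you dispose of the $(ih)^{-2}$ term via $N_i(x)=O(i)$, which the paper only remarks on after Theorem~\ref{thm:value}, and you state nondegeneracy conditions for the inconsistency claims. These tighten the argument rather than change it.
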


\begin{remark}
The rate in Corollary~\ref{coro:value} coincides with Stone's optimal
minimax rate for one-dimensional nonparametric regression with a Lipschitz continuous regression function. In contrast, the value estimators of GRPO and REINFORCE++ are both inconsistent in this resource-constrained setting. It thus formally verifies the advantage of KAE over both baseline algorithms in value estimation.
\end{remark}

We next analyze KAE’s gradient estimator $\widehat{g}_{\textrm{KAE}}(\theta)$; refer to its definition in Algorithm~\ref{alg1}. We begin with the following condition on the policy score. 
\begin{assumption}[Uncorrelatedness]\label{assump:score}
$
Z^{(b,g)}$ is uncorrelated with $\|\nabla_{\theta}\log \pi_{\theta}(Y^{(b,g)}|X^{(b)})\|^2$ given $X^{(b)}$. 
\end{assumption}
Assumption~\ref{assump:score} guarantees that the value function serves as the optimal baseline for minimizing the variance of the policy gradient estimator \citep{greensmith2004variance}. It justifies PPO and A2C, which use a value network for value function estimation, as well as our nonparametric statistical approach.
\begin{theorem}[MSE of gradient estimator]\label{thm:grad}
    Let $\widehat{g}_{\text{oracle}}(\theta)$ denote the oracle gradient estimator, with the oracle value function used to construct the advantage. 
    Under Assumptions \ref{assump:iidsampledprompt} -- \ref{assump:score}, we have
    {
\setlength{\abovedisplayskip}{8pt}
\setlength{\belowdisplayskip}{8pt}
\setlength{\abovedisplayshortskip}{3pt}
\setlength{\belowdisplayshortskip}{3pt}
\begin{eqnarray*}
        \text{MSE}(\widehat{g}_{\text{KAE}}(\theta_i))=\text{MSE}(\widehat{g}_{\text{oracle}}(\theta_i)) + O\left(\frac{h^{2}}{BG}\right)+O\left(\frac{1}{i^2 h^2BG}\right)  + O\left(\frac{m}{ihB^2G^2}\right)+O\left(\frac{m^2}{i^2h^2B^3G^2}\right).
    \end{eqnarray*}}
\end{theorem}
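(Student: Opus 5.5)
The plan is to write $\widehat g_{\text{KAE}}(\theta_i) = \widehat g_{\text{oracle}}(\theta_i) + \Delta_i$, where
\[
\Delta_i = \frac{1}{BG}\sum_{b=1}^B\sum_{g=1}^G \big(V^{\pi_{\theta_i}}(X^{(b)}) - \widehat V_i^{(g)}(X^{(b)})\big)\, S^{(b,g)}, \qquad S^{(b,g)} := \nabla_\theta \log\pi_{\theta_i}(Y^{(b,g)}|X^{(b)}).
\]
Expanding the square gives
\[
\text{MSE}(\widehat g_{\text{KAE}}) = \text{MSE}(\widehat g_{\text{oracle}}) + 2\,\mathbb{E}\langle \widehat g_{\text{oracle}} - \nabla J(\theta_i), \Delta_i\rangle + \mathbb{E}\|\Delta_i\|^2 .
\]
The oracle estimator is unbiased because the baseline is a function of $X$ only. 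It therefore suffices to bound the cross term and $\mathbb{E}\|\Delta_i\|^2$ by the four listed remainders. Throughout, I condition on $\theta_i$, on the past sampling history, and on the current minibatch of prompts. Under this conditioning, $N_i(x)$ is fixed, and Theorem~\ref{thm:value} applies prompt by prompt.

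\textbf{Key structural fact.} The leave-one-out estimator $\widehat V_i^{(g)}(X^{(b)})$ depends only on historical rewards and on $Z^{(b,k)}$ for $k\neq g$. Hence, conditionally, it is independent of $(Y^{(b,g)},Z^{(b,g)})$, and $\mathbb{E}[S^{(b,g)}\mid X^{(b)}] = 0$.

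\textbf{Step 1: the quadratic term.} Expand $\mathbb{E}\|\Delta_i\|^2$ into diagonal terms $(b,g)=(b',g')$ and off-diagonal terms.
\begin{itemize}
\item \emph{Diagonal terms.} By independence and Assumption~\ref{assump:boundedreward}, each is bounded by $C\,\mathbb{E}[\text{MSE}(\widehat V_i^{(g)}(X^{(b)}))]$. Theorem~\ref{thm:value} bounds this by $O(h^2) + O((ih)^{-2}) + O((N_i h)^{-1})$. Dividing by $BG$ yields $O(h^2/(BG)) + O(1/(i^2h^2BG))$, plus a variance piece $O(\mathbb{E}[1/N_i]/(hBG))$.
\item \emph{Averaging over histories.} Under Assumption~\ref{assump:iidsampledprompt}, a given prompt is sampled at each iteration with probability $B/m$. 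Hence $N_i(x)$ concentrates around $iBG/m$, and $\mathbb{E}[1/N_i]\lesssim m/(iBG)$; a Chernoff bound handles the event that $N_i$ is small, using boundedness. This turns the variance piece into $O(m/(ihB^2G^2))$.
\item \emph{Off-diagonal terms with $b\neq b'$.} These vanish: by conditional independence across prompts, one of the factors $\mathbb{E}[S\mid X]=0$ survives.
\item \emph{Off-diagonal terms with $b=b'$, $g\neq g'$.} The leave-one-out estimator at $g$ contains $Z^{(b,g')}$, which is correlated with $S^{(b,g')}$. The resulting term is of size $\text{weight}\times\text{bias}$ per pair, where the weight is $K(0)/M_i\approx 1/(N_i h)$. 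After summation and the $1/(BG)^2$ normalization, these cross-group terms together with the squared-bias contributions should give the final term $O(m^2/(i^2h^2B^3G^2))$, using $\mathbb{E}[1/N_i^2]\lesssim m^2/(i^2B^2G^2)$.
\end{itemize}

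\textbf{Step 2: the cross term.} The cross term $2\,\mathbb{E}\langle \widehat g_{\text{oracle}}-\nabla J,\Delta_i\rangle$ is handled similarly. For matching indices $(b,g)$, Assumption~\ref{assump:score} together with the independence of $\widehat V^{(g)}$ from the $g$th sample makes the product $\mathbb{E}[(Z-V)\|S\|^2 \mid X]\cdot\mathbb{E}[V-\widehat V]$ vanish. This is precisely where the uncorrelatedness condition is used. For $g\neq g'$ within the same prompt, the only surviving piece comes from $\widehat V^{(g)}$ depending linearly on $Z^{(b,g')}$ with weight $\approx 1/(N_ih)$. This gives a contribution of order $\mathbb{E}[1/(N_i h)]/(BG)$ times bounded quantities, which is again absorbed into $O(m/(ihB^2G^2))$. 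Cauchy–Schwarz is available as a fallback if exact cancellation is messy.

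\textbf{Main obstacle.} I expect the main difficulty to be the bookkeeping of the within-prompt dependence created by the shared current-group rewards in the leave-one-out estimators, combined with the randomness of $N_i(x)$ and of the normalizer $M_i$. In particular, obtaining the exact powers of $m$, $B$ and $G$ in the last two terms requires careful control of inverse moments of a binomial-type count. I would first prove the bound conditionally on $N_i$, and then integrate using $\mathbb{E}[N_i^{-p}]\lesssim (iBG/m)^{-p}$ for $p=1,2$.
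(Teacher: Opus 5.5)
Your outline is the paper's proof. It uses the same split into $\mathrm{MSE}(\widehat g_{\text{oracle}})$, a cross term, and $\mathbb{E}\|\widehat g_{\text{KAE}}-\widehat g_{\text{oracle}}\|^2$. It uses leave-one-out independence plus Assumption~\ref{assump:score} for the matched-index cross terms, and Theorem~\ref{thm:value} with an inverse moment of $n_i(X)\sim\mathrm{Bin}(i,B/m)$ for the diagonal quadratic terms. The paper uses the exact identity in Lemma~\ref{lem:binomial-bound} rather than Chernoff; either works. It then uses the $K(0)/M_i$ coupling for the within-prompt off-diagonal terms.

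Where you go wrong is in identifying what survives within a prompt. Write $\varepsilon^{(g)}:=\widehat V_i^{(g)}(X^{(b)})-V^{\pi_{\theta_i}}(X^{(b)})$. The organizing fact is this: given $X^{(b)}$, the history and $\{Y^{(b,k)}\}_{k\neq g}$, the score $S^{(b,g)}$ still has mean zero. Every other factor that can occur depends on $Y^{(b,g)}$ only through $Z^{(b,g)}$. Hence any product in which $S^{(b,g)}$ appears exactly once and $Z^{(b,g)}$ does not appear has expectation zero.

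In Step~2, the $g\neq g'$ terms $\mathbb{E}[\varepsilon^{(g)}(Z^{(b,g')}-V)\,S^{(b,g)\top}S^{(b,g')}]$ are of exactly this form. They vanish identically; the dependence of $\widehat V_i^{(g)}$ on $Z^{(b,g')}$ is irrelevant. So the whole cross term is zero, as in the paper. Had these terms survived, your count would also be off by a factor $G$: $BG(G-1)$ pairs over $(BG)^2$ give $B^{-1}\mathbb{E}[K(0)/M_i]\asymp m/(ihB^2G)$. The Cauchy--Schwarz fallback only gives about $\sqrt{B^{-1}}\cdot\sqrt{h^2/(BG)}=h/(B\sqrt G)$. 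That is of larger order than $h^2/(BG)$, so it cannot deliver the stated expansion.

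In Step~1, the surviving $g\neq g'$ term is not ``weight $\times$ bias''. Write $\varepsilon^{(g)}=\frac{K(0)}{M_i}Z^{(b,g')}+R$ and $\varepsilon^{(g')}=\frac{K(0)}{M_i}Z^{(b,g)}+R$, with the \emph{same} remainder $R$. This $R$ is free of $Y^{(b,g)}$ and $Y^{(b,g')}$ and carries all of the bias. Every term containing $R$ leaves a lone score and vanishes. The only survivor is $\frac{K(0)^2}{M_i^2}\|\mathbb{E}[S^{(b,g)}Z^{(b,g)}\mid X^{(b)}]\|^2$: weight squared, with no bias or squared-bias contribution.

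Sum $BG(G-1)$ such terms with the $(BG)^{-2}$ normalization and use $\mathbb{E}[M_i^{-2}]\lesssim m^2/(i^2h^2B^2G^2)$. This gives exactly the $O(m^2/(i^2h^2B^3G^2))$ term, which is the paper's $I_2$. A genuine weight-times-bias term would instead produce pieces such as $m/(iB^2G)$ and $m/(i^2h^2B^2G)$, which are not in the statement. With these two corrections your argument coincides with the paper's.
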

Theorem~\ref{thm:grad} upper bounds the difference in MSE between KAE and the oracle gradient estimator. In the resource-constrained setting, the MSE of the oracle gradient estimator is of order $O(B^{-1})$ \citep[Proposition 3]{zhou2026demystifying}. The first three extra error terms on the right-hand-side are proportional to the MSE of KAE's value estimator, each scaled by a factor of order $O(B^{-1})$. The last term is higher order and decays to zero more quickly as $ih$ or $B$ approaches infinity. 
This formally verifies that improved value estimation directly leads to improved gradient estimation.

Next, we consider an asymptotic regime in which the number of training iterations $i \to \infty$, while $G$ and $m$ are held fixed. The following oracle property is immediate.

\begin{corollary}[Oracle property of gradient estimator]\label{coro:grad}
    Under the same conditions in Theorem \ref{thm:grad}, if the kernel bandwidth satisfies $h\to 0$ and $i h\to \infty$ as $i\to\infty$, then the MSE of KAE's gradient estimator is asymptotically equivalent to that of the oracle  estimator. Moreover, it is smaller than MSEs of the GRPO and REINFORCE++ gradient estimators. 
\end{corollary}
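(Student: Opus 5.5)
The argument has two parts: the oracle equivalence, which follows from Theorem~\ref{thm:grad}, and the comparison with GRPO and REINFORCE++, which comes from a bias--variance decomposition of a generic baseline gradient estimator.

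\textbf{Oracle equivalence.} With $G$ and $m$ fixed, the four extra terms in Theorem~\ref{thm:grad} are, up to constants,
\[
\frac{h^2}{B},\qquad \frac{1}{(ih)^2 B},\qquad \frac{1}{ih\,B^2},\qquad \frac{1}{(ih)^2 B^3}.
\]
Under $h\to 0$ and $ih\to\infty$, each of them is $o(B^{-1})$. The first two terms equal $B^{-1}\times o(1)$ directly. The last two also equal $B^{-1}\times o(1)$, since $B\ge 1$.

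It remains to show $\text{MSE}(\widehat g_{\text{oracle}}(\theta_i))\asymp B^{-1}$, so that this remainder is negligible relative to the oracle MSE. The upper bound is the cited result of \citet{zhou2026demystifying}. For the lower bound, the oracle estimator is an average of $BG$ terms $\{Z-V^{\pi_{\theta_i}}(X)\}\nabla_\theta\log\pi_{\theta_i}$ that are conditionally independent given the prompts. Its variance is therefore $(BG)^{-1}$ times the per-sample variance, plus a between-prompt component of order $B^{-1}$. I will assume the per-sample variance is bounded away from zero; this is a nondegeneracy condition, since otherwise the oracle MSE vanishes and the equivalence is trivial. The ratio of the two MSEs then tends to one.

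\textbf{Comparison with baselines.} Write a generic estimator as
\[
\widehat g_{C}=(BG)^{-1}\sum_{b,g}\{Z^{(b,g)}-C^{(b,g)}\}\,S^{(b,g)},\qquad S^{(b,g)}=\nabla_\theta\log\pi_{\theta_i}(Y^{(b,g)}|X^{(b)}).
\]
Expanding $Z-C=(Z-V)+(V-C)$ gives
\[
\text{MSE}(\widehat g_C)=\text{MSE}(\widehat g_{\text{oracle}})+\mathbb{E}\Big\|(BG)^{-1}\sum_{b,g}(V-C^{(b,g)})S^{(b,g)}\Big\|^2+\text{cross term}.
\]
\begin{itemize}
\item For \textbf{GRPO}, the leave-one-out baseline $C^{(b,g)}$ is independent of $Y^{(b,g)}$ given $X^{(b)}$, so the cross term vanishes because the conditional mean of the score is zero. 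The diagonal part of the second term equals
\[
(BG)^{-1}\,\mathbb{E}\big[(V-\bar Z_{-g})^2\,\mathbb{E}(\|S\|^2\mid X)\big]\asymp \frac{\mathrm{Var}(Z\mid X)}{(G-1)BG}.
\]
This is a nonvanishing constant times $B^{-1}$ as $i\to\infty$, because $G$ is fixed (the inconsistency in Corollary~\ref{coro:value}). The off-diagonal terms share the same $\bar Z$ but still involve products of zero-mean scores, and should only help at order $O((BG)^{-1}\cdot G^{-1})$ with a nonnegative sign; I need to verify this.
\item For \textbf{REINFORCE++}, $V(X)-\bar Z$ contains the between-prompt spread of $V^{\pi_{\theta_i}}(X)$, which does not vanish. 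The cross term arising from $\bar Z$ depending on $Z^{(b)}$ is $O(B^{-2})$. Assumption~\ref{assump:score} then lets the excess factor into $\mathbb{E}(\|S\|^2\mid X)$ times $\mathbb{E}[(V-C)^2\mid X]$, which is strictly positive at order $B^{-1}$.
\end{itemize}
Meanwhile, KAE's excess is $o(B^{-1})$ by the first part, so KAE's MSE is eventually strictly smaller than both.

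\textbf{Main obstacle.} I expect the hardest step to be making the baseline comparison rigorous at order exactly $B^{-1}$. Specifically, I must show that the GRPO excess is bounded below by $c/B$ with $c>0$. That requires controlling the off-diagonal terms, which arise because all $G$ completions of a prompt share rewards through the leave-one-out means. It also requires using Assumption~\ref{assump:score} to factor $\mathbb{E}[(V-C)^2\|S\|^2\mid X]$ into a product of conditional expectations. I will first compute the GRPO excess exactly for a single prompt, where the leave-one-out structure makes all cross products explicit, and then sum over the conditionally independent prompts.
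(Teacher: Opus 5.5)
\textbf{Overall route.} Your argument follows the paper's proof: Theorem~\ref{thm:grad}, plus a lower bound on the oracle MSE, then a decomposition of a generic-baseline estimator into oracle MSE plus an excess driven by $\mathbb{E}(\|S\|^2\mid X)(\widetilde V-V)^2$, which the inconsistency in Corollary~\ref{coro:value} keeps away from zero. For the lower bound, the paper imports $\Omega(B^{-1})$ from \citet{zhou2026demystifying}; you impose it as a nondegeneracy condition. Your GRPO analysis is more careful than the paper's per-sample argument, and the obstacle you flag resolves in your favour. For $g\neq g'$ in the same group, the only surviving term in $\mathbb{E}[(V-\bar Z_{-g})(V-\bar Z_{-g'})S^{(g)\top}S^{(g')}\mid X]$ is $(G-1)^{-2}\|\mathbb{E}[(Z-V)S\mid X]\|^2=(G-1)^{-2}\|\nabla_\theta V^{\pi_{\theta_i}}(X)\|^2\ge 0$. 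Cross-prompt terms vanish, and so does the cross term with $\widehat g_{\text{oracle}}-g(\theta_i)$. So the GRPO excess is exactly $\{BG(G-1)\}^{-1}\mathbb{E}[\mathrm{Var}(Z\mid X)\,\mathbb{E}(\|S\|^2\mid X)+\|\nabla_\theta V^{\pi_{\theta_i}}(X)\|^2]$, which is bounded below under your nondegeneracy condition.

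\textbf{The REINFORCE++ step fails as written.} The problem is the self-inclusive average $\bar Z$. With $G$ and $m$ fixed, $B\le m$ is bounded. So ``cross term $O(B^{-2})$ versus spread term of order $B^{-1}$'' compares constants; it is not an asymptotic separation. Worse, the $O(B^{-2})$ piece is negative. Since $Z^{(b,g)}-\bar Z=(1-(BG)^{-1})(Z^{(b,g)}-\bar Z_{-(b,g)})$, the estimator is a shrunken copy of its unbiased leave-one-out version. This shrinkage lowers the MSE by about $2(BG)^{-1}$ times that version's MSE, and it wins when the between-prompt spread of $V^{\pi_{\theta_i}}$ is small.

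For a concrete case take $m=B=1$, binary rewards and Assumption~\ref{assump:score}. Write $\sigma^2=\mathrm{Var}(Z\mid X)$, $s^2=\mathbb{E}(\|S\|^2\mid X)$ and $\gamma=\nabla_\theta V^{\pi_{\theta_i}}(X)$. A direct computation gives $\mathrm{MSE}(\widehat g_{\text{RF++}})-\mathrm{MSE}(\widehat g_{\text{oracle}})=-\sigma^2s^2/G^2+(4G-2)\|\gamma\|^2/G^3$. This is negative whenever $\|\gamma\|^2<G\sigma^2s^2/(4G-2)$, which is the typical high-dimensional situation. In that case REINFORCE++ beats the oracle, and hence eventually beats KAE, so your strict comparison cannot hold in general.

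\textbf{How to fix it.} One option is to compare against the leave-one-out REINFORCE++ baseline. This is what the paper effectively does, by restricting to baselines independent of the current response, and it is what the experiments use. The cross term is then exactly zero: Assumption~\ref{assump:score} kills the diagonal piece, which is needed because this baseline is biased. The other option is to add a quantitative condition that the spread term dominates roughly $\mathrm{Var}(Z\mid X)/(BG)$.

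\textbf{A minor point.} Your aside that in the degenerate case ``the equivalence is trivial'' is not right in the ratio sense. KAE's history-induced bias need not be small relative to a vanishing oracle MSE. The lower bound is a genuine added assumption, as it is in the paper.
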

The first part of Corollary \ref{coro:grad} establishes the asymptotic equivalence between KAE's gradient estimator and the oracle estimator. The second part highlights KAE's advantage over both baseline algorithms in gradient evaluation. 

Finally, we analyze the suboptimality gap of KAE's learned policy. For any policy $\pi$, define its suboptimality gap as $\Delta(\pi)=\sup_{\theta\in \Theta} \mathbb{E}^{\pi_{\theta}} (Z)-\mathbb{E}^{\pi} (Z)$. By definition, a smaller $\Delta(\pi)$ corresponds to a larger expected return, and hence a better policy. We impose the following assumptions for policy optimization. 
\begin{assumption}[Polyak-Lojasiewicz (PL) condition]\label{assump:PL}
There exists some constant $\mu>(2\beta)^{-1}$ such that $\|\nabla_{\theta} J(\theta)\|^2\ge 2\mu \Delta(\pi_{\theta})$ for any $\theta\in \Theta$. 
\end{assumption}

Assumption~\ref{assump:PL} is  frequently imposed for establishing convergence guarantees in nonconcave optimization, including in the analysis of deep learning models \citep{karimi2016linear}. It allows the objective function $J(\theta)$ to be nonconcave. To illustrate, suppose $\theta$ can be decomposed into  $(\theta_1^\top,\theta_2^\top)^\top$, and $J(\theta)$ depends only on $\theta_1$. In this case, $J(\theta)$ is not strictly concave, as its Hessian with respect to $\theta_2$ is zero. Nevertheless, the inequality in Assumption~\ref{assump:PL} can still hold, provided that $J(\theta)$ is concave in $\theta_1$. 
\begin{theorem}[Suboptimality gap of the policy]\label{thm:policy}
    Suppose Assumptions \ref{assump:iidsampledprompt} -- \ref{assump:PL} hold. Then for any $1\leq n_0 \le n$ and for $\varepsilon = 2\mu\beta-1$, we have
    {
    \setlength{\abovedisplayskip}{8pt}
    \setlength{\belowdisplayskip}{8pt}
    \setlength{\abovedisplayshortskip}{3pt}
    \setlength{\belowdisplayshortskip}{3pt}
    \begin{equation}\label{eqn:n0}
        \mathbb{E}[\Delta(\pi_{\theta_n})]\le \frac{c_1}{n^{1+\varepsilon}} +c_2 n_0\left(\frac{n_0}{n}\right)^{1+\varepsilon} + \frac{c_3}{n} \sup_{k\ge n_0}\text{MSE}(\widehat{g}_{\text{KAE}}(\theta_k)), 
    \end{equation}}
    where $c_1, c_2, c_3$ are positive constants depending only on $\beta$, $\mu$ and a smoothness constant.
\end{theorem}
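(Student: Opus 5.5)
The proof follows the standard SGD analysis under a PL condition with step size $\eta_k=\beta/k$. The twist is that the gradient estimator $\widehat g_{\text{KAE}}$ is biased, because its baseline $\widehat V$ depends on past rewards that are correlated with the current iterate. So I would not split the error into noise plus bias. Instead, I control the full error through $\text{MSE}(\widehat g_{\text{KAE}}(\theta_k))=\mathbb{E}\|\widehat g_{\text{KAE}}(\theta_k)-\nabla J(\theta_k)\|^2$.

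\textbf{Step 1: one-step recursion.} Let $\ell$ be the Lipschitz constant of $\nabla J$ from Assumption~\ref{assump:psmooth}, and write $\delta_k=\mathbb{E}[\Delta(\pi_{\theta_k})]$. Since $J$ is $\ell$-smooth, the ascent lemma gives
\[
J(\theta_{k+1})\ge J(\theta_k)+\eta_k\langle\nabla J(\theta_k),\widehat g_k\rangle-\tfrac{\ell}{2}\eta_k^2\|\widehat g_k\|^2 .
\]
Write $\widehat g_k=\nabla J(\theta_k)+e_k$. I would use Young's inequality in the form $\langle\nabla J,e_k\rangle\ge-\tfrac12\|\nabla J\|^2-\tfrac12\|e_k\|^2$, and bound $\|\widehat g_k\|^2\le 2\|\nabla J\|^2+2\|e_k\|^2$. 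This version of Young's inequality loses half of the contraction. To keep the condition $2\mu\beta>1$ sharp, I would instead use a weighted Young inequality $\langle\nabla J,e_k\rangle\ge -\tfrac{\epsilon'}{2}\|\nabla J\|^2-\tfrac{1}{2\epsilon'}\|e_k\|^2$ with small $\epsilon'$. The resulting loss in the exponent is absorbed into the constants, or handled by a standard slight reparametrization.

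Boundedness (Assumption~\ref{assump:boundedreward}) keeps $\|\nabla J\|^2$ bounded. The PL condition (Assumption~\ref{assump:PL}) then converts $\|\nabla J(\theta_k)\|^2$ into at least $2\mu\,\Delta(\pi_{\theta_k})$. Taking expectations gives
\[
\delta_{k+1}\le\Big(1-\tfrac{2\mu\beta(1-\epsilon')}{k}\Big)\delta_k+\tfrac{C}{k}\,\text{MSE}(\widehat g_k)+\tfrac{C'}{k^2}.
\]
This uses $\eta_k^2\ell\|\widehat g_k\|^2\le C'/k^2$, which follows from boundedness.

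\textbf{Step 2: unroll.} Unroll the recursion from $n_0$ to $n$, using $\prod_{j=k+1}^{n}(1-a/j)\le (k/n)^{a}$ with $a=1+\varepsilon$. For $k\ge n_0$, the MSE term contributes
\[
\sup_{k\ge n_0}\text{MSE}(\widehat g_k)\cdot\sum_{k}\tfrac{C}{k}\big(\tfrac{k}{n}\big)^{1+\varepsilon}\le \tfrac{c_3}{n}\sup_{k\ge n_0}\text{MSE}(\widehat g_k),
\]
since $\sum_{k\le n}k^{\varepsilon}\asymp n^{1+\varepsilon}/(1+\varepsilon)$. The $C'/k^2$ terms sum to $O(1/n)$, which can be absorbed into the $c_3/n$ term or, more carefully, counted as part of the MSE, since $\text{MSE}\ge$ the variance, which is of constant order. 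The iterations $k<n_0$ are handled crudely. Each $\delta_k$ and each per-step increment is bounded by a constant, so they contribute at most $\sum_{k<n_0}\tfrac{C}{k}(k/n)^{1+\varepsilon}\le c_2 n_0(n_0/n)^{1+\varepsilon}$. The initial term $\delta_1\prod_{j\ge1}(1-a/j)$ contributes $c_1 n^{-(1+\varepsilon)}$. Early indices where $a/j>1$ need separate care, for example by bounding $\delta_k$ trivially there and absorbing them into $c_1$.

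\textbf{Main obstacle.} The delicate point is the cross term $\mathbb{E}\langle\nabla J(\theta_k),e_k\rangle$. Because KAE's baseline uses historical rewards that are correlated with $\theta_k$, $e_k$ is not conditionally mean-zero. So I cannot drop the cross term, and I must pay for it through Young's inequality. This is what forces the constant in front of the MSE term and makes the exponent condition $\mu>(2\beta)^{-1}$ necessary. I would first try the weighted Young inequality and check that the contraction factor still yields the stated exponent $1+\varepsilon$. If it does not, I would fall back to stating the bound with a slightly smaller exponent, absorbed into the constants.
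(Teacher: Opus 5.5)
There is a genuine gap, and it comes from the premise of your proof: $\widehat g_{\text{KAE}}$ is \emph{not} conditionally biased.

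Let $\mathcal F_k$ be generated by $\theta_k$ and all randomness before the current minibatch. The historical part of $\widehat V_k^{(g)}(X)$ is $\mathcal F_k$-measurable. The current part uses only $Z^{(b,j)}$, $j\neq g$, which are independent of $Y^{(b,g)}$ given $(X,\mathcal F_k)$. Since $\mathbb E[\nabla_\theta\log\pi_{\theta_k}(Y^{(b,g)}\mid X)\mid X,\mathcal F_k]=0$, the baseline has zero conditional mean contribution. Because prompts are drawn independently of $\mathcal F_k$, this gives $\mathbb E[\widehat g_{\text{KAE}}(\theta_k)\mid\mathcal F_k]=\nabla J(\theta_k)$.

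The fact that past rewards are correlated with $\theta_k$ is irrelevant, since both are fixed once you condition on $\mathcal F_k$. This is exactly what the leave-one-out construction is for, and the paper's proof rests on it. The cross term equals $\eta_k\|\nabla J(\theta_k)\|^2$ exactly in conditional expectation, and $\mathbb E[\|\widehat g_k\|^2\mid\mathcal F_k]=\|\nabla J(\theta_k)\|^2+\mathbb E[\|e_k\|^2\mid\mathcal F_k]$. After PL this gives
\[
\delta_{k+1}\le\Big(1-\frac{2\mu\beta}{k}+\frac{\mu\ell\beta^2}{k^2}\Big)\delta_k+\frac{\ell\beta^2}{2k^2}\,\text{MSE}(\widehat g_{\text{KAE}}(\theta_k)).
\]
Here the MSE is weighted by $\eta_k^2$, not $\eta_k$. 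There is also no stray $C'/k^2$ term, because the $\tfrac{\ell}{2}\eta_k^2\|\nabla J(\theta_k)\|^2$ piece is absorbed into the contraction factor instead of being bounded crudely.

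Paying for the cross term with Young's inequality does more than lose constants; it destroys the $1/n$ factor. It puts the MSE into the recursion with weight $\eta_k\asymp 1/k$, and then your Step 2 bound fails:
\[
\sum_{k=n_0}^{n}\frac{1}{k}\Big(\frac{k}{n}\Big)^{1+\varepsilon}=n^{-(1+\varepsilon)}\sum_{k=n_0}^{n}k^{\varepsilon}\asymp\frac{1}{1+\varepsilon}\quad(n_0=o(n)).
\]
Your own estimate $\sum_{k\le n}k^{\varepsilon}\asymp n^{1+\varepsilon}/(1+\varepsilon)$ gives exactly this constant, not $c_3/n$. So your route yields at best $\mathbb E[\Delta(\pi_{\theta_n})]\lesssim\sup_{k\ge n_0}\text{MSE}(\widehat g_{\text{KAE}}(\theta_k))+o(1)$. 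That is the non-vanishing floor typical of biased SGD. On top of it you also get the degraded exponent $2\mu\beta(1-\epsilon')$ and an extra $O(1/n)$ term that is not proportional to the MSE.

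With the correct $1/k^2$ weight, $n^{-2\mu\beta}\sum_{k=n_0}^{n}k^{2\mu\beta-2}\lesssim n^{-1}/(2\mu\beta-1)$. This summation is where $\mu>(2\beta)^{-1}$ is genuinely needed; it has nothing to do with compensating a Young's-inequality loss.

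The fix is to prove the conditional unbiasedness above and run the recursion as displayed. Unroll it with the product bound $\prod_{j=k+1}^{n}(1-2\mu\beta/j+\mu\ell\beta^2/j^2)\lesssim((k+1)/n)^{2\mu\beta}$. Then split the sum at $n_0$, bounding the MSE by a uniform constant for $k<n_0$.
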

Theorem \ref{thm:policy} upper bounds the suboptimality gap of KAE's learned policy. 
It shows that the MSE of KAE’s gradient estimator directly affects the suboptimality upper bound. This reinforces our motivation to improve value estimation, which in turn improves gradient evaluation and ultimately policy optimization. The following oracle property then establishes the asymptotic equivalence of the suboptimality upper bounds between KAE and the oracle algorithm, and shows that KAE achieves a suboptimality upper bound no larger than those of the two baseline algorithms. 
% shows that KAE achieves a no larger suboptimality upper bound than the two baseline algorithms. 
\begin{corollary}[Oracle property of the policy]\label{coro:policy}
    Suppose the assumptions in Theorem \ref{thm:policy} hold. If the bandwidth sequence $\{h_i\}_i$ indexed by the training step $i$, satisfies $h_i\to 0$ and $i h_i\to \infty$ as $i\to \infty$, then the suboptimality upper bound of KAE in \eqref{eqn:n0}, with $n_0$ growing to infinity with $n$, is asymptotically equivalent to that achieved by the oracle gradient estimator. Moreover, it is no larger than those achieved by GRPO and REINFORCE++.
\end{corollary}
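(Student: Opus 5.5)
The plan is to plug the bandwidth condition into the bound of Theorem~\ref{thm:policy} and reduce everything to Corollary~\ref{coro:grad}. With $h_i\to 0$ and $ih_i\to\infty$, and with $G$, $m$, $B$ fixed, each of the four extra terms in Theorem~\ref{thm:grad} tends to zero as $i\to\infty$:
\[
\frac{h_i^2}{BG},\qquad \frac{1}{i^2h_i^2BG},\qquad \frac{m}{ih_iB^2G^2},\qquad \frac{m^2}{i^2h_i^2B^3G^2}.
\]
Hence $\text{MSE}(\widehat g_{\text{KAE}}(\theta_k))=\text{MSE}(\widehat g_{\text{oracle}}(\theta_k))+\epsilon_k$ with $\epsilon_k\to 0$. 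This step needs the $O(\cdot)$ constants in Theorem~\ref{thm:grad} to be uniform in $k$ and in the trajectory $\theta_k$. That uniformity should follow from Assumption~\ref{assump:boundedreward} and the Lipschitz constants being global, and I would make it explicit when quoting the theorem.

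Next I take $n_0=n_0(n)\to\infty$ with $n_0=o(n^{\varepsilon/(1+\varepsilon)})$, so that $n_0(n_0/n)^{1+\varepsilon}=o(n^{-1})$; this is possible because $\varepsilon=2\mu\beta-1>0$ by Assumption~\ref{assump:PL}. The first term $c_1n^{-1-\varepsilon}$ is also $o(n^{-1})$. The bound in \eqref{eqn:n0} therefore becomes
\[
\frac{c_3}{n}\sup_{k\ge n_0}\text{MSE}(\widehat g_{\text{KAE}}(\theta_k))+o(n^{-1})
=\frac{c_3}{n}\Big[\sup_{k\ge n_0}\text{MSE}(\widehat g_{\text{oracle}}(\theta_k))+\sup_{k\ge n_0}\epsilon_k\Big]+o(n^{-1}).
\]
Since $\sup_{k\ge n_0}\epsilon_k\to 0$, this matches the oracle bound up to $o(n^{-1})$. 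The oracle bound itself is obtained by running the same argument as Theorem~\ref{thm:policy}, which uses only the second-moment bound on the gradient noise.

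For the bounds to be \emph{asymptotically equivalent} rather than merely differ by $o(n^{-1})$, I need the oracle term to be of exact order $n^{-1}$. That requires $\liminf_k \text{MSE}(\widehat g_{\text{oracle}}(\theta_k))>0$. This holds in the resource-constrained regime because the oracle MSE is of order $B^{-1}$ and is bounded below by the irreducible conditional variance of $(Z-V)\nabla\log\pi$. I would assume a nondegeneracy of this variance, or else read the equivalence as equality of leading terms.

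For the comparison with GRPO and REINFORCE++, I apply the same Theorem~\ref{thm:policy}-type bound to those algorithms, since its proof only uses unbiasedness (or a bounded bias) and the MSE of the gradient. Corollary~\ref{coro:grad} then gives asymptotically $\text{MSE}(\widehat g_{\text{KAE}})\le \text{MSE}(\widehat g_{\text{GRPO}})$ and $\le \text{MSE}(\widehat g_{\text{REINFORCE++}})$. The reason is that, under Assumption~\ref{assump:score}, the true value function is the variance-optimal baseline, and their baselines carry a nonvanishing excess error. Monotonicity of the bound in the supremum then yields the claim.

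I expect the main obstacle to be the uniformity of the $\epsilon_k\to 0$ statement along random trajectories, together with handling the fact that GRPO's leave-one-out baseline is only approximately unbiased. For the latter, I would note that a leave-one-out baseline is independent of its own completion and is therefore exactly unbiased, so the same theorem applies.
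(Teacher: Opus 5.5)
Your argument is essentially the paper's. You insert Theorem~\ref{thm:grad} with $h_k\to 0$ and $kh_k\to\infty$ into the only algorithm-dependent term of \eqref{eqn:n0}. You then get the comparison with GRPO and REINFORCE++ from Corollary~\ref{coro:grad} and the monotonicity of the bound in the gradient MSE.

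There is one computational slip. Since $n\cdot n_0(n_0/n)^{1+\varepsilon}=n_0^{2+\varepsilon}n^{-\varepsilon}$, making the second term $o(n^{-1})$ requires $n_0=o(n^{\varepsilon/(2+\varepsilon)})$, not $o(n^{\varepsilon/(1+\varepsilon)})$.

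More importantly, no rate restriction on $n_0$ is needed, and imposing one proves the claim only for slowly growing $n_0$.
\begin{itemize}
\item The first two terms of \eqref{eqn:n0} are the same for KAE and for the oracle, with a common bound on the MSE used in $c_2$.
\item So the two bounds differ by at most $c_3n^{-1}\sup_{k\ge n_0}|\epsilon_k|=o(n^{-1})$ for every $n_0\to\infty$. Use $|\sup_k(a_k+\epsilon_k)-\sup_k a_k|\le\sup_k|\epsilon_k|$ here in place of the equality you wrote.
\item Your nondegeneracy condition keeps the oracle bound of order at least $n^{-1}$, so the ratio of the two bounds tends to one.
\end{itemize}
This is in effect what the paper does. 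It simply observes that the first two terms are algorithm-independent. For nondegeneracy it relies on the $\Omega(B^{-1})$ lower bound on the oracle MSE cited in the proof of Corollary~\ref{coro:grad}, rather than stating it as an assumption.

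Finally, your parenthetical ``or a bounded bias'' is not covered by Theorem~\ref{thm:policy}, whose proof uses exact conditional unbiasedness $\mathbb{E}[\widehat g(\theta_n)\mid\mathcal F_n]=g(\theta_n)$. For REINFORCE++ you should therefore use its leave-one-out form, as in the paper's experiments. That form is unbiased for the same reason GRPO's baseline is.
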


\section{Experiments}\label{sec:experiment}
In this section, we conduct extensive experiments to evaluate the effectiveness of KAE against GRPO- and REINFORCE-type algorithms, as well as its robustness to the choice of bandwidth parameter and kernel function. We begin with a summary of our findings. Consistent with our theories, KAE is evaluated along the following three dimensions:
\begin{enumerate}[leftmargin=*]
    \item \textbf{Value estimation} (Section \ref{sec:exp-baseline-mse}): Across three training steps and two benchmark datasets, KAE achieves a {\bf 60}\%--{\bf 70}\% reduction in the MSE of the value estimator compared to GRPO, and an over {\bf 90}\% reduction compared to REINFORCE++ (Table \ref{tab:baseline_mse_cross}). Moreover, its MSE remains robust to the choice of bandwidth and kernel function (Figure~\ref{fig:mse_kernel_sensitivity}).  
    \item \textbf{Gradient evaluation} (Section \ref{sec:exp-gradvar}): The improvements in value estimation translate directly into more accurate gradient estimation. Specifically, across the same experimental settings, KAE achieves a {\bf 5}\%--{\bf 9}\% reduction in the MSE of the gradient estimator compared to GRPO, and a {\bf 32}\%--{\bf 65}\% reduction compared to REINFORCE++ (Table~\ref{tab:gradvar_stage_summary}).
    
    \item \textbf{Policy optimization} (Section \ref{sec:large-scale}): The gains in value and gradient estimation further translate into improved policy optimization. Compared to GRPO, Dr.~GRPO and GPG, KAE achieves the best performance in most scenarios, with an average improvement of {\bf 5\%} on MATH (Table~\ref{tab:qwen7b_math35}), and {\bf 11.8\%} on DAPO (Table \ref{tab:qwen7b_dapo17k}). In some cases, the improvement reaches up to {\bf 79.9\%}.
    It also stabilizes training and achieves improvements of up to {\bf 15\%} over REINFORCE-type algorithms (Figure~\ref{fig:qwen15b_n1_val_curves}). Finally, ablation studies confirm that these gains stem from improved value function estimation (Figures~\ref{fig:qwen15b_n1_val_curves} and \ref{fig:ablation_math_curves}).  
\end{enumerate}
\textbf{Implementation details}. Before detailing these results, we first describe the following technique employed in our implementation. Our Algorithm~\ref{alg1} is general and allows for any prompt sampling schedule in Line 3. To simplify our theoretical analysis, in Section \ref{sec:theory}, we assume that minibatches of prompts are sampled i.i.d. across training iterations (Assumption \ref{assump:iidsampledprompt}). Under this schedule, however, rewards associated with the same prompt may be too far apart in time to provide useful information for smoothing. To enable KAE to borrow information more effectively across training steps, we adopt the following sampling schedule: we partition all prompts into several minibatches, sample one minibatch without replacement at each training iteration, and then reuse the same minibatch for $J$ consecutive training steps. Additionally, we consider three benchmark reasoning datasets: GSM8K, MATH, and DAPO. The parameter $J$ is set to 10 for GSM8K and MATH, and to 8 for DAPO. In our main experiments, we use the triangular kernel $K(x)=2\max(1-x,0)$. We also consider the exponential kernel $K(x)=C\rho^{|x|}$ for some $0<\rho<1$ and $C>0$ when evaluating the sensitivity of KAE. Finally, we use Nadaraya-Watson estimators to normalize the value estimator to improve the performance when the sample size is small.

\subsection{Value estimation}
\label{sec:exp-baseline-mse}
\textbf{Experimental setup}. 
We first compare KAE’s value estimator with GRPO’s group mean estimator and REINFORCE++’s across-prompt reward average. For this evaluation, we consider two benchmark datasets: GSM8K and MATH. Following the  literature, we post-train a smaller Qwen2.5-1.5B-Instruct model on the simpler GSM8K dataset and a larger Qwen2.5-Math-7B model on the more challenging MATH dataset. For each dataset–model combination, we first apply the proposed prompt sampling scheme to fix the set of prompts used throughout training. We then post-train the model using KAE to obtain a sequence of parameter estimates $\{\theta_i\}_i$ across training steps. We aim to estimate the value function $V^{\pi_{\theta_i}}$ at three training steps: $i=10$, 90, and 170. 

\textbf{Evaluation}. Since the prompts are fixed, the MSE of the value estimators is computed with respect to the variability from the sampled completions. We approximate the ground-truth value function via Monte Carlo (MC) sampling by generating a large number of completions at each selected training step, and treat the resulting MC estimates as ground truth.

The MSEs of the GRPO and REINFORCE++ value estimators are straightforward to evaluate. At each selected training step, we generate $G$ completions per prompt and use these samples to compute the corresponding value estimators. We then repeat this procedure multiple times to estimate their MSEs. To ensure a fair comparison, we apply the same leave-one-out procedure to both GRPO and REINFORCE++ when estimating the value.
Additionally, although REINFORCE++ is designed to use only a single completion per prompt, we adapt the algorithm by using all $G$ completions to compute its average reward so that all methods utilize the same amount of data for value estimation.

However, evaluating the MSE of KAE's value estimator is more challenging, since the estimator's variability arises not only from sampling completions at the current training step, but also from those at previous steps. Ideally, this would require to sample past completions conditional on the current parameter $\theta_i$, which is intractable. To address this, we note that due to our use of a triangular kernel, the value estimator depends only on a finite number of previous steps (specifically, 4 steps) due to truncation. We therefore approximate the distribution of past completions using $\pi_{\theta_{i-1}}$, $\cdots$, $\pi_{\theta_{i-4}}$. This approximation is reasonable given the small learning rate ($10^{-6}$) used in our implementation, under which the model parameters change only minimally over a few training steps.

\begin{table}[t]
\centering
\caption{\small MSE ($\times 10^{-3}$; lower is better) of value estimators computed by KAE, GRPO and REINFORCE++ across three training steps and two datasets. For each training step, the MSE is first evaluated at the prompt level and then aggregated across prompts. The last two rows report the percentage reduction in MSE achieved by KAE compared to REINFORCE++ and GRPO.}
\label{tab:baseline_mse_cross}
\small
\setlength{\tabcolsep}{5pt}
\renewcommand{\arraystretch}{0.85}
\begin{tabular}{lcccccc}
\toprule
& \multicolumn{3}{c}{MATH (Qwen2.5-Math-7B)} & \multicolumn{3}{c}{GSM8K (Qwen2.5-1.5B)} \\
\cmidrule(lr){2-4}\cmidrule(lr){5-7}
Algorithm & Step 10 & Step 90 & Step 170 & Step 10 & Step 90 & Step 170 \\
\midrule
REINFORCE++ & $78.8$ & $89.2$ & $71.9$ & $73.2$ & $30.7$ & $30.9$ \\
GRPO & $19.4$ & $8.62$ & $5.07$ & $15.5$ & $5.65$ & $2.50$ \\
KAE & $\best{5.80}$ & $\best{2.67}$ & $\best{1.45}$ & $\best{4.13}$ & $\best{1.77}$ & $\best{0.91}$ \\
\midrule
Reduction vs. REINFORCE++ & $92.6\%$ & $97.0\%$ & $98.0\%$ & $94.4\%$ & $94.2\%$ & $97.1\%$ \\ 
Reduction vs. GRPO & $70.1\%$ & $69.0\%$ & $71.4\%$ & $73.3\%$ & $68.6\%$ & $63.6\%$ \\
\bottomrule
\end{tabular}
\end{table}

\textbf{Results}. Table~\ref{tab:baseline_mse_cross} reports the MSEs. We make two observations. First, KAE consistently achieves the smallest MSE in all cases, reducing GRPO’s MSE by {\bf 60}\%–{\bf 70}\% and REINFORCE++’s by over {\bf 90}\%. This highlights the effectiveness of leveraging historical observations to improve value function estimation. Moreover, these reductions are consistent across datasets and training steps, indicating that these gains are not specific to particular datasets or training steps. Second, the MSE of all value estimators decreases noticeably on GSM8K at Steps 90 and 170. This suggests that the model's response has become more certain at these steps, resulting in substantially smaller variance compared to Step 10.

\textbf{Sensitivity analysis}. We further conduct a sensitivity analysis on the MATH dataset to investigate the MSE of the proposed value estimator under different choices of bandwidth and kernel function. 
Figure~\ref{fig:mse_kernel_sensitivity} visualizes the MSEs at the three training steps. 
The results show that, for both kernel functions, triangular and exponential, and across all training steps and a wide range of bandwidth values, the MSE of KAE remains much smaller than that of GRPO and substantially smaller than that of REINFORCE++. This demonstrates the robustness of KAE with respect to both kernel bandwidth and kernel function.

\begin{figure}[t]
  \centering
  \includegraphics[width=\linewidth]{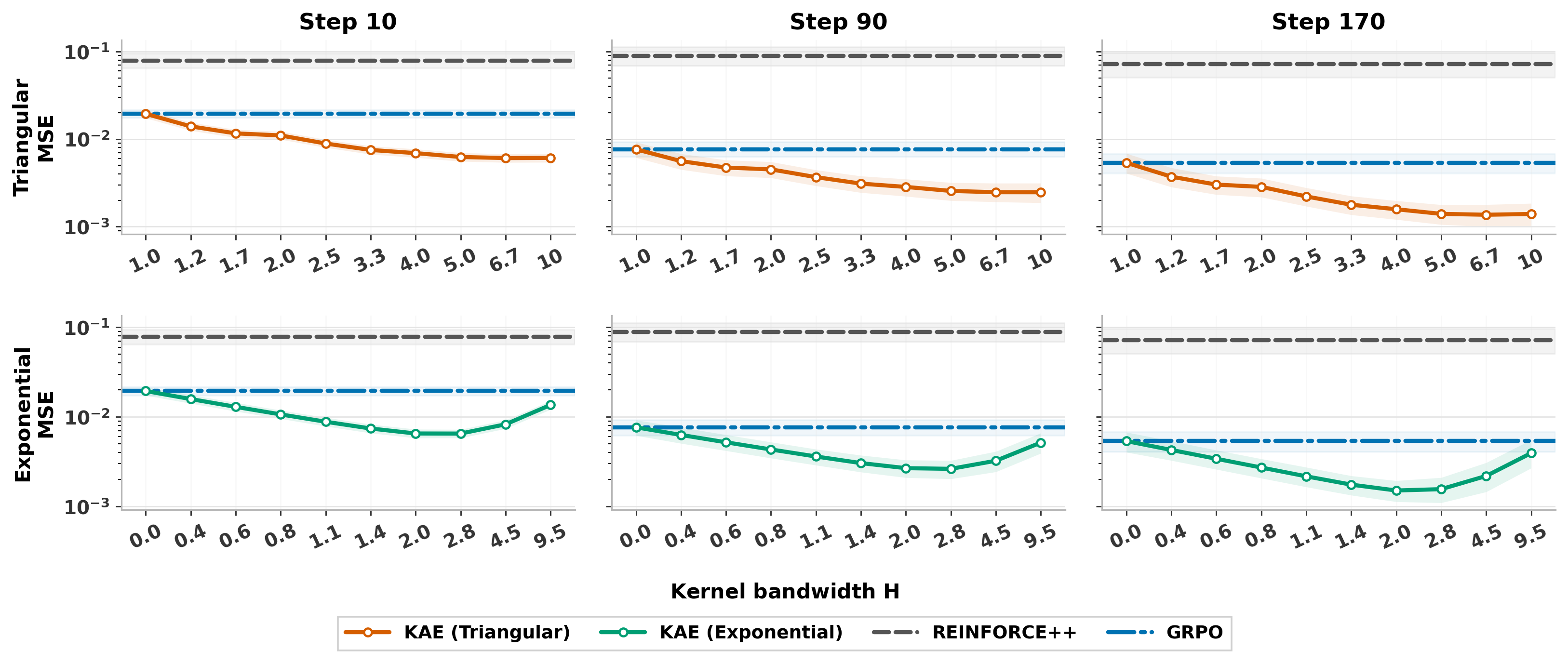}
  \caption{\small MSE of KAE’s value estimator on the MATH dataset across three training steps under varying kernel bandwidths. The left and right panels visualize the MSEs under the triangular and exponential kernels, respectively. Horizontal lines denote the MSEs of REINFORCE++ and GRPO, which are independent of bandwidth and kernel function.}
  \label{fig:mse_kernel_sensitivity}
\end{figure}

\subsection{Gradient evaluation}\label{sec:exp-gradvar}
Using the same datasets, MATH and GSM8K, we next compare KAE’s gradient estimators at the three training steps (10, 90, and 170) with those computed by GRPO and REINFORCE++. We similarly apply MC to evaluate the true gradients at these training steps. We then plug in each algorithm’s estimated value function to construct the corresponding advantage function and gradient estimator, and repeat this procedure multiple times to evaluate the gradient's MSE. For the MATH dataset, the model contains 7B parameters, making it memory intensive to store the full gradient vector. We therefore restrict attention to the subvector corresponding to the parameters in the final decoder block and the output layer when computing the MSE.

Table~\ref{tab:gradvar_stage_summary} summarizes the results. KAE again achieves the smallest MSE in all cases, with reductions ranging from {\bf 32.6}\%--{\bf 65.4}\% compared to REINFORCE++ and {\bf 5.4}\%--{\bf 8.5}\% compared to GRPO. The gains are consistent across datasets and training steps, highlighting the effectiveness of KAE for gradient evaluation.

\begin{table}[t]
\centering
\small
\caption{\small MSE ($\times 10^4$) of gradient estimators computed by KAE, GRPO and REINFORCE++ across three training steps and two datasets. The remaining details are the same as in Table~\ref{tab:baseline_mse_cross}.}
\label{tab:gradvar_stage_summary}
\setlength{\tabcolsep}{5pt}
\renewcommand{\arraystretch}{0.85}
\begin{tabular}{lcccccc}
\toprule
& \multicolumn{3}{c}{GSM8K (Qwen2.5-1.5B)} & \multicolumn{3}{c}{MATH (Qwen2.5-Math-7B)} \\
\cmidrule(lr){2-4}\cmidrule(lr){5-7}
Method & Step 10 & Step 90 & Step 170 & Step 10 & Step 90 & Step 170 \\
\midrule
REINFORCE++ & $11.98$ & $10.61$ & $15.60$ & $14.75$ & $3.88$ & $3.41$ \\
GRPO        & $7.62$  & $6.63$  & $6.31$  & $10.77$ & $1.59$ & $1.29$ \\
KAE         & $\best{7.00}$ & $\best{6.15}$ & $\best{5.97}$ & $\best{9.94}$ & $\best{1.46}$ & $\best{1.18}$ \\ 
\midrule
Reduction vs. REINFORCE++ & $41.57\%$ & $42.04\%$ & $61.73\%$ & $32.61\%$ & $62.37\%$ & $65.40\%$ \\
Reduction vs. GRPO        & $8.14\%$  & $7.24\%$  & $5.39\%$  & $7.71\%$  & $8.18\%$  & $8.53\%$ \\
\bottomrule
\end{tabular}
\end{table}

\subsection{Policy optimization}
\label{sec:large-scale}
\textbf{Experimental setup}. Finally, we demonstrate that more accurate value and gradient estimation indeed improve policy learning. To reflect our focus on the resource-constrained setting, we consider three group sizes $G\in \{1, 4, 8\}$. We evaluate different algorithms using three benchmark training datasets (GSM8K, MATH, and DAPO) by post-training three base models: Qwen2.5-1.5B-Instruct, Qwen2.5-Math-1.5B and Qwen2.5-Math-7B. 

More specifically, we consider two settings: single-stream and multi-stream. In the single-stream setting, we set $G=1$ and compare KAE against REINFORCE, since GRPO-type algorithms are not applicable when only a single completion is sampled per prompt. We consider two datasets in this setting, GSM8K and MATH. Since GSM8K is a simpler benchmark, we post-train the generic Qwen2.5-1.5B-Instruct model on this dataset. For MATH, we post-train the more task-specific Qwen2.5-Math-1.5B model. 
In the multi-stream setting, we choose $G\in \{4,8\}$ and compare KAE against GRPO-type algorithms, including the vanilla GRPO, Dr.~GRPO, and GPG. We again consider two  training datasets, DAPO and MATH, and post-train the larger Qwen2.5-Math-7B model on both datasets.

\textbf{Multi-stream setting}. Following standard practice, after training on DAPO and MATH, we  evaluate the post-trained models on a set of reasoning benchmarks covering both in-distribution and out-of-distribution datasets: AIME24, AIME25, AMC, MATH, Minerva, and Olympiad. For each benchmark, we report the accuracy measure (the percentage of questions answered correctly) for the post-trained model, together with the average accuracy across all benchmarks. The sampling temperature is fixed at 0.6 during evaluation.

\begin{table}[t]
\centering
\small
\caption{\small Accuracy of the Qwen2.5-Math-7B model post-trained on MATH (levels 3--5, 5k+). Results are reported at temperature $0.6$ after 200 training steps with group size $G=8$. The highest accuracy in each column is shown in \textbf{bold}, and the second-highest is \underline{underlined}.}
\label{tab:qwen7b_math35}
\setlength{\tabcolsep}{8pt}
\renewcommand{\arraystretch}{1}
\begin{tabular}{lccccccc}
\toprule
Method & AIME24 & AIME25 & AMC & MATH & Minerva & Olympiad & Avg \\
\midrule
Dr.~GRPO      & 0.2719 & 0.0875 & 0.5715 & 0.7580 & \underline{0.3309} & 0.3748 & 0.3991 \\
GPG         & \textbf{0.3052} & \underline{0.0896} & 0.5776 & \underline{0.7660} & 0.3125 & 0.3689 & \underline{0.4033} \\
GRPO        & 0.2844 & 0.0802 & \textbf{0.5858} & 0.7540 & 0.3162 & \underline{0.3852} & 0.4010 \\
KAE         & \underline{0.2969} & \textbf{0.0938} & \underline{0.5806} & \textbf{0.7720} & \textbf{0.3493} & \textbf{0.3970} & \textbf{0.4149} \\
\bottomrule
\end{tabular}
\end{table}

\begin{table}[t]
\centering
\small
\caption{\small Accuracy of the Qwen2.5-Math-7B model post-trained on DAPO (17k). Results are reported at temperature $0.6$ after 500 training steps with group size $G=4$. The highest accuracy in each column is shown in \textbf{bold}, and the second-highest is \underline{underlined}.}
\label{tab:qwen7b_dapo17k}
\setlength{\tabcolsep}{8pt}
\renewcommand{\arraystretch}{1}
\begin{tabular}{lccccccc}
\toprule
Method & AIME24 & AIME25 & AMC & MATH & Minerva & Olympiad & Avg \\
\midrule
GRPO      & \textbf{0.2833} & 0.1188 & 0.6261 & 0.7960 & \underline{0.3309} & 0.4178 & 0.4288 \\
Dr.~GRPO    & 0.2375 & 0.0990 & \underline{0.6408} & 0.7920 & 0.2757 & 0.4104 & 0.4092 \\
GPG       & \underline{0.2823} & \underline{0.1396} & 0.6205 & \textbf{0.8140} & \underline{0.3309} & \underline{0.4222} & \underline{0.4349} \\
KAE       & 0.2698 & \textbf{0.1781} & \textbf{0.6453} & \underline{0.8100} & \textbf{0.3456} & \textbf{0.4237} & \textbf{0.4454} \\
\bottomrule
\end{tabular}
\end{table}

Tables~\ref{tab:qwen7b_math35} and \ref{tab:qwen7b_dapo17k} report the accuracy results for the Qwen2.5-Math-7B model post-trained on MATH and DAPO, respectively. Across both training datasets, KAE achieves the highest accuracy in \textbf{5} out of \textbf{7} reasoning benchmarks, including the final average accuracy. On the remaining benchmarks, it typically attains the second-highest accuracy. Specifically, when trained on MATH, KAE improves over GRPO-type baselines by \textbf{0.8}\% to \textbf{17.0}\%, while when trained on DAPO, the improvement ranges from \textbf{0.4}\% to \textbf{79.9}\%. 
More importantly, these gains are observed across multiple benchmark datasets and multiple GRPO-type baselines, demonstrating the consistent improvements KAE delivers in policy optimization. 

\textbf{Single-stream setting}. Since GSM8K is a simpler dataset, the model's generated reasoning trajectories are much shorter than those for MATH, making training on GSM8K considerably less computationally demanding. We therefore repeat our experiments on GSM8K five times to improve the reproducibility of our results, whereas existing work typically reports only a single run. In contrast, since training on MATH is substantially more expensive, we report results from a single run in that setting.

\begin{figure}[t]
  \centering
  \includegraphics[width=0.8\linewidth]{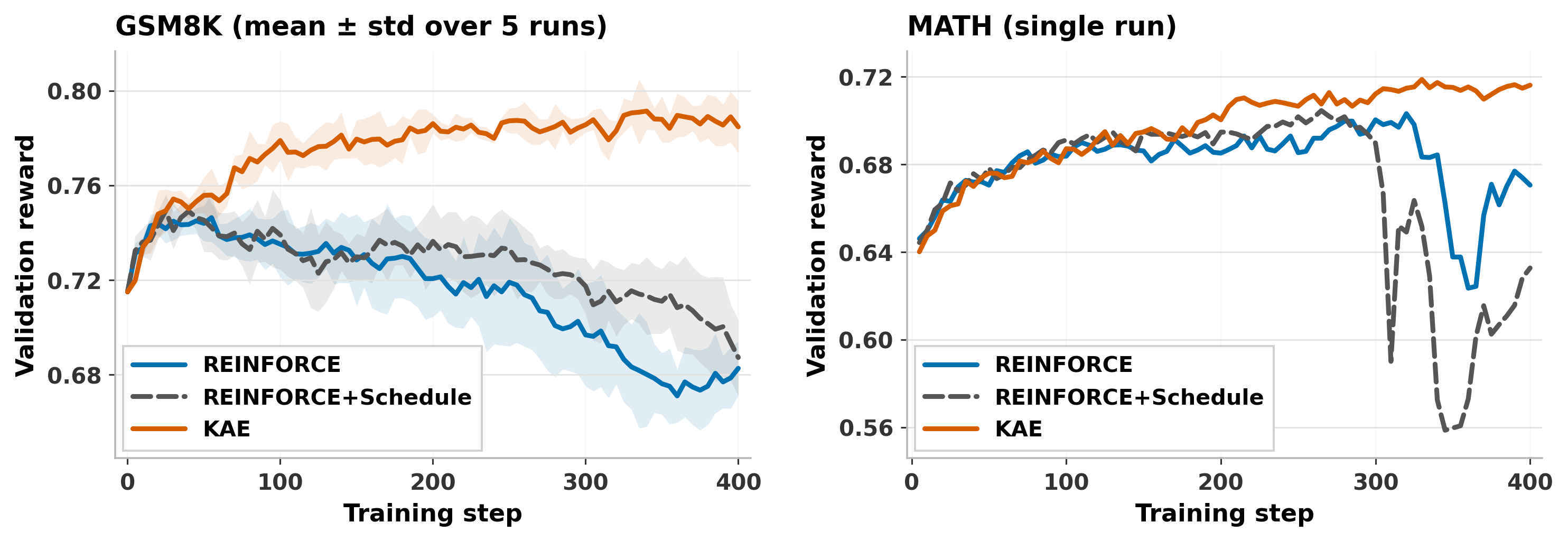}
  \caption{\small Test accuracy of models post-trained with standard REINFORCE (blue), KAE (red), and a REINFORCE variant using the proposed prompt sampling scheme, on GSM8K (left) and MATH (right) across different training steps. Shaded areas represent the standard error of the accuracy curves, aggregated over five training replications.}
  \label{fig:qwen15b_n1_val_curves}
\end{figure}

Figure~\ref{fig:qwen15b_n1_val_curves} visualizes the test accuracy of models post-trained with REINFORCE and the proposed KAE on both datasets. For GSM8K, where training is repeated five times, we additionally report the standard error estimated across these replications. It can be seen from Figure~\ref{fig:qwen15b_n1_val_curves} that, on both datasets, the accuracy of REINFORCE begins to decline after reaching its peak. We conjecture that this is due to the high variance of its gradient estimates, which makes training unstable. By contrast, the accuracy of KAE increases steadily throughout training. At the final training iteration, KAE improves over REINFORCE by {\bf 14.9}\% on GSM8K and {\bf 6.6}\% on MATH. These results show that more accurate value and gradient estimation indeed stabilize policy learning.

\textbf{Ablation study}. Our implemented KAE differs from GRPO- and REINFORCE-type algorithms in two ways: (i) it adopts a prompt sampling schedule that reuses each minibatch of prompts for multiple consecutive training steps, and (ii) it exploits historical rewards to improve value and advantage estimation. Our results above show that, when combined, these two components enable KAE to achieve better policy performance than the baseline methods. To isolate the contribution of the second component, we conduct an ablation study comparing KAE against variants of GRPO- and REINFORCE-type algorithms that adopt the proposed prompt sampling schedule, denoted by GRPO/REINFORCE + schedule. In this comparison, the primary difference is the value and advantage function estimation.

Figure~\ref{fig:qwen15b_n1_val_curves} visualizes the test accuracy of models trained by this variant of REINFORCE using GSM8K and MATH. It can be seen that the proposed prompt sampling schedule alone does not help much to stabilize training. The test accuracy still drops after reaching its peak across both datasets. Figure~\ref{fig:ablation_math_curves} compares the test accuracy of the schedule-matched GRPO variant with that of KAE on MATH. After roughly the first 30 training steps, KAE consistently achieves higher test accuracy than the GRPO variant. Taken together, these results indicate that more accurate value and advantage estimation translates into meaningful gains in downstream policy optimization.

\section{Conclusion}\label{sec:diss}
We introduce KAE, a kernel smoothing method for value and advantage function estimation in LLM reasoning, which can be seamlessly integrated into the subsequent policy optimization. Theoretically, we establish the oracle property of KAE and its superiority over GRPO-type algorithms in resource-constrained settings across three aspects: value estimation (Corollary \ref{coro:value}), gradient evaluation (Corollary \ref{coro:grad}) and policy learning (Corollary \ref{coro:policy}). Our empirical experiments further reinforce these theoretical findings: KAE achieves competitive performance against both GRPO- and REINFORCE-type algorithms (Tables \ref{tab:baseline_mse_cross} -- \ref{tab:qwen7b_dapo17k}, Figure~\ref{fig:qwen15b_n1_val_curves}), attains the oracle property in one-shot regimes (Figure \ref{fig:oneshot}), remains robust to the kernel bandwidth and kernel function (Figure \ref{fig:mse_kernel_sensitivity}), and its gains indeed come from the use of nonparametric methods for value and advantage estimation  (Figures~\ref{fig:qwen15b_n1_val_curves} and \ref{fig:ablation_math_curves}).

\bibliographystyle{plainnat}
\bibliography{bibliography}

@inproceedings{mnih2016asynchronous,
  title={Asynchronous methods for deep reinforcement learning},
  author={Mnih, Volodymyr and Badia, Adria Puigdomenech and Mirza, Mehdi and Graves, Alex and Lillicrap, Timothy and Harley, Tim and Silver, David and Kavukcuoglu, Koray},
  booktitle={International Conference on Machine Learning},
  pages={1928--1937},
  year={2016},
  organization={PMLR}
}

@inproceedings{robins2004optimal,
  title={Optimal structural nested models for optimal sequential decisions},
  author={Robins, James M},
  booktitle={Proceedings of the Second Seattle Symposium in Biostatistics: analysis of correlated data},
  pages={189--326},
  year={2004},
  organization={Springer}
}

@article{murphy2003optimal,
  title={Optimal dynamic treatment regimes},
  author={Murphy, Susan A},
  journal={Journal of the Royal Statistical Society Series B: Statistical Methodology},
  volume={65},
  number={2},
  pages={331--355},
  year={2003},
  publisher={Oxford University Press}
}

@article{xia2026statistical,
  title={A Statistical Framework for Alignment with Biased AI Feedback},
  author={Xia, Xintao and Xia, Zhiqiu and Zhang, Linjun and Cai, Zhanrui},
  journal={arXiv preprint arXiv:2602.08259},
  year={2026}
}

@article{liu2026reinforcement,
  title={Reinforcement Learning from Human Feedback: A Statistical Perspective},
  author={Liu, Pangpang and Shi, Chengchun and Sun, Will Wei},
  journal={arXiv preprint arXiv:2604.02507},
  year={2026}
}

@article{greensmith2004variance,
  title={Variance reduction techniques for gradient estimates in reinforcement learning},
  author={Greensmith, Evan and Bartlett, Peter L and Baxter, Jonathan},
  journal={Journal of Machine Learning Research},
  volume={5},
  number={Nov},
  pages={1471--1530},
  year={2004}
}

@article{williams1992simple,
  title={Simple statistical gradient-following algorithms for connectionist reinforcement learning},
  author={Williams, Ronald J},
  journal={Machine learning},
  volume={8},
  number={3},
  pages={229--256},
  year={1992},
  publisher={Springer}
}

@article{lai1985asymptotically,
  title={Asymptotically efficient adaptive allocation rules},
  author={Lai, Tze Leung and Robbins, Herbert},
  journal={Advances in applied mathematics},
  volume={6},
  number={1},
  pages={4--22},
  year={1985},
  publisher={Academic Press, Inc. Orlando, FL, USA}
}

@article{wei2022chain,
  title={Chain-of-thought prompting elicits reasoning in large language models},
  author={Wei, Jason and Wang, Xuezhi and Schuurmans, Dale and Bosma, Maarten and Xia, Fei and Chi, Ed and Le, Quoc V and Zhou, Denny and others},
  journal={Advances in neural information processing systems},
  volume={35},
  pages={24824--24837},
  year={2022}
}

@article{chu2025gpg,
  title={Gpg: A simple and strong reinforcement learning baseline for model reasoning},
  author={Chu, Xiangxiang and Huang, Hailang and Zhang, Xiao and Wei, Fei and Wang, Yong},
  journal={arXiv preprint arXiv:2504.02546},
  year={2025}
}

@article{robbins1951stochastic,
  title={A stochastic approximation method},
  author={Robbins, Herbert and Monro, Sutton},
  journal={The annals of mathematical statistics},
  pages={400--407},
  year={1951},
  publisher={JSTOR}
}

@article{ge2025reviewcausaldecisionmaking,
  title={A review of causal decision making},
  author={Ge, Lin and Cai, Hengrui and Wan, Runzhe and Xu, Yang and Song, Rui},
  journal={arXiv preprint arXiv:2502.16156},
  year={2025}
}

@book{tsiatis2019dynamic,
  title={Dynamic treatment regimes: Statistical methods for precision medicine},
  author={Tsiatis, Anastasios A and Davidian, Marie and Holloway, Shannon T and Laber, Eric B},
  year={2019},
  publisher={Chapman and Hall/CRC}
}

@article{kosorok2019precision,
  title={Precision medicine},
  author={Kosorok, Michael R and Laber, Eric B},
  journal={Annual review of statistics and its application},
  volume={6},
  number={1},
  pages={263--286},
  year={2019},
  publisher={Annual Reviews}
}

@article{chakraborty2013statistical,
  title={Statistical methods for dynamic treatment regimes},
  author={Chakraborty, Bibhas and Moodie, Erica E},
  journal={Springer-Verlag. doi},
  volume={10},
  number={978-1},
  pages={4--1},
  year={2013},
  publisher={Springer}
}

@article{stone1982optimal,
  title={Optimal global rates of convergence for nonparametric regression},
  author={Stone, Charles J},
  journal={The annals of statistics},
  pages={1040--1053},
  year={1982},
  publisher={JSTOR}
}

@article{huang1998projection,
  title={Projection estimation in multiple regression with application to functional ANOVA models},
  author={Huang, Jianhua Z},
  journal={The annals of statistics},
  volume={26},
  number={1},
  pages={242--272},
  year={1998},
  publisher={Institute of Mathematical Statistics}
}

@article{jaech2024openai,
  title={Openai o1 system card},
  author={Jaech, Aaron and Kalai, Adam and Lerer, Adam and Richardson, Adam and El-Kishky, Ahmed and Low, Aiden and Helyar, Alec and Madry, Aleksander and Beutel, Alex and Carney, Alex and others},
  journal={arXiv preprint arXiv:2412.16720},
  year={2024}
}

@inproceedings{james1961estimation,
  title={Estimation with quadratic loss},
  author={James, William and Stein, Charles and others},
  booktitle={Proceedings of the fourth Berkeley symposium on mathematical statistics and probability},
  volume={1},
  pages={361--379},
  year={1961},
  organization={University of California Press}
}

@article{shao2024deepseekmath,
  title={Deepseekmath: Pushing the limits of mathematical reasoning in open language models},
  author={Shao, Zhihong and Wang, Peiyi and Zhu, Qihao and Xu, Runxin and Song, Junxiao and Bi, Xiao and Zhang, Haowei and Zhang, Mingchuan and Li, YK and Wu, Yang and others},
  journal={arXiv preprint arXiv:2402.03300},
  year={2024}
}

@article{hu2025reinforce,
  title={Reinforce++: Stabilizing critic-free policy optimization with global advantage normalization},
  author={Hu, Jian and Liu, Jason Klein and Xu, Haotian and Shen, Wei},
  journal={arXiv preprint arXiv:2501.03262},
  year={2025}
}

@article{xiong2025a,
  title={A minimalist approach to llm reasoning: from rejection sampling to reinforce},
  author={Xiong, Wei and Yao, Jiarui and Xu, Yuhui and Pang, Bo and Wang, Lei and Sahoo, Doyen and Li, Junnan and Jiang, Nan and Zhang, Tong and Xiong, Caiming and others},
  journal={arXiv preprint arXiv:2504.11343},
  year={2025}
}

@inproceedings{
yan2025learning,
title={Learning to Reason under Off-Policy Guidance},
author={Jianhao Yan and Yafu Li and Zican Hu and Zhi Wang and Ganqu Cui and Xiaoye Qu and Yu Cheng and Yue Zhang},
booktitle={The Thirty-ninth Annual Conference on Neural Information Processing Systems},
year={2026},
}

@article{chai2025deep,
  title={Deep Transfer $Q$-Learning for Offline Non-Stationary Reinforcement Learning},
  author={Chai, Jinhang and Chen, Elynn and Fan, Jianqing},
  journal={arXiv preprint arXiv:2501.04870},
  year={2025}
}

@article{zhang2016central,
  title={CENTRAL LIMIT THEOREMS OF A RECURSIVE STOCHASTIC ALGORITHM WITH APPLICATIONS TO ADAPTIVE DESIGNS},
  author={Zhang, Li-Xin},
  journal={The Annals of Applied Probability},
  pages={3630--3658},
  year={2016},
  publisher={JSTOR}
}

@inproceedings{karimi2016linear,
  title={Linear convergence of gradient and proximal-gradient methods under the polyak-{\l}ojasiewicz condition},
  author={Karimi, Hamed and Nutini, Julie and Schmidt, Mark},
  booktitle={Joint European conference on machine learning and knowledge discovery in databases},
  pages={795--811},
  year={2016},
  organization={Springer}
}

@article{huang2026learning,
  title={On the Learning Dynamics of RLVR at the Edge of Competence},
  author={Huang, Yu and Wen, Zixin and Chi, Yuejie and Wei, Yuting and Singh, Aarti and Liang, Yingbin and Chen, Yuxin},
  journal={arXiv preprint arXiv:2602.14872},
  year={2026}
}

@article{miao2025reinforcement,
  title={Reinforcement learning for individual optimal policy from heterogeneous data},
  author={Miao, Rui and Shahbaba, Babak and Qu, Annie},
  journal={Annals of statistics},
  volume={53},
  number={4},
  pages={1513},
  year={2025}
}

@inproceedings{zhang2026dagmath,
title={{DAG}-Math: Graph-of-Thought Guided Mathematical Reasoning in {LLM}s},
author={Yuanhe Zhang and Ilja Kuzborskij and Jason D. Lee and Chenlei Leng and Fanghui Liu},
booktitle={The Fourteenth International Conference on Learning Representations},
year={2026}
}

@article{wang2025reinforcement,
  title={Reinforcement learning for reasoning in large language models with one training example},
  author={Wang, Yiping and Yang, Qing and Zeng, Zhiyuan and Ren, Liliang and Liu, Liyuan and Peng, Baolin and Cheng, Hao and He, Xuehai and Wang, Kuan and Gao, Jianfeng and others},
  journal={arXiv preprint arXiv:2504.20571},
  year={2025}
}

@inproceedings{
li2026disco,
title={Dis{CO}: Reinforcing Large Reasoning Models with Discriminative Constrained Optimization},
author={Gang Li and Ming Lin and Tomer Galanti and Zhengzhong Tu and Tianbao Yang},
booktitle={The Thirty-ninth Annual Conference on Neural Information Processing Systems},
year={2026},
}

@article{zheng2025group,
  title={Group sequence policy optimization},
  author={Zheng, Chujie and Liu, Shixuan and Li, Mingze and Chen, Xiong-Hui and Yu, Bowen and Gao, Chang and Dang, Kai and Liu, Yuqiong and Men, Rui and Yang, An and others},
  journal={arXiv preprint arXiv:2507.18071},
  year={2025}
}

@article{zhao2025geometric,
  title={Geometric-mean policy optimization},
  author={Zhao, Yuzhong and Liu, Yue and Liu, Junpeng and Chen, Jingye and Wu, Xun and Hao, Yaru and Lv, Tengchao and Huang, Shaohan and Cui, Lei and Ye, Qixiang and others},
  journal={arXiv preprint arXiv:2507.20673},
  year={2025}
}

@article{xu2025single,
  title={Single-stream policy optimization},
  author={Xu, Zhongwen and Ding, Zihan},
  journal={arXiv preprint arXiv:2509.13232},
  year={2025}
}

@article{zeng2025shrinking,
  title={Shrinking the Variance: Shrinkage Baselines for Reinforcement Learning with Verifiable Rewards},
  author={Zeng, Guanning and Zhou, Zhaoyi and Arora, Daman and Zanette, Andrea},
  journal={arXiv preprint arXiv:2511.03710},
  year={2025},
}

@article{hao2025on,
  title={On-policy rl with optimal reward baseline},
  author={Hao, Yaru and Dong, Li and Wu, Xun and Huang, Shaohan and Chi, Zewen and Wei, Furu},
  journal={arXiv preprint arXiv:2505.23585},
  year={2025}
}

@article{liu2025fin,
  title={Fin-r1: A large language model for financial reasoning through reinforcement learning},
  author={Liu, Zhaowei and Guo, Xin and Yang, Zhi and Lou, Fangqi and Zeng, Lingfeng and Niu, Jinyi and Li, Mengping and Qi, Qi and Liu, Zhiqiang and Han, Yiyang and others},
  journal={arXiv preprint arXiv:2503.16252},
  year={2025}
}

@article{guo2025deepseek,
  title={DeepSeek-R1 incentivizes reasoning in LLMs through reinforcement learning},
  author={Guo, Daya and Yang, Dejian and Zhang, Haowei and Song, Junxiao and Wang, Peiyi and Zhu, Qihao and Xu, Runxin and Zhang, Ruoyu and Ma, Shirong and Bi, Xiao and others},
  journal={Nature},
  volume={645},
  number={8081},
  pages={633--638},
  year={2025},
  publisher={Nature Publishing Group UK London}
}

@article{lambert2024tulu,
  title={Tulu 3: Pushing frontiers in open language model post-training},
  author={Lambert, Nathan and Morrison, Jacob and Pyatkin, Valentina and Huang, Shengyi and Ivison, Hamish and Brahman, Faeze and Miranda, Lester James V and Liu, Alisa and Dziri, Nouha and Lyu, Shane and others},
  journal={arXiv preprint arXiv:2411.15124},
  year={2024}
}

@article{li2025repo,
  title={Repo: Replay-enhanced policy optimization},
  author={Li, Siheng and Zhou, Zhanhui and Lam, Wai and Yang, Chao and Lu, Chaochao},
  journal={arXiv preprint arXiv:2506.09340},
  year={2025}
}

@article{lin2025cppo,
  title={Cppo: Accelerating the training of group relative policy optimization-based reasoning models},
  author={Lin, Zhihang and Lin, Mingbao and Xie, Yuan and Ji, Rongrong},
  journal={arXiv preprint arXiv:2503.22342},
  year={2025}
}

@inproceedings{
liu2025understanding,
title={Understanding R1-Zero-Like Training: A Critical Perspective},
author={Zichen Liu and Changyu Chen and Wenjun Li and Penghui Qi and Tianyu Pang and Chao Du and Wee Sun Lee and Min Lin},
booktitle={Second Conference on Language Modeling},
year={2025},
}

@article{gazi2026statisticalreinforcementlearningreal,
  title={Statistical Reinforcement Learning in the Real World: A Survey of Challenges and Future Directions},
  author={Gazi, Asim H and Guo, Yongyi and Gao, Daiqi and Xu, Ziping and Zhang, Kelly W and Murphy, Susan A},
  journal={arXiv preprint arXiv:2601.15353},
  year={2026}
}

@article{ouyang2022training,
  title={Training language models to follow instructions with human feedback},
  author={Ouyang, Long and Wu, Jeffrey and Jiang, Xu and Almeida, Diogo and Wainwright, Carroll and Mishkin, Pamela and Zhang, Chong and Agarwal, Sandhini and Slama, Katarina and Ray, Alex and others},
  journal={Advances in neural information processing systems},
  volume={35},
  pages={27730--27744},
  year={2022}
}

@article{yang2025qwen3,
  title={Qwen3 technical report},
  author={Yang, An and Li, Anfeng and Yang, Baosong and Zhang, Beichen and Hui, Binyuan and Zheng, Bo and Yu, Bowen and Gao, Chang and Huang, Chengen and Lv, Chenxu and others},
  journal={arXiv preprint arXiv:2505.09388},
  year={2025}
}

@article{fan1997local,
  title={Local polynomial regression: Optimal kernels and asymptotic minimax efficiency},
  author={Fan, Jianqing and Gasser, Theo and Gijbels, Ir{\`e}ne and Brockmann, Michael and Engel, Joachim},
  journal={Annals of the Institute of Statistical Mathematics},
  volume={49},
  number={1},
  pages={79--99},
  year={1997},
  publisher={Springer}
}

@article{nadaraya1964estimating,
  title={On estimating regression},
  author={Nadaraya, Elizbar A},
  journal={Theory of Probability \& Its Applications},
  volume={9},
  number={1},
  pages={141--142},
  year={1964},
  publisher={SIAM}
}

@article{chen2007large,
  title={Large sample sieve estimation of semi-nonparametric models},
  author={Chen, Xiaohong},
  journal={Handbook of econometrics},
  volume={6},
  pages={5549--5632},
  year={2007},
  publisher={Elsevier}
}

@article{stone1977consistent,
  title={Consistent nonparametric regression},
  author={Stone, Charles J},
  journal={The annals of statistics},
  pages={595--620},
  year={1977},
  publisher={JSTOR}
}

@book{grenander1981abstract,
  title={Abstract Inference},
  author={Grenander, Ulf},
  publisher={Wiley},
  year={1981}
}

@inproceedings{kool2019buy,
  title={Buy 4 REINFORCE Samples, Get a Baseline for Free!},
  author={Wouter Kool and Herke van Hoof and Max Welling},
  booktitle={ICLR 2019 Workshop on Deep Reinforcement Learning Meets Structured Prediction},
  year={2019},
}

@article{schulman2017proximal,
  title={Proximal policy optimization algorithms},
  author={Schulman, John and Wolski, Filip and Dhariwal, Prafulla and Radford, Alec and Klimov, Oleg},
  journal={arXiv preprint arXiv:1707.06347},
  year={2017}
}

@book{sutton2018reinforcement,
  title={ Reinforcement Learning: An Introduction },
  author={ Richard S. Sutton and Andrew G. Barto },
  year={ 2018 },
  edition={ 2 },
  publisher={ The MIT Press },
}

@article{zhou2026demystifying,
  title={Demystifying group relative policy optimization: Its policy gradient is a u-statistic},
  author={Zhou, Hongyi and Ye, Kai and Xu, Erhan and Zhu, Jin and Yang, Ying and Gong, Shijin and Shi, Chengchun},
  journal={arXiv preprint arXiv:2603.01162},
  year={2026}
}

@article{ertefaie2018constructing,
  title={Constructing dynamic treatment regimes over indefinite time horizons},
  author={Ertefaie, Ashkan and Strawderman, Robert L},
  journal={Biometrika},
  volume={105},
  number={4},
  pages={963--977},
  year={2018},
  publisher={Oxford University Press}
}

@article{luckett2020estimating,
  title={Estimating Dynamic Treatment Regimes in Mobile Health Using V-learning},
  author={Luckett, Daniel J and Laber, Eric B and Kahkoska, Anna R and Maahs, David M and Mayer-Davis, Elizabeth and Kosorok, Michael R},
  journal={Journal of the American Statistical Association},
  volume={115},
  number={530},
  pages={692},
  year={2020},
  publisher={NIH Public Access}
}

@article{liao2022batch,
  title={Batch policy learning in average reward markov decision processes},
  author={Liao, Peng and Qi, Zhengling and Wan, Runzhe and Klasnja, Predrag and Murphy, Susan A},
  journal={Annals of statistics},
  volume={50},
  number={6},
  pages={3364},
  year={2022},
  publisher={NIH Public Access}
}

@article{chen2024reinforcement,
  title={Reinforcement Learning in Latent Heterogeneous Environments},
  author={Chen, Elynn Y and Song, Rui and Jordan, Michael I},
  journal={Journal of the American Statistical Association},
  volume = {119},
  number = {548},
  pages = {3113--3126},
  year = {2024},
  publisher = {Taylor \& Francis},
}

@article{liang2023relative,
  title={Relative contrast estimation and inference for treatment recommendation},
  author={Liang, Muxuan and Yu, Menggang},
  journal={Biometrics},
  volume={79},
  number={4},
  pages={2920--2932},
  year={2023},
  publisher={Wiley Online Library}
}

@article{cho2026privacy,
  title={Privacy-Preserving Reinforcement Learning from Human Feedback via Decoupled Reward Modeling},
  author={Cho, Young Hyun and Sun, Will Wei},
  journal={arXiv preprint arXiv:2603.22563},
  year={2026}
}

@article{zhu2026align,
  title={ALIGN: Aligned Delegation with Performance Guarantees for Multi-Agent LLM Reasoning},
  author={Zhu, Tong and Chen, Baiting and Zhou, Jin and Zhou, Hua and Sankararaman, Sriram and Dai, Xiaowu},
  journal={arXiv preprint arXiv:2602.00127},
  year={2026}
}

@article{liu2025online,
  title={Online estimation and inference for robust policy evaluation in reinforcement learning},
  author={Liu, Weidong and Tu, Jiyuan and Chen, Xi and Zhang, Yichen},
  journal={The Annals of Statistics},
  volume={53},
  number={5},
  pages={2128--2152},
  year={2025},
  publisher={Institute of Mathematical Statistics}
}

@article{bian2025off,
  title={Off-policy evaluation in doubly inhomogeneous environments},
  author={Bian, Zeyu and Shi, Chengchun and Qi, Zhengling and Wang, Lan},
  journal={Journal of the American Statistical Association},
  volume={120},
  number={550},
  pages={1102--1114},
  year={2025},
  publisher={Taylor \& Francis}
}

@article{wang2023projected,
  title={Projected state-action balancing weights for offline reinforcement learning},
  author={Wang, Jiayi and Qi, Zhengling and Wong, Raymond KW},
  journal={The Annals of Statistics},
  volume={51},
  number={4},
  pages={1639--1665},
  year={2023},
  publisher={Institute of Mathematical Statistics}
}

@article{laber2014dynamic,
  title={Dynamic treatment regimes: Technical challenges and applications},
  author={Laber, Eric B and Lizotte, Daniel J and Qian, Min and Pelham, William E and Murphy, Susan A},
  journal={Electronic journal of statistics},
  volume={8},
  number={1},
  pages={1225},
  year={2014}
}

@article{li2024settling,
  title={Settling the sample complexity of model-based offline reinforcement learning},
  author={Li, Gen and Shi, Laixi and Chen, Yuxin and Chi, Yuejie and Wei, Yuting},
  journal={The Annals of Statistics},
  volume={52},
  number={1},
  pages={233--260},
  year={2024},
  publisher={Institute of Mathematical Statistics}
}

@article{lu2013variable,
  title={Variable selection for optimal treatment decision},
  author={Lu, Wenbin and Zhang, Hao Helen and Zeng, Donglin},
  journal={Statistical methods in medical research},
  volume={22},
  number={5},
  pages={493--504},
  year={2013},
  publisher={SAGE Publications Sage UK: London, England}
}

@article{shi2018high,
  title={High-dimensional A-learning for optimal dynamic treatment regimes},
  author={Shi, Chengchun and Fan, Alin and Song, Rui and Lu, Wenbin},
  journal={Annals of statistics},
  volume={46},
  number={3},
  pages={925},
  year={2018},
  publisher={NIH Public Access}
}

@article{shi2024statistically,
  title={Statistically efficient advantage learning for offline reinforcement learning in infinite horizons},
  author={Shi, Chengchun and Luo, Shikai and Le, Yuan and Zhu, Hongtu and Song, Rui},
  journal={Journal of the American Statistical Association},
  volume={119},
  number={545},
  pages={232--245},
  year={2024},
  publisher={Taylor \& Francis}
}

@article{shi2024value,
  title={Value enhancement of reinforcement learning via efficient and robust trust region optimization},
  author={Shi, Chengchun and Qi, Zhengling and Wang, Jianin g and Zhou, Fan},
  journal={Journal of the American Statistical Association},
  volume={119},
  number={547},
  pages={2011--2025},
  year={2024},
  publisher={Taylor \& Francis}
}

@article{zhou2024estimating,
  title={Estimating optimal infinite horizon dynamic treatment regimes via pt-learning},
  author={Zhou, Wenzhuo and Zhu, Ruoqing and Qu, Annie},
  journal={Journal of the American Statistical Association},
  volume={119},
  number={545},
  pages={625--638},
  year={2024},
  publisher={Taylor \& Francis}
}

@article{jin2025policy,
  title={Policy learning “without” overlap: Pessimism and generalized empirical Bernstein’s inequality},
  author={Jin, Ying and Ren, Zhimei and Yang, Zhuoran and Wang, Zhaoran},
  journal={The Annals of Statistics},
  volume={53},
  number={4},
  pages={1483--1512},
  year={2025},
  publisher={Institute of Mathematical Statistics}
}

@article{li2025reinforcement,
  title     = {Reinforcement Learning with Continuous Actions Under Unmeasured Confounding},
  author    = {Li, Yuhan and Han, Eugene and Hu, Yifan and Qi, Zhengling and Cui, Yifan and Zhu, Ruoqing},
  journal   = {Journal of the American Statistical Association},
  year      = {2025},
  publisher = {Taylor \& Francis},
  volume = {To appear}
}

@article{zhong2025risksensitive,
  title     = {Risk-Sensitive Deep RL: Variance-Constrained Actor-Critic Provably Finds Globally Optimal Policy},
  author    = {Zhong, Han and Deng, Xun and Fang, Ethan X. and Yang, Zhuoran and Wang, Zhaoran and Li, Runze},
  journal   = {Journal of the American Statistical Association},
  year      = {2025},
  publisher = {Taylor \& Francis},
  volume      = {To appear}
}

@article{lee2024lowrankcontextualreinforcementlearning,
  title={Low-rank contextual reinforcement learning from heterogeneous human feedback},
  author={Lee, Seong Jin and Sun, Will Wei and Liu, Yufeng},
  journal={arXiv preprint arXiv:2412.19436},
  year={2024}
}

@article{liu2025uncertaintyquantificationlargelanguage,
  title={Uncertainty Quantification for Large Language Model Reward Learning under Heterogeneous Human Feedback},
  author={Liu, Pangpang and Lu, Junwei and Sun, Will Wei},
  journal={arXiv preprint arXiv:2512.03208},
  year={2025}
}

@article{liu2025statistical,
  title={Statistical impossibility and possibility of aligning llms with human preferences: From condorcet paradox to nash equilibrium},
  author={Liu, Kaizhao and Long, Qi and Shi, Zhekun and Su, Weijie J and Xiao, Jiancong},
  journal={arXiv preprint arXiv:2503.10990},
  year={2025}
}

@article{lu2025contextual,
  title={Contextual online uncertainty-aware preference learning for human feedback},
  author={Lu, Nan and Fang, Ethan X and Lu, Junwei},
  journal={arXiv preprint arXiv:2504.19342},
  year={2025}
}

@article{xiao2025algorithmic,
  title     = {On the Algorithmic Bias of Aligning Large Language Models with {RLHF}: Preference Collapse and Matching Regularization},
  author    = {Xiao, Jiancong and Li, Ziniu and Xie, Xingyu and Getzen, Emily and Fang, Cong and Long, Qi and Su, Weijie},
  journal   = {Journal of the American Statistical Association},
  volume    = {120},
  number    = {552},
  pages     = {2154--2164},
  year      = {2025},
  publisher = {Taylor \& Francis}
}

@article{zhou2024federated,
  title={Federated offline reinforcement learning},
  author={Zhou, Doudou and Zhang, Yufeng and Sonabend-W, Aaron and Wang, Zhaoran and Lu, Junwei and Cai, Tianxi},
  journal={Journal of the American Statistical Association},
  volume={119},
  number={548},
  pages={3152--3163},
  year={2024},
  publisher={Taylor \& Francis}
}

@article{zheng2025parallel,
  title={Parallel-r1: Towards parallel thinking via reinforcement learning},
  author={Zheng, Tong and Zhang, Hongming and Yu, Wenhao and Wang, Xiaoyang and Dai, Runpeng and Liu, Rui and Bao, Huiwen and Huang, Chengsong and Huang, Heng and Yu, Dong},
  journal={arXiv preprint arXiv:2509.07980},
  year={2025}
}

@article{dai2025cde,
  title={Cde: Curiosity-driven exploration for efficient reinforcement learning in large language models},
  author={Dai, Runpeng and Song, Linfeng and Liu, Haolin and Liang, Zhenwen and Yu, Dian and Mi, Haitao and Tu, Zhaopeng and Liu, Rui and Zheng, Tong and Zhu, Hongtu and others},
  journal={arXiv preprint arXiv:2509.09675},
  year={2025}
}

@article{li2026bicc,
  title={When right meets wrong: Bilateral context conditioning with reward-confidence correction for grpo},
  author={Li, Yu and Lan, Tian and Qi, Zhengling},
  journal={arXiv preprint arXiv:2603.13134},
  year={2026}
}

@article{wang2025krpo,
  title={Kalman filter enhanced grpo for reinforcement learning-based language model reasoning},
  author={Wang, Hu and Ma, Congbo and Reid, Ian and Yaqub, Mohammad},
  journal={arXiv preprint arXiv:2505.07527},
  year={2025}
}

@article{han2026ebpo,
  title={EBPO: Empirical Bayes Shrinkage for Stabilizing Group-Relative Policy Optimization},
  author={Han, Kevin and Zhou, Yuhang and Gao, Mingze and Zhou, Gedi and Li, Serena and Kumar, Abhishek and Fan, Xiangjun and Li, Weiwei and Zhang, Lizhu},
  journal={arXiv preprint arXiv:2602.05165},
  year={2026}
}

\clearpage
\appendix

\newcommand{\AppendixPrefix}{A}
\renewcommand{\thepage}{\AppendixPrefix-\arabic{page}}
\renewcommand{\thesection}{\AppendixPrefix.\arabic{section}}
\renewcommand{\thesubsection}{\thesection.\arabic{subsection}}
\renewcommand{\theequation}{\AppendixPrefix.\arabic{equation}}
\renewcommand{\thefigure}{\AppendixPrefix.\arabic{figure}}
\renewcommand{\thetable}{\AppendixPrefix.\arabic{table}}

\setcounter{secnumdepth}{3}
\setcounter{tocdepth}{2}
\pagenumbering{arabic}

\section*{Appendicies for ``Kernelized Advantage Estimation: From Nonparametric Statistics to LLM Reasoning''}

\section{Proofs}
In this section, we provide the proofs of Theorem \ref{thm:value}, \ref{thm:grad}, \ref{thm:policy} and Corollary \ref{coro:value}, \ref{coro:grad}, \ref{coro:policy} stated in Section \ref{sec:theory}.

\subsection{Proof of Theorem \ref{thm:value}}
According to Line 10 of Algorithm \ref{alg1}, for a given $x\in\mathcal{X}$ sampled at $i$-th iteration (i.e., $X^{(b)} =x$ for some $b \in \{1,2,\ldots, B\}$ and we denote this event by $\mathcal{A}_i(x)$), its value estimator can be written as 
\begin{equation*}
    \widehat V_i^{(g)}(x)=
             \frac{1}{M_i(x)}\Big[\sum_{(I_j,Z_{j}) \in \mathcal{H}_i(x)} K\Big(\frac{i-I_j}{ih}\Big) Z_{j}+\sum_{k\neq g} K(0) Z^{(b,k)}\Big],
\end{equation*}
where $M_i(x)$ denotes the normalizing constant $h|\mathcal{H}_i(x)|+(G-1)K(0)$. Let $n_i(x)=|\mathcal{H}_i(x)|/G$ be the number of times where the prompt $x$ is sampled at iteration $0,1,\ldots, i-1$. Then under Assumption \ref{assump:iidsampledprompt}, it follows from Lemma \ref{lem:Uniform-distribution} that conditioned on $n_i(x)$, $I_j$ follows a uniform distribution over $\{0,1,\ldots,i-1\}$. Due to the independence between the prompts sampled at the $i$th iteration and historically sampled prompts, $I_j$ is also conditionally independent of $\mathcal{A}_i(x)$ given $n_i(x)$. Therefore,
\begin{eqnarray}\label{eqn:E-numerator}
    &&\mathbb{E}\Big[\sum_{(I_j,Z_{j}) \in \mathcal{H}_i(x)} K\Big(\frac{i-I_j}{ih}\Big) Z_{j}+\sum_{k\neq g} K(0) Z^{(b,k)}\Big| n_i(x),\mathcal{A}_i(x)\Big]\nonumber\\
    &&=Gn_i(x)\mathbb{E}\left[K\Big(\frac{i-I_j}{ih}\Big)V_{I_j}(x)\Big| n_i(x) \right]+K(0)(G-1)V_i(x)\nonumber\\
    &&=Gn_i(x)\sum_{t=1}^{i}\frac{1}{i}K\Big(\frac{t}{ih}\Big)V_{i-t}(x) + K(0)(G-1)V_i(x).
\end{eqnarray}
Let $\theta(t)$ be a continuous path interpolating $\theta_0,\theta_1,\ldots,\theta_i$, in the sense that $\theta(j)=\theta_j$ for each $j=0,\ldots,i$. Assume that $\theta(t)$ is differentiable %almost everywhere, except at the interpolation points $t=1,\ldots,i$, 
and that $\|\theta'(t)\| \leq K t^{-1}$
for some constant $K>0$ under Assumption \ref{assump:learningrate}. Lemma~\ref{lem:first-order-interpolation} guarantees the existence of such a path. Then under Assumption \ref{assump:psmooth}, the value function $V_t(x):= \mathbb{E}^{\pi_{\theta_t}}(R|X=x)$ is also Lipschitz continuous with respect to $t$ so that
\begin{eqnarray}\label{eqn:Lipcon}
    \max_{x\in \mathcal{X},t_2>t_1}|V_{t_1}(x)-V_{t_2}(x)| \le c \int_{t_1}^{t_2} \frac{1}{t}dt=c\log (t_2/t_1),
\end{eqnarray}
for some constant $c>0$. 

Therefore,
\begin{eqnarray}\label{eqn:Riemann-approx}
\begin{split}
    \sum_{t=1}^{i}\frac{1}{i}K\Big(\frac{t}{ih}\Big)V_{i-t}(x) 
    =&h\cdot\frac{1}{ih}\left[\sum_{t=1}^{[ih]+1} K\left(\frac{t}{ih}\right)V_{i-ih\cdot\frac{t}{ih}}(x)\right]\\
    =&h\left[\int_0^{1} K(u)V_{i-ihu}(x)du + O(1/ih)\right]\\
    %=&\frac{1}{i}\int_0^{ih} K\left(\frac{t}{ih}\right)V_{i-ih\cdot\frac{t}{ih}}(x)dt + O(1/i)\\
    =& h\int_0^{1} K(u)V_{i-ihu}(x)du + O(1/i),
\end{split}
\end{eqnarray}
where the first equality follows from the fact that $K$ is supported on $[0,1]$, the second equality follows from Lemma \ref{lem:Riemann-sum} (by setting $f(u)= K(u)V_{i-ihu}(x)$ and noticing that under Assumptions $\ref{assump:kernelfun}$ and \ref{assump:psmooth}, $f(u)$ is continuously differentiable on $[0,1]$ so that Lemma \ref{lem:Riemann-sum} is applicable). 

Moreover, %it follows from Assumption \ref{assump:kernelfun} and \eqref{eqn:Lipcon} that
\begin{eqnarray}\label{eqn:Taylor-1}
\begin{split}
    \int_0^1 K(t)V_{i-iht}(x)dt&=\int_0^1 K(t)V_x(i)\,dt+\int_0^1 K(t)\left[V_{i-iht}(x) - V_i(x)\right]  dt\\
    &=V_i(x) + O\Big(\int_0^1 \log(1/(1-ht))\,dt\Big) \\
    &=V_i(x)+ O(h).
\end{split}
\end{eqnarray}
where the second equality follows from Assumption \ref{assump:kernelfun} and \eqref{eqn:Lipcon}, and the last equality follows from the fact that $\log(1/(1-ht))\le ht/(1-ht)=O(ht)$, as $h<1$.

By definition, $N_i(x)=G n_i(x)+G-1=|\mathcal{H}_i(x)|+G-1$ given $\mathcal{A}_i(x)$. 
Combining equations \eqref{eqn:E-numerator}, \eqref{eqn:Riemann-approx} and \eqref{eqn:Taylor-1}, we obtain
\begin{eqnarray}
    \mathbb{E}[\widehat V_i^{(g)}(x)|N_i(x),\mathcal{A}_i(x)] &= &\frac{\left[h|\mathcal{H}_i(x)|+K(0)(G-1)\right]V_i(x) + O(N_i(x)/i) +O(N_i(x)h^2)}{h|\mathcal{H}_i(x)|+K(0)(G-1)}\nonumber\\
    &=&V_i(x) + O(h) + O\left(\frac{1}{ih}\right).
\end{eqnarray}
Therefore, $\text{Bias}(\widehat V_i^{(g)}(x)) = O(h) + O\left(\frac{1}{ih}\right)$. 

Next, we calculate the variance of value estimator. Notice that conditioned on sampled prompts, the rewards $Z_j$ are independent. Therefore, conditioned on $N_i(x)$ and $\mathcal{A}_i(x)$, we have
\begin{eqnarray}\label{eqn:value-var}
    \text{Var}(\widehat V_i^{(g)}(x)) &=& \frac{1}{M_i^2(x)}\left[\text{Var}\left(\sum_{(I_j,Z_{j}) \in \mathcal{H}_i(x)} K\Big(\frac{i-I_j}{ih}\Big) Z_{j}\right)+\text{Var}\left(\sum_{k\neq g} K(0) Z^{(b,k)})\right)\right]\nonumber\\
    &=&\frac{1}{M_i^2(x)}\left[|\mathcal{H}_i(x)|\text{Var}\left(K\Big(\frac{i-I_j}{ih}\Big) Z_{j}\right) + (G-1) K^2(0)\text{Var}(Z^{(b,k)})\right].
\end{eqnarray}
Under the bounded reward assumption (Assumption \ref{assump:boundedreward}), $\text{Var}(Z^{(b,k)})<\sigma^2$ for some constant $\sigma>0$ and
\begin{eqnarray}\label{eqn:kernel-var}
    \text{Var}\left(K\Big(\frac{i-I_j}{ih}\Big) Z_{j}\right)&\lesssim& \mathbb{E}\left(K^2\Big(\frac{i-I_j}{ih}\Big) \right)\nonumber\\
    &=& \sum_{t=0}^{i-1}\frac{1}{i}K^2\Big(\frac{t}{ih}\Big)\nonumber\\
    &=&\int_0^1 K^2(u/h)\,du + O\left(\frac{1}{ih}\right)\nonumber\\
    &=&h\int_0^1 K^2(u) \,du +  O\left(\frac{1}{ih}\right)\nonumber\\
    &=&O(h)+  O\left(\frac{1}{ih}\right).
\end{eqnarray}
Here, the second equality follows from Lemma \ref{lem:Uniform-distribution} that $I_j$ follows uniform distribution over $\{0,\ldots,i-1\}$ and the third equality follows from Lemma \ref{lem:Riemann-sum} by taking $f(x) = K^2(x/h)$. Combining equation \eqref{eqn:value-var} and \eqref{eqn:kernel-var}, we obtain
\begin{eqnarray}
    \text{Var}(\widehat V_i^{(g)}(x)) &\lesssim& \frac{|\mathcal{H}_i(x)|h+(G-1)K^2(0)}{[h|\mathcal{H}_i(x)| + (G-1)K(0)]^2} = O\left(\frac{1}{hN_i(x)}\right).
\end{eqnarray}
This completes the proof of Theorem \ref{thm:value}.

\subsection{Proof of Corollary \ref{coro:value}}

\textbf{Convergence rate of KAE:} By Theorem \ref{thm:value}, we obtain
\begin{equation*}
    \text{MSE}(\widehat V_i^{(g)}(x)) = \text{Bias}^2(\widehat V_i^{(g)}(x)) + \text{Var}(\widehat V_i^{(g)}(x)) = O(h^{2}) + O\left(\frac{1}{N_i(x)h}\right).
\end{equation*}
By taking the bandwidth $h = O(N_i^{-1/3})$, $\text{MSE}(\widehat V_i^{(g)}(x))$ is of the order $O(N_i^{-2/3})$ converges to $0$ as $N_i(x)\to \infty$.

\smallskip

\noindent \textbf{Convergence rate of GRPO:} The baseline function of GRPO algorithm is given by $(G-1)^{-1}\sum_{k\neq g} Z^{(b,k)}$, which is an unbiased estimator with variance of order $O(G^{-1})$. Therefore, the value estimator of GRPO algorithm is of order $O(G^{-1})$. Under the budget constrained setting where $G$ is fixed, the MSE of GRPO type estimator does not converge to $0$. 

\smallskip

\noindent \textbf{Convergence rate Reinforce++:} The baseline function of REINFORCE++ algorithm is given by $(BG)^{-1}\sum_{b=1}^B\sum_{k=1}^G  Z^{(b,k)}$. By direct calculation, the expectation of this value estimator equals $m^{-1}\sum_{j=1}^m V^{\pi_{\theta_i}}(x_j)$ and is generally different from $V^{\pi_{\theta_i}}(x)$. Thus, its MSE does not converge to $0$.

\subsection{Proof of Theorem \ref{thm:grad}}
Let $S_i(y,x):=\nabla_\theta \log \pi_{\theta_i}(y|x)$ and $V_i(x):=V^{\pi_{\theta_i}}(x)$.
For a prompt $X$ sampled at the $i$-th iteration, 
define
\begin{equation*}
    \widehat g_{\mathrm{KAE}}(X,\theta_i):=\frac1G\sum_{g=1}^GS_i(Y^{(g)},X)\left(Z^{(g)}-\widehat V_i^{(g)}(X)\right),
\end{equation*}
and
\begin{equation*}
    \widehat g_{\mathrm{oracle}}(X,\theta_i):=\frac1G\sum_{g=1}^GS_i(Y^{(g)},X)\left(Z^{(g)}-V_i(X)\right).
\end{equation*}
Then the corresponding gradient estimators can be written as
\begin{eqnarray}
    \widehat g_{\textrm{KAE}}(\theta_i)=\frac{1}{B}
    \sum_{b=1}^B \widehat{g}_{\textrm{KAE}}(X^{(b)},\theta_i).\nonumber\\
    \widehat g_{\textrm{oracle}}(\theta_i)=\frac{1}{B}
    \sum_{b=1}^B \widehat{g}_{\textrm{oracle}}(X^{(b)},\theta_i).\nonumber
\end{eqnarray}
Since $\mathbb E\left[S_i(Y,X)V_i(X)\mid X\right]=V_i(X)\mathbb E\left[S_i(Y,X)\mid X\right]=0,$ it is immediate to see that  $\widehat g_{\textrm{oracle}}(\theta_i)$ is unbiased to the true gradient
\begin{equation*}
    g(\theta_i):=\nabla_{\theta}J(\theta_i)=\frac1{|\mathcal X|}\sum_{x\in\mathcal X}\mathbb E\left[S_i(Y,x)Z\right].
\end{equation*}

We decompose KAE's MSE as follows:
\begin{eqnarray}\label{eqn:gradMSE-decomp}
\begin{split}
    \text{MSE}(\widehat{g}_{\textrm{KAE}}(\theta_i)) &= \mathbb{E}\left\Vert\widehat{g}_{\textrm{KAE}}(\theta_i) - \widehat{g}_{\textrm{oracle}}(\theta_i)+ \widehat{g}_{\textrm{oracle}}(\theta_i) - g(\theta_i)\right\Vert^2\\
    &=\text{MSE}(\widehat{g}_{\textrm{oracle}}(\theta_i)) +\mathbb{E}\left\Vert\widehat{g}_{\textrm{KAE}}(\theta_i) - \widehat{g}_{\textrm{oracle}}(\theta_i)\right\Vert^2\\
    &+2\mathbb{E}\left[\left(\widehat{g}_{\textrm{KAE}}(\theta_i) - \widehat{g}_{\textrm{oracle}}(\theta_i)\right)^\top\left(\widehat{g}_{\textrm{oracle}}(\theta_i) - g(\theta_i)\right)\right].
\end{split}
\end{eqnarray}

We first show that the interaction term is zero. 
For $x, \widetilde{x}\in\mathcal{X}, x\neq \widetilde{x}$, due to the independence between the response-reward pairs associated with $x$ and those associated with $\widetilde{x}$, as well as the unbiasedness of the oracle gradient estimator, we have
\begin{eqnarray*}
    &&\mathbb{E}\left\{\left(\widehat{g}_{\textrm{KAE}}(x,\theta_i) - \widehat{g}_{\textrm{oracle}}(x,\theta_i)\right)^\top \left(\widehat{g}_{\textrm{oracle}}(\widetilde{x},\theta_i)-g(\widetilde{x},\theta_i\right)\right\}\nonumber\\
    &=&\left\{\mathbb{E}\left(\widehat{g}_{\textrm{KAE}}(x,\theta_i) - \widehat{g}_{\textrm{oracle}}(x,\theta_i)\right)^\top\right\}\left\{\mathbb{E}\left(\widehat{g}_{\textrm{oracle}}(\widetilde{x},\theta_i)-g(\widetilde{x},\theta_i)\right)^\top\right\}=0.
\end{eqnarray*}
Thus, the interaction term equals 
\begin{equation*}
    \frac{2}{B}\mathbb{E}\left[\left(\widehat{g}_{\textrm{KAE}}(X, \theta_i) - \widehat{g}_{\textrm{oracle}}(X, \theta_i)\right)^\top(\widehat{g}_{\textrm{oracle}}(X, \theta_i)-g(\theta_i)) \right].
\end{equation*}
Let $\varepsilon_i^{(g)}(X):=\widehat V_i^{(g)}(X)-V_i(X)$. Then,
\[
    \widehat g_{\mathrm{KAE}}(X,\theta_i)-\widehat g_{\mathrm{oracle}}(X,\theta_i)
    =-\frac{1}{G}\sum_{g=1}^GS_i(Y^{(g)},X)\varepsilon_i^{(g)}(X).
\]
By the leave-one-out construction, $\varepsilon_i^{(g)}(X)$ is independent of
$(Y^{(g)},Z^{(g)})$ conditional on $X$ and the past history. Hence, $\widehat g_{\mathrm{KAE}}(\theta_i)$ is also an unbiased estimator for the true gradient. Thus, the interaction term can be simplified to 
\begin{equation}\label{eqn:simpleinteraction}
    \frac{2}{B}\mathbb{E}\left[\left(\widehat{g}_{\textrm{KAE}}(X,\theta_i) - \widehat{g}_{\textrm{oracle}}(X,\theta_i)\right)^\top\widehat{g}_{\textrm{oracle}}(X,\theta_i) \right].
\end{equation}
Under the uncorrelatedness condition in Assumption \ref{assump:score}, its conditional expectation given $X$ equals
\begin{align*}
    \frac{2}{B}&\mathbb E\left[\|S_i(Y^{(g)},X)\|^2\varepsilon_i^{(g)}(X)\left(Z^{(g)}-V_i(X)\right)\mid X\right] \\
&\qquad =\frac{2}{B}\mathbb E\left[\varepsilon_i^{(g)}(X)\mid X\right]\mathbb{E}
    \|S_i(Y^{(g)},X)\|^2\mathbb E\left[\left(Z^{(g)}-V_i(X)\right)\mid X
\right]=0.
\end{align*}
This verifies that \eqref{eqn:simpleinteraction} equals zero as well. 
Consequently,
\[
    \mathbb E
    \left[
    \left(
    \widehat g_{\mathrm{KAE}}(\theta_i)
    -
    \widehat g_{\mathrm{oracle}}(\theta_i)
    \right)^\top
    \left(
    \widehat g_{\mathrm{oracle}}(\theta_i)
    -
    g(\theta_i)
    \right)
    \right]
    =0.
    \label{eq:interaction-zero-polished}
\]

It remains to upper bound $\mathbb E\left\|\widehat g_{\mathrm{KAE}}(\theta_i)-\widehat g_{\mathrm{oracle}}(\theta_i)\right\|^2$. Define
\[
    R_b
    :=
    \widehat g_{\mathrm{oracle}}(\theta_i)-\widehat g_{\mathrm{KAE}}(\theta_i)
    =\frac1G\sum_{g=1}^G
    S_i(Y^{(b,g)},X^{(b)})
    \varepsilon_i^{(g)}(X^{(b)}).
\]
By the leave-one-out construction and the property of the score function, we have $\mathbb E(R_b| X^{(b)})=0$. Thus, the cross terms across different prompts vanish, and
\begin{eqnarray}\label{eqn:mse-main}
\begin{split}
    &\mathbb{E}\left\|\widehat g_{\mathrm{KAE}}(\theta_i)-\widehat g_{\mathrm{oracle}}(\theta_i)\right\|^2\\
    =& \frac1B\mathbb E\|R_b\|^2\\
    =&\frac{1}{BG^2}\mathbb{E}\left\Vert\sum_{g=1}^GS_i(Y^{(b,g)},X^{(b)})\varepsilon_i^{(g)}(X^{(b)})\right\Vert^2\\
    =&\frac{1}{BG}\mathbb{E}\left[\Vert S_i(Y^{(g)},X)\Vert^2(\varepsilon_i^{(g)}(X))^2\right]\\
    +&\frac{1}{BG^2}\sum_{k\neq l}\mathbb{E}\left[S_i(Y^{(k)},X)^\top S_i(Y^{(l)},X)\varepsilon_i^{(k)}(X)\varepsilon_i^{(l)}(X)\right].\\
    :=& I_1 + I_2.
\end{split}
\end{eqnarray}
%Notice that, conditional on $X$, the value estimator is a weighted sum of the leave-one-out rewards and historical rewards, both of which are independent of $Y^{(g)}$. Therefore, $\Vert S_i(Y^{(g)},X)\Vert^2$ is conditionally independent of $\varepsilon_i^{(g)}(X)$. 
It follows from the bounded score condition in Assumption \ref{assump:boundedreward} that
\begin{equation}\label{eqn:I1}
    I_1 = \frac{1}{BG}\mathbb{E}\Vert\nabla_{\theta}\log \pi_{\theta_i}(Y^{(g)}|X)\Vert^2 
[\text{MSE}(\widehat{V}_i^{(g)}(X))]=O\Big(\frac{1}{BG}\mathbb{E}[\text{MSE}(\widehat{V}_i^{(g)}(X))]\Big).
\end{equation}
According to Theorem \ref{thm:value}, we have that 
\begin{equation*}
    \text{MSE}(\widehat{V}_i^{(g)}(X)) \lesssim h^{2} + \frac{1}{i^2 h^2}+ \frac{1}{N_i(X)h}, 
\end{equation*}
where $a_i\lesssim b_i$ indicates the existence of some constant $c>0$ such that $a_i\le cb_i$ for all $i$, and $N_i(X)=G n_i(X)+G-1$, with $n_i(X)$ being the number of previous occurrences of prompt $X$ before iteration $i$.

Under Assumption \ref{assump:iidsampledprompt}, it is straightforward to show that
$n_i(X)\sim \mathrm{Bin}(i,B/m)$. Without loss of generality, assume $G\ge 2$. We can write
\[
    \frac1{N_i(X)}
    =
    \frac1{G[n_i(X)+(G-1)/G]}.
\]
Applying Lemma \ref{lem:binomial-bound} with
$a=(G-1)/G$ gives 
\begin{eqnarray}\label{eqn:expectednix}
    \mathbb E\left[\frac1{N_i(X)}|X\right]=O\left(\frac{m}{iBG}\right).
\end{eqnarray}
Therefore,
\[
    \mathbb E
    \left[
    \mathrm{MSE}\left(\widehat V_i^{(g)}(X)\right)|X
    \right]
    =
    O(h^{2})+O\left(\frac{1}{i^2h^2}\right)
    +
    O\left(\frac{m}{ihBG}\right).
\]
In view of equation \eqref{eqn:I1}, we obtain that
\begin{equation}\label{eqn:I1-order}
    I_1 =  O\left(\frac{h^{2}}{BG}\right)  + O\left(\frac{1}{i^2h^2 BG}\right)+O\left(\frac{m}{ihB^2G^2}\right).
\end{equation}

Next, we study $I_2$. Conditioned on $X$, $Y^{(k)}$ is independent of $Y^{(l)}$ and past data history, and hence being independent of $\theta_i$, $V_i(X)$, $\varepsilon_i^{(k)}(X)$ and $S_i(Y^{(l)},X)$. Consequently, $S_i(Y^{(k)},X)$ is conditionally uncorrelated with these quantities. The only dependence arises through the reward $Z^{(k)}$, which enters the construction of $\varepsilon_i^{(l)}(X)$ with weight proportional to $K(0)/M_i(X)$. This leads to
\begin{eqnarray*}
    I_2=\frac{1}{BG^2}\sum_{k\neq l}\mathbb{E}\left[S_i(Y^{(k)},X)^\top S_i(Y^{(l)},X)\varepsilon_i^{(k)}(X)\frac{K(0)Z^{(k)}}{M_i(X)}\right].
\end{eqnarray*}
By the same argument, $S_i(Y^{(l)},X)$ is correlated only with the reward $Z^{(l)}$ appearing in $\varepsilon_i^{(k)}$ inside the expectation above. Therefore, $I_2$ can be simplified to
\begin{eqnarray}
    I_2 &=& \frac{1}{BG^2}\sum_{k\neq l}\mathbb{E}\left[S_i(Y^{(k)},X)^\top S_i(Y^{(l)},X)\frac{K^2(0)Z^{(k)}Z^{(l)}}{M_i^2(X)}\right].\nonumber
\end{eqnarray}
Using the bounded policy score condition again (Assumption \ref{assump:boundedreward}) and similar arguments to the proof of \eqref{eqn:expectednix}, we obtain
\begin{eqnarray}\label{eqn:I2-order}
    I_2\lesssim \frac{1}{BG^2}\times G(G-1)\times\mathbb{E}\left[\frac{1}{M_i(X)^2}\right] = O\left(\frac{m^2}{i^2 h^2 B^3 G^2}\right).
\end{eqnarray}
Combining \eqref{eqn:gradMSE-decomp}, \eqref{eqn:mse-main},\eqref{eqn:I1-order} and \eqref{eqn:I2-order}, we obtain
\begin{eqnarray*}
        \text{MSE}(\widehat{g}_{\text{KAE}}(\theta_i))=\text{MSE}(\widehat{g}_{\text{oracle}}(\theta_i)) + O\left(\frac{h^{2}}{BG}\right)  + O\left(\frac{1}{i^2h^2BG}\right)+O\left(\frac{m}{ihB^2G^2}\right)+O\left(\frac{m^2}{i^2 h^2 B^3 G^2}\right).
\end{eqnarray*}
This completes the proof.

\subsection{Proof of Corollary \ref{coro:grad}}
By Theorem \ref{thm:grad},
\[
    \mathrm{MSE}(\widehat g_{\mathrm{KAE}}(\theta_i))
    =
    \mathrm{MSE}(\widehat g_{\mathrm{oracle}}(\theta_i))
    +
    O\left(\frac{h^{2}}{BG}\right)
    +
    O\left(\frac{1}{i^2h^2BG}\right)
    +
    O\left(\frac{m}{ihB^2G^2}\right)+O\left(\frac{m^2}{i^2 h^2 B^3 G^2}\right).
\]
Under the conditions that $h\to 0$ and $ih\to\infty$, and $G$ and $m$ are held fixed, we have
\[
    \mathrm{MSE}(\widehat g_{\mathrm{KAE}}(\theta_i))
    =
    \mathrm{MSE}(\widehat g_{\mathrm{oracle}}(\theta_i))
    +
    o(B^{-1}).
\]
Since the oracle gradient estimator has MSE of order $\Omega(B^{-1})$ \citep{zhou2026demystifying}, it follows
that KAE's gradient estimator is asymptotically equivalent to the oracle
gradient estimator.

It remains to compare KAE with GRPO and REINFORCE++. 
For any baseline
$\widetilde V_i(X)$ that is independent of the current response conditional on
$X$,
\[
\begin{aligned}
&\mathbb E\left[
    \left\|
    S_i(Y,X)\left(Z-\widetilde V_i(X)\right)
    \right\|^2
    \mid X
\right] \\
&\quad =
\mathbb E\left[
    \left\|
    S_i(Y,X)\left(Z-V_i(X)\right)
    \right\|^2
    \mid X
\right]
+
\mathbb E\left[
    \|S_i(Y,X)\|^2\mid X
\right]
\left(\widetilde V_i(X)-V_i(X)\right)^2,
\end{aligned}
\]
where the cross term equals zero by Assumption \ref{assump:score}. Hence, the
excess MSE of a gradient estimator using a non-oracle baseline is controlled by
the MSE of its value estimator. 

By Corollary \ref{coro:value}, neither GRPO, nor REINFORCE++ produces consistent value estimators in the resource-constrained regime. Consequently, they have asymptotically no smaller MSE than the oracle
estimator, and in the nondegenerate case strictly larger asymptotic MSE.
Therefore, KAE's gradient estimator is asymptotically equivalent to the oracle
estimator and has no larger asymptotic MSE than the GRPO and REINFORCE++
gradient estimators. The proof is hence completed.

\subsection{Proof of Theorem \ref{thm:policy}}
Recall that $J(\theta) = \mathbb{E}^{\pi_\theta}(Z)$ defined over $\theta \in \Theta$ whose gradient is given by $g(\theta):=\nabla_{\theta} J(\theta)$. Under
Assumption \ref{assump:psmooth}, $J$ is $L$-smooth where $L$ denotes the Lipschitz constant that $\|g(\theta_1)-g(\theta_2)\|_2\le L\|\theta_1-\theta_2\|_2$. Therefore, for the update $\theta_{n+1} = \theta_n+\eta_n\widehat{g}_{\mathrm{KAE}}(\theta_n)$, we have 
    \begin{equation}\label{eqn:iter1}
        J(\theta_{n+1}) = J(\theta_n+\eta_n\widehat{g}_{\mathrm{KAE}}(\theta_n))\geq J(\theta_n) +\eta_n g^\top(\theta_n)\widehat{g}_{\mathrm{KAE}}(\theta_n) - \frac{1}{2}L\eta_n^2\Vert\widehat{g}_{\mathrm{KAE}}(\theta_n)\Vert^2.
    \end{equation}
    Let $\mathcal F_n$ denote the sigma-field generated by $\theta_n$ and all
    randomness before the current minibatch is sampled. By the leave-one-out
    construction, $\widehat V_n^{(g)}(X)$ is independent of the current response
    $Y^{(g)}$ conditional on $X$ and $\mathcal F_n$. Moreover, $\mathbb E\left[\nabla_\theta\log\pi_{\theta_n}(Y^{(g)}| X)| X,\mathcal F_n\right]=0$. Therefore, 
    \begin{align*}
    \mathbb E[\widehat g_{\mathrm{KAE}}(\theta_n)| \mathcal F_n]
    &=\mathbb E^{\pi_{\theta_n}}\left[\nabla_\theta\log\pi_{\theta_n}(Y|X)\left\{Z-\widehat V_n^{(g)}(X)\right\}|\mathcal F_n\right]  \\
    &=\mathbb E^{\pi_{\theta_n}}\left[\nabla_\theta\log\pi_{\theta_n}(Y\mid X)Z\mid \mathcal F_n\right] \\
    &=g(\theta_n).
\end{align*}
    Taking conditional expectation on both sides of \eqref{eqn:iter1}, we obtain
    \begin{eqnarray}\label{eqn:someeqn3}
        \mathbb{E}[J(\theta_{n+1})|\mathcal{F}_n] &\geq& J(\theta_n) +\eta_n\Vert g(\theta_n)\Vert^2 - \frac{1}{2}L\eta_n^2\mathbb{E} \bigr[\Vert\widehat{g}(\theta_n)\Vert^2|\mathcal{F}_n\bigr]. 
    \end{eqnarray}
    Since $\widehat g_{\mathrm{KAE}}(\theta_n)$ is conditionally unbiased given $\mathcal{F}_n$ (as shown in the proof of Theorem \ref{thm:grad}, which follows from the independence between $Y^{(g)}$ and other quantities such as $\theta_n$ and $\widehat{V}^{(g)}(X)$ given $X$), the last term on the RHS can be represented by $-L\eta_n^2 \big[\|g(\theta_n)\|^2 + \mathbb{E}\left\{\Vert\widehat{g}_{\text{KAE}}(\theta_n) - g(\theta_n)\Vert^2|\mathcal{F}_n\right\}\big] /2$. 
    
    Let $\theta^*$ be a maximizer of $J(\theta)$. Then, rearranging the terms in \eqref{eqn:someeqn3} yields:
    \begin{eqnarray}
        J(\theta^*) - \mathbb{E}[J(\theta_{n+1})|\mathcal{F}_n] &\leq& J(\theta^*) -J(\theta_n) - \eta_n\Vert g(\theta_n)\Vert^2 \nonumber\\
        &&\qquad +\frac{1}{2}L\eta_n^2\Vert g(\theta_n)\Vert^2 +\frac{1}{2}L\eta_n^2\mathbb{E}\left\{\Vert\widehat{g}_{\text{KAE}}(\theta_n) - g(\theta_n)\Vert^2|\mathcal{F}_n\right\}.\nonumber %\text{MSE}(\widehat{g}_n(\theta_n)).\nonumber
    \end{eqnarray}
    It follows that
    \begin{equation*}
        \mathbb{E}[\Delta(\pi_{\theta_{n+1}})|\mathcal{F}_n]\leq \Delta(\pi_{\theta_n}) - \left(\eta_n - \frac{L\eta_n^2}{2}\right)\Vert  g(\theta_n)\Vert^2 + \frac{L\eta_n^2}{2}\mathbb{E}\left\{\Vert\widehat{g}_{\text{KAE}}(\theta_n) - g(\theta_n)\Vert^2|\mathcal{F}_n\right\}
    \end{equation*}
    Under the PL condition (Assumption \ref{assump:PL}), $\|g(\theta_n)\|^2
    \ge 2\mu\Delta(\pi_{\theta_n})$. Therefore, 
    \begin{equation}\label{eqn:nonasymp-iter}
        \mathbb{E}[\Delta(\pi_{\theta_{n+1}})|\mathcal{F}_n]\leq \left(1 - 2\mu\eta_n +\mu L\eta_n^2\right)\Delta(\pi_{\theta_{n}}) +\frac{L\eta_n^2}{2}\mathbb{E}\left\{\Vert\widehat{g}_{\text{KAE}}(\theta_n) - g(\theta_n)\Vert^2|\mathcal{F}_n\right\}.
    \end{equation}
    Take expectation on both sides of inequality \eqref{eqn:nonasymp-iter} and take $\eta_n = \beta/n$ as suggested in Assumption \ref{assump:learningrate}, we obtain the following recursion:
    \begin{equation*}
        \mathbb{E}[\Delta(\pi_{\theta_{n+1}})]\leq \left(1 - \frac{2\mu\beta}{n} +\frac{\mu L\beta^2}{n^2}\right)\mathbb{E}[\Delta(\pi_{\theta_{n}})] +\frac{L\beta^2}{2n^2}\text{MSE}(\widehat{g}_{\text{KAE}}(\theta_n) ).
    \end{equation*}
    To simplify notation, we define $C_n = \text{MSE}(\widehat{g}_{\text{KAE}}(\theta_n))$. Moreover, since $L$ can be taken larger than $\mu$, it follows that $1-2\mu\beta /n + \mu L\beta^2/n^2 \geq (n-\mu\beta)^2/n^2\geq0$ for all $n$. Unrolling the recursion yields
    \begin{eqnarray}
        \mathbb{E}[\Delta(\pi_{\theta_{n}})]\leq \prod_{k=1}^n\left(1 - \frac{2\mu\beta}{k} +\frac{\mu L\beta^2}{k^2}\right)\Delta(\pi_{\theta_0}) + \sum_{k=1}^n\frac{L\beta^2 C_k}{2k^2}\prod_{j=k+1}^n\left(1 - \frac{2\mu\beta}{j} +\frac{\mu L\beta^2}{j^2}\right).\nonumber
    \end{eqnarray}
    Under Assumption \ref{assump:PL},  $2\mu\beta>1$. Thus, applying Lemma \ref{lem:uniform-product-bound}, we obtain there exists some universal constant $K = K(\mu,\beta,L)$ only depends on $\mu, \beta, L$ such that
    \begin{eqnarray}
        &&\mathbb{E}[\Delta(\pi_{\theta_{n}})]\nonumber\\
        &\leq &K\Delta(\pi_{\theta_0})\left(\frac{1}{n}\right)^{2\mu\beta} +\sum_{k=1}^n\frac{L\beta^2 C_k}{2k^2}\times K\left(\frac{k+1}{n}\right)^{2\mu\beta}\nonumber\\
        &\leq&K\Delta(\pi_{\theta_0})\left(\frac{1}{n}\right)^{2\mu\beta} +\sum_{k=1}^{n_0-1}\frac{L\beta^2 C_k}{2k^2}\times K\left(\frac{k+1}{n}\right)^{2\mu\beta}+ \sum_{k=n_0}^{n}\frac{L\beta^2 C_k}{2k^2}\times K\left(\frac{k+1}{n}\right)^{2\mu\beta}.\nonumber
    \end{eqnarray}
    Under Assumptions \ref{assump:boundedreward}, \ref{assump:kernelfun} and \ref{assump:score}, both $\widehat{g}_{\text{KAE}}(\theta)$ and $g(\theta)$ are uniformly bounded over $\theta\in\Theta$. Thus, the uniform boundedness applies to $\text{MSE}(\widehat{g}_{\text{KAE}}(\theta_n))$ as well, so that $C_n \leq M<\infty$ holds for all $n\in \mathbb{N}$. Setting $c_1 = K\Delta(\pi_{\theta_0})$, $c_2 = LK\beta^2M/2$ and $c_3=3^{2\mu\beta-1}LK\beta^2/(2\mu\beta-1)$, $\mathbb{E}[\Delta(\pi_{\theta_{n}})]$ can be further bounded by
    \begin{eqnarray}
        && c_1\left(\frac{1}{n}\right)^{2\mu\beta} + LK\beta^2M\sum_{k=1}^{n_0-1}\frac{1}{2k^2}\left(\frac{k+1}{n}\right)^{2\mu\beta} + LK\beta^2\max_{n_0<k\leq n}C_k\times n^{-2\mu\beta}\sum_{k=n_0}^n(k+1)^{2\mu\beta-2}\nonumber\\
        &\leq& c_1\left(\frac{1}{n}\right)^{2\mu\beta} + c_2n_0\left(\frac{n_0}{n}\right)^{2\mu\beta} + LK\beta^2\max_{n_0<k\leq n}C_k\times n^{-2\mu\beta}\sum_{k=n_0}^n(k+1)^{2\mu\beta-2}\nonumber\\
        &\leq& c_1\left(\frac{1}{n}\right)^{2\mu\beta} + c_2n_0\left(\frac{n_0}{n}\right)^{2\mu\beta} + c_3\frac{\max_{n_0<k\leq n}C_k}{n},\nonumber
    \end{eqnarray}
    where the last inequality follows from the following inequality
    \begin{eqnarray}
        \sum_{k=n_0}^n(k+1)^{2\mu\beta-2}\leq\int_{n_0+1}^{n+2} x^{2\mu\beta-2} \,dx<\frac{(n+2)^{2\mu\beta-1}}{2\mu\beta-1}<\frac{(3n)^{2\mu\beta-1}}{2\mu\beta-1}.
    \end{eqnarray}
    This finishes the proof of Theorem \ref{thm:policy}.

\subsection{Proof of Corollary \ref{coro:policy}}
By Theorem \ref{thm:policy}, for any $1\le n_0\le n$ and any
$0<\varepsilon<2\mu\beta-1$,
\[
    \mathbb E[\Delta(\pi_{\theta_n})]
    \le
    c_1 n^{-1-\varepsilon}
    +
    c_2 n_0
    \left(
    \frac{n_0}{n}
    \right)^{1+\varepsilon}
    +
    \frac{c_3}{n}
    \sup_{k\ge n_0}
    \mathrm{MSE}
    \left(
    \widehat g_{\mathrm{KAE}}(\theta_k)
    \right).
\]
Therefore, the only algorithm-dependent term in the upper bound is
\[
    \frac{c_3}{n}
    \sup_{k\ge n_0}
    \mathrm{MSE}
    \left(
    \widehat g_{\mathrm{KAE}}(\theta_k)
    \right).
\]

By Theorem \ref{thm:grad}, %with $\tau=p\wedge s$,
\[
    \mathrm{MSE}\left(\widehat g_{\mathrm{KAE}}(\theta_k)\right)
    =\mathrm{MSE}\left(\widehat g_{\mathrm{oracle}}(\theta_k)\right)
    + O\left(\frac{h_k^{2}}{BG}\right)
    +O\left(\frac{1}{k^2 h_k^2 BG}\right)+ O\left(\frac{m}{k h_k B^2G^2}\right)+O\left(\frac{m^2}{k^2B^3G^2}\right).
\]
If $h_k\to0$ and $k h_k\to\infty$, then
\[
    O\left(\frac{h_k^{2}}{BG}\right)
    +O\left(\frac{1}{k^2 h_k^2 BG}\right)+ O\left(\frac{m}{k h_k B^2G^2}\right)+O\left(\frac{m^2}{k^2B^3G^2}\right)=o(B^{-1})
\]
as $k\to\infty$. Hence, for any $n_0$ such that $n_0\to\infty$ as
$n\to\infty$,
\[
    \sup_{k\ge n_0}\mathrm{MSE}\left(\widehat g_{\mathrm{KAE}}(\theta_k)\right)
    = \sup_{k\ge n_0}\mathrm{MSE}\left(\widehat g_{\mathrm{oracle}}(\theta_k)\right)+o(B^{-1}).
\]
Substituting this relation into the policy suboptimality bound shows that the
upper bound achieved by KAE is asymptotically equivalent to the upper bound that
would be achieved by the oracle gradient estimator.

It remains to compare KAE with GRPO and REINFORCE++. By Corollary
\ref{coro:grad}, KAE's gradient estimator is asymptotically equivalent to the
oracle gradient estimator. In contrast, the
gradient estimators used by GRPO and REINFORCE++ have generally no smaller MSEs, by Corollary
\ref{coro:grad}. Since the suboptimality upper bound in Theorem \ref{thm:policy}
depends monotonically on the gradient-estimator MSE, KAE achieves an
asymptotic suboptimality upper bound no larger than those achieved by GRPO and
REINFORCE++.

This completes the proof.

\section{Lemmas}
\begin{lemma}\label{lem:Uniform-distribution}
    Let $I_1, \ldots, I_t$ be a sequence of independent Bernoulli random variables with common success probability $p$, and $N=\sum_{j=0}^{t-1} I_j$
    denote the total number of successes by time $t$. Conditional on $N=n>0$, let $S$ be the set of success times and let $X_1,\ldots,X_n$ be labeled by a uniformly random permutation of this set without order. Then, for each $i=0,\ldots,t-1$, $X_i$ follows a uniform distribution over $\{0,1,\ldots,t-1\}$. More specifically, for any $k\in\{1,\ldots,t\}$, %$k\neq l$,
    \begin{align*}
        &\mathbb{P}(X_i=k\mid N=n)=\frac{1}{t}.
        %&\mathbb{P}(X_i=k, X_j=l\mid N=n)=\frac{1}{t(t-1)}.
    \end{align*}
\end{lemma}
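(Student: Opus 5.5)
The argument is a direct exchangeability computation. Conditional on $N=n$, the indicator vector $(I_0,\ldots,I_{t-1})$ is uniformly distributed over all binary vectors of length $t$ with exactly $n$ ones. The probability of any specific configuration is $p^n(1-p)^{t-n}$, which does not depend on where the ones sit. Dividing by $\mathbb{P}(N=n)=\binom{t}{n}p^n(1-p)^{t-n}$ gives conditional probability $1/\binom{t}{n}$ for each configuration. Hence the success set $S$ is a uniformly random $n$-subset of $\{0,1,\ldots,t-1\}$. (I will read the index range in the statement as $\{0,\ldots,t-1\}$ throughout, matching the definition of $N$ and its use in the proof of Theorem~\ref{thm:value}, where $I_j$ ranges over past iterations $0,\ldots,i-1$.)

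Next, the labels $X_1,\ldots,X_n$ are obtained by applying an independent uniform random permutation to the elements of $S$. Fix a label $i\in\{1,\ldots,n\}$ and a time $k\in\{0,\ldots,t-1\}$. Then
\[
\mathbb{P}(X_i=k\mid N=n)=\mathbb{P}(k\in S\mid N=n)\cdot\mathbb{P}(X_i=k\mid k\in S, N=n).
\]
The first factor is $\binom{t-1}{n-1}/\binom{t}{n}=n/t$, since $S$ is a uniform $n$-subset. For the second factor, given $S$, the uniform permutation assigns each element of $S$ to label $i$ with probability $1/n$. Multiplying gives $(n/t)(1/n)=1/t$, which is the claim.

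Equivalently, one can argue by symmetry. The joint law of $(X_1,\ldots,X_n)$ conditional on $N=n$ is uniform over ordered $n$-tuples of distinct elements of $\{0,\ldots,t-1\}$. Its marginal is therefore invariant under any permutation of $\{0,\ldots,t-1\}$, so it must be uniform.

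The only delicate step is bookkeeping rather than mathematics: the statement mixes index conventions, using $i=0,\ldots,t-1$ for labels and $k\in\{1,\ldots,t\}$ for times. I will fix labels in $\{1,\ldots,n\}$ and times in $\{0,\ldots,t-1\}$ and note that the same count works verbatim under any relabeling. I will also remark on the application to Theorem~\ref{thm:value}. There, $I_j=1$ is the event that $x$ lands in the minibatch at iteration $j$, with $p=B/m$, and these events are independent across iterations by Assumption~\ref{assump:iidsampledprompt}. Each historical reward inherits the iteration index of its prompt occurrence, and the $G$ rewards sharing an occurrence have the same time marginal. So the lemma yields the uniform marginal of $I_j$ used there.
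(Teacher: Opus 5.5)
Your proposal is correct and matches the paper's proof: conditional on $N=n$, each configuration has probability $1/\binom{t}{n}$, so $\mathbb{P}(k\in S\mid N=n)=\binom{t-1}{n-1}/\binom{t}{n}=n/t$, and the uniform labeling contributes the factor $1/n$. Your fix of the index conventions (labels in $\{1,\ldots,n\}$, times in $\{0,\ldots,t-1\}$) is a sensible repair of inconsistent indexing that the paper's statement and proof leave as is.
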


\begin{proof}[Proof of Lemma \ref{lem:Uniform-distribution}]
    For any subset $A\subseteq \{1,\ldots,t\}$ with $|A|=n$, we have
\begin{equation*}
    \mathbb{P}\left(\{j:I_j=1\}=A\right)=p^n(1-p)^{t-n}.
\end{equation*}
Hence, conditional on $N=n$, 
\begin{equation*}
    \mathbb{P}\bigl(\{j:I_j=1\}=A\mid N=n\bigr)=\frac{p^n(1-p)^{t-n}}{\mathbb{P}(N = n)}=\frac{1}{\binom{t}{n}}.
\end{equation*}
On the other hand, Fix $k\in\{1,\ldots,t\}$, the number of subsets of size $n$ containing $k$ is $\binom{t-1}{n-1}$.
Therefore,
\begin{equation*}
    \mathbb{P}( I_k = 1\mid N=n)=\mathbb{P}\left( k \in \{j: I_j = 1\}\mid N=n\right)
=\frac{\binom{t-1}{n-1}}{\binom{t}{n}}
=
\frac{n}{t}.
\end{equation*}

Since $X_i$ is one of the $n$ success times chosen uniformly among the success times, symmetry gives
\begin{equation*}
    \mathbb{P}(X_i=k\mid N=n)=\frac{1}{n}\cdot \frac{n}{t}=\frac{1}{t}.
\end{equation*}
This finishes the proof.
\end{proof}

\begin{lemma}\label{lem:Riemann-sum}
Let $f:[0,1]\to\mathbb{R}$ be Lipschitz continuous with Lipschitz constant $L>0$. Then, for every integer $n\geq 1$,
\begin{equation*}
    \left|\frac{1}{n}\sum_{i=1}^{n} f\left(\frac{i}{n}\right)-\int_0^1 f(t)\,dt\right|\leq \frac{L}{2n}.
\end{equation*}
\end{lemma}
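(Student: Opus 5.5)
The plan is to compare each subinterval's Riemann sample with its integral and add up the local errors. First we split $[0,1]$ into the $n$ subintervals $[(i-1)/n,\,i/n]$ for $i=1,\dots,n$. Since each has length $1/n$, we can write $\frac1n f(i/n)=\int_{(i-1)/n}^{i/n} f(i/n)\,dt$. This gives the exact identity
\[
\frac{1}{n}\sum_{i=1}^{n} f\Big(\frac{i}{n}\Big)-\int_0^1 f(t)\,dt=\sum_{i=1}^n\int_{(i-1)/n}^{i/n}\Big(f\Big(\frac in\Big)-f(t)\Big)\,dt .
\]

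Next we take absolute values and use the triangle inequality. The Lipschitz property gives $|f(i/n)-f(t)|\le L(i/n-t)$ for $t$ in the $i$th subinterval. The local integral is then
\[
\int_{(i-1)/n}^{i/n} L\Big(\frac in-t\Big)dt=\frac{L}{2n^2}.
\]
Summing the $n$ identical contributions gives the total bound $n\cdot L/(2n^2)=L/(2n)$, which is the claim.

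The argument is essentially obstacle-free. The only point needing care is that the sample point is the right endpoint of each subinterval, so the distance $|i/n-t|$ ranges over $[0,1/n]$. Its integral is exactly $1/(2n^2)$, not the $1/(8n^2)$ a midpoint rule would give, so the constant $1/2$ in the statement is precisely what this computation yields. No differentiability of $f$ is needed, only Lipschitz continuity on $[0,1]$.

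This matters for the earlier uses of Lemma~\ref{lem:Riemann-sum}: in \eqref{eqn:Riemann-approx} and \eqref{eqn:kernel-var}, $f$ is a product of the Lipschitz kernel with a bounded, Lipschitz-in-$u$ path of values, which is Lipschitz on $[0,1]$ by Assumptions~\ref{assump:kernelfun} and~\ref{assump:psmooth}.
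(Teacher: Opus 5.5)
Your proof is correct and follows the paper's argument exactly: the same decomposition over the subintervals $[(i-1)/n,\,i/n]$, the Lipschitz bound $|f(i/n)-f(t)|\le L(i/n-t)$, and the local integral $L/(2n^2)$ summed over $n$ pieces. One small correction to your closing remark: in \eqref{eqn:kernel-var} the function is $f(u)=K^2(u/h)$, not a kernel times a value path, and its Lipschitz constant is of order $1/h$, which is exactly what produces the $O(1/(ih))$ term there.
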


\begin{proof}
We decompose the integral over the partition $i/n,i\in\{1,\ldots,n\}$ and obtain
\[
\frac{1}{n}\sum_{i=1}^{n} f\left(\frac{i}{n}\right)
-
\int_0^1 f(t)\,dt
=
\sum_{i=1}^{n}
\int_{\frac{i-1}{n}}^{\frac{i}{n}}
\left[
f\left(\frac{i}{n}\right)-f(t)
\right]dt.
\]
Taking absolute values on both sides and using the Lipschitz continuity of $f$, we obtain
\begin{eqnarray}
    \left|\frac{1}{n}\sum_{i=1}^{n} f\left(\frac{i}{n}\right)-\int_0^1 f(t)\,dt\right| 
    &\leq&\sum_{i=1}^{n}\int_{\frac{i-1}{n}}^{\frac{i}{n}}\left|f\left(\frac{i}{n}\right)-f(t)\right|dt\nonumber\\
    &\leq&\sum_{i=1}^{n}\int_{\frac{i-1}{n}}^{\frac{i}{n}}L\left|\frac{i}{n}-t\right|dt. \nonumber\\
    &=&\sum_{i=1}^{n}L\cdot\frac{1}{2n^2} = \frac{L}{2n}.\nonumber
\end{eqnarray}
This finishes the proof.
\end{proof}

\begin{lemma}\label{lem:binomial-bound}
Let $Y\sim \mathrm{Bin}(n,p), p\in(0,1]$ and let $a>0$ be a fixed constant. Then there exist constants $c_1,c_2>0$, depending only on $a$, such that
\begin{equation*}
    \frac{c_1}{np}\le\mathbb{E}\left[\frac{1}{Y+a}\right]\le\frac{c_2}{np}
\end{equation*}
holds for all $n\ge ap^{-1}$. 
\end{lemma}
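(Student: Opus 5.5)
The plan is to compare $Y+a$ with $Y+1$, use the exact formula for $\mathbb{E}[1/(Y+1)]$, and get the lower bound from Jensen's inequality.

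\emph{Upper bound.} Since $Y\ge 0$, we have $(Y+1)/(Y+a)\le \max(1,1/a)$. Hence
\[
\mathbb{E}\left[\frac{1}{Y+a}\right]\le \max(1,a^{-1})\,\mathbb{E}\left[\frac{1}{Y+1}\right].
\]
The last expectation has a closed form. Using the identity $\frac{1}{k+1}\binom{n}{k}=\frac{1}{n+1}\binom{n+1}{k+1}$ and summing the binomial expansion gives
\[
\mathbb{E}\left[\frac{1}{Y+1}\right]=\sum_{k=0}^n\frac{1}{k+1}\binom{n}{k}p^k(1-p)^{n-k}=\frac{1-(1-p)^{n+1}}{(n+1)p}\le \frac{1}{np}.
\]
So the upper bound holds with $c_2=\max(1,a^{-1})$, and it is valid for every $n$ with no restriction.

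\emph{Lower bound.} The map $y\mapsto 1/(y+a)$ is convex on $[0,\infty)$, so Jensen's inequality gives
\[
\mathbb{E}\left[\frac{1}{Y+a}\right]\ge \frac{1}{np+a}.
\]
This is the only place the hypothesis $n\ge a/p$ enters. It is equivalent to $a\le np$, so $np+a\le 2np$, and therefore the lower bound holds with $c_1=1/2$.

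The step that needs care is making sure the constants depend only on $a$. In the upper bound this is handled by the ratio comparison: it avoids a case split on whether $a<1$ or $a\ge 1$, and that ratio step is the one to verify carefully. The closed-form identity itself is routine algebra.
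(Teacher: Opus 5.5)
Your proposal is correct and follows the paper's proof essentially step for step: you get the lower bound from Jensen with $np+a\le 2np$ (so $c_1=1/2$), and the upper bound from the comparison $\frac{1}{y+a}\le\max\{1,1/a\}\frac{1}{y+1}$ together with the closed form $\mathbb{E}[1/(Y+1)]=\frac{1-(1-p)^{n+1}}{(n+1)p}\le\frac{1}{np}$. Your version is also slightly cleaner on one point: you use the stated hypothesis $n\ge a/p$ directly, whereas the paper's proof writes $n>ap^{-1}$.
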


\begin{proof}[Proof of Lemma \ref{lem:binomial-bound}]
We first prove the lower bound. Since the function $x\mapsto 1/x$ is convex on
$(0,\infty)$, Jensen's inequality gives
\begin{equation*}
    \mathbb{E}\left[\frac{1}{Y+a}\right]\ge\frac{1}{\mathbb{E}[Y]+a}=\frac{1}{np+a}.
\end{equation*}
Notice that for all $n>ap^{-1}$, we have $np+a\le 2np$ and thus
\begin{equation*}
    \mathbb{E}\left[\frac{1}{Y+a}\right]\ge\frac{1}{2np}.
\end{equation*}
We next prove the upper bound. For every $y\ge 0$,
\[
\frac{1}{y+a}
\le
\max\left\{1,\frac1a\right\}\frac{1}{y+1}.
\]
Therefore,
\[
\mathbb{E}\left[\frac{1}{Y+a}\right]
\le
\max\left\{1,\frac1a\right\}
\mathbb{E}\left[\frac{1}{Y+1}\right].
\]
It remains to bound $\mathbb{E}[1/(Y+1)]$. By direct calculation,
\begin{align*}
\mathbb{E}\left[\frac{1}{Y+1}\right]
&=\sum_{k=0}^n\frac{1}{k+1}\binom{n}{k}p^k(1-p)^{n-k}  \\
&=\frac{1}{n+1}\sum_{k=0}^n\binom{n+1}{k+1}p^k(1-p)^{n-k}  \\
&=\frac{1}{(n+1)p}\sum_{k=0}^n\binom{n+1}{k+1}p^{k+1}(1-p)^{n-k}  \\
&=\frac{1}{(n+1)p}\sum_{j=1}^{n+1}\binom{n+1}{j}p^j(1-p)^{n+1-j}  \\
&=\frac{1-(1-p)^{n+1}}{(n+1)p}\\
&\leq \frac{1}{(n+1)p}\leq \frac{1}{np}.
\end{align*}
Thus,
\[
\mathbb{E}\left[\frac{1}{Y+a}\right]
\le
\frac{\max\{1,1/a\}}{np}.
\]
Combining the lower and upper bounds gives
\[
\frac{1/2}{np}
\le
\mathbb{E}\left[\frac{1}{Y+a}\right]
\le
\frac{\max\{1,1/a\}}{np}.
\]
This completes the proof.
\end{proof}

\begin{lemma}\label{lem:uniform-product-bound}
Let $A>1$ and $B>0$. Suppose $n_0\in\mathbb{N}$ satisfies $n_0\ge 1$. For $n\ge n_0$, define
$$
a_n=\prod_{k=n_0}^{n}\left(1-\frac{A}{k}+\frac{B}{k^2}\right).
$$
Then there exists some constant $C$ only depends on $A$ and $B$ such that
\begin{equation*}
     a_n\le C\left(\frac{n_0}{n}\right)^A
\end{equation*}
holds for all $n \geq 1$.
\end{lemma}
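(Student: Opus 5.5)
We take logarithms and compare the sum of logarithms to $-A\sum_k 1/k$. Write each factor as $1-A/k+B/k^2$.

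\textbf{Step 1: handle the non-small-$k$ regime.} The factor can be negative or exceed the target behaviour only for small $k$. Let $k_1:=k_1(A,B)$ be the smallest integer such that for all $k\ge k_1$ we have
\[
0<1-A/k+B/k^2 \quad\text{and}\quad A/k+B/k^2\le 1/2 .
\]
For instance, $k_1=\lceil 2A+2\sqrt{B}+2\rceil$ suffices.

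For $k\ge k_1$ we use $\log(1-x)\le -x$ with $x=A/k-B/k^2$. This gives
\[
\log\Bigl(1-\tfrac{A}{k}+\tfrac{B}{k^2}\Bigr)\le -\tfrac{A}{k}+\tfrac{B}{k^2}.
\]
Hence, for $\max(n_0,k_1)\le m\le n$,
\[
\prod_{k=m}^{n}\Bigl(1-\tfrac{A}{k}+\tfrac{B}{k^2}\Bigr)\le \exp\Bigl(-A\sum_{k=m}^n \tfrac1k + B\sum_{k\ge1}\tfrac1{k^2}\Bigr).
\]
Using $\sum_{k=m}^n 1/k\ge \log((n+1)/m)\ge \log(n/m)$, the right-hand side is at most $e^{B\pi^2/6}(m/n)^A$.

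\textbf{Step 2: the finitely many early factors.} Suppose $n_0<k_1$. The factors with $n_0\le k<k_1$ number at most $k_1$. Each is bounded in absolute value by $1+A+B$, so their product is at most $(1+A+B)^{k_1}$, which depends only on $A,B$.

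We must worry about a sign: an early factor may be negative. We then interpret the claim for $|a_n|$, or simply note that an upper bound on the product is implied by a bound on its absolute value. For $k\ge k_1$ all factors lie in $(0,1)$, as established in Step 1.

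Combining with Step 1 at $m=k_1$ gives
\[
a_n\le (1+A+B)^{k_1}e^{B\pi^2/6}(k_1/n)^A\le C'(n_0/n)^A,
\]
because $k_1/n\le k_1\, n_0/n$ when $n_0\ge1$. The extra factor $k_1^A$ is absorbed into the constant.

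\textbf{Step 3: the case $n<k_1$.} If $n<k_1$, then $a_n$ is a product of at most $k_1$ bounded factors. Since $n_0\le n<k_1$, we have $(n_0/n)^A\ge k_1^{-A}$, so enlarging the constant covers this case too.

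We take $C=\max\{(1+A+B)^{k_1}e^{B\pi^2/6}k_1^A,\;(1+A+B)^{k_1}k_1^A\}$.

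The only delicate point is the first step's choice of $k_1$. It must be made uniformly in $n_0$, so that the constant depends only on $A,B$. It must also ensure the logarithm bound is valid, i.e.\ that the factors are positive. Everything else is the standard harmonic-sum comparison. The assumption $A>1$ is not actually needed for this bound; it is used only later, when the bound is summed.
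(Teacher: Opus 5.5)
Your proof is correct and follows essentially the paper's route: use $\log(1+x)\le x$ and the harmonic-sum bound on the tail beyond a threshold depending only on $A,B$ (the paper takes $\lceil A\rceil$, where positivity already holds), bound the finitely many earlier factors by a constant, and absorb the threshold into $C$ via $n_0\ge 1$. Bounding the early factors in absolute value by $(1+A+B)^{k_1}$ and treating $n<k_1$ explicitly is more careful than the paper's $\max\{M,1\}^{\lceil A\rceil}$ bound, which does not account for possibly negative factors. Your aside that the tail factors lie in $(0,1)$ is false when $k<B/A$, since those factors exceed $1$; this is harmless because only their positivity is used.
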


\begin{proof}[Proof of Lemma \ref{lem:uniform-product-bound}]
We first consider the case when $n_0\ge A$. Then for every $k\ge n_0$,
$$
    1-\frac{A}{k}+\frac{B}{k^2}>0.
$$
Using the elementary inequality $\log(1+x)\le x, \forall x>-1$, we obtain
\begin{align*}
    \log a_n &=\sum_{k=n_0}^{n}\log\left(1-\frac{A}{k}+\frac{B}{k^2}\right) \\
    &\le\sum_{k=n_0}^{n}\left(-\frac{A}{k}+\frac{B}{k^2}\right) \\
    &=-A\sum_{k=n_0}^{n}\frac{1}{k}+B\sum_{k=n_0}^{n}\frac{1}{k^2}.
\end{align*}
For the harmonic sum, we have
\[
    \sum_{k=n_0}^{n}\frac{1}{k}
    \ge
    \int_{n_0}^{n+1}\frac{1}{x}\,dx
    =
    \log\left(\frac{n+1}{n_0}\right).
\]
Moreover,
\begin{equation*}
    \sum_{k=n_0}^{n}\frac{1}{k^2}\le\sum_{k=1}^{\infty}\frac{1}{k^2} =\frac{\pi^2}{6}<3.
\end{equation*}
Therefore,
\begin{align}
    \log a_n & \le-A\log\left(\frac{n+1}{n_0}\right) + 3B. \nonumber
\end{align}
Take exponential on both sides yields
\begin{equation*}
    a_n\leq \exp(3B)\left(\frac{n_0}{n+1}\right)^A <\exp(3B)\left(\frac{n_0}{n}\right)^A.
\end{equation*}
Next we consider the case when $1\leq n_0 < A$. In this case, $n_0A >A$. Let $M = \max_{k<A}(1-A/k +B/k^2)$, which only depends on $A$ and $B$. It follows that
\begin{eqnarray}
    a_n &=& \prod_{k=n_0}^{n}\left(1-\frac{A}{k}+\frac{B}{k^2}\right)\nonumber\\
    &\leq& [\max\{M,1\}]^{\lceil A\rceil}\times  \prod_{k=\lceil A\rceil+1}^{n}\left(1-\frac{A}{k}+\frac{B}{k^2}\right)\nonumber\\
    &\leq& [\max\{M,1\}]^{\lceil A\rceil}\times\exp(3B)\times\left(\frac{\lceil A\rceil+1}{n}\right)^{A}\nonumber\\
    &\leq& [\max\{M,1\}]^{\lceil A\rceil}\times(\lceil A\rceil+1)^A\times\exp(3B)\times\left(\frac{n_0}{n}\right)^A.\nonumber
\end{eqnarray}
Here the second-to-last inequality follows from the case when $n_0>A$ and the last inequality follows from $n_0\geq 1$. Hence, the proof is completed by taking 
\begin{equation*}
    C= \exp(3B)\times  [\max\{M,1\}]^{\lceil A\rceil}\times(\lceil A\rceil+1)^A.
\end{equation*}
\end{proof}

\begin{lemma}[First-order smooth interpolation]\label{lem:first-order-interpolation}
Let $\{\theta_i\}_{i=0}^n \subset \mathbb{R}^d$ be a sequence satisfying
\[
    \|\theta_{i+1}-\theta_i\| \leq \frac{c}{i},
    \qquad i=1,\ldots,n-1,
\]
for some constant $c>0$. Then there exists a continuously differentiable path $\theta(t),t\in[1,n]$ such that $\theta(i)=\theta_i$ for all $i=0,\ldots,n$ and $\|\theta'(t)\| \leq \frac{3c}{t}, \forall t \in [1,n]$. 
\end{lemma}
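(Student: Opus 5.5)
The natural approach is to build the path piecewise: on each unit interval $[i,i+1]$ for $i\ge 1$, interpolate between $\theta_i$ and $\theta_{i+1}$ using a $C^1$ transition function that has vanishing derivative at both endpoints. Concretely, fix a smooth step $\phi:[0,1]\to[0,1]$ with $\phi(0)=0$, $\phi(1)=1$, $\phi'(0)=\phi'(1)=0$; the cubic $\phi(s)=3s^2-2s^3$ will do, and it satisfies $\max_s|\phi'(s)|=3/2$. Then define
\[
\theta(t)=\theta_i+\phi(t-i)\,(\theta_{i+1}-\theta_i),\qquad t\in[i,i+1],\; i=1,\ldots,n-1.
\]
By construction $\theta(i)=\theta_i$ for $i=1,\ldots,n$. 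The one-sided derivatives at every integer node vanish from both sides, so $\theta$ is continuously differentiable on $[1,n]$. The value $\theta_0$ lies outside the stated domain $[1,n]$ and is attached only formally (for instance, by a similar segment on $[0,1]$ if one wants it). In its applications the lemma is only used for $t\ge 1$, so this causes no difficulty.

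The derivative bound follows directly. For $t\in[i,i+1]$ we have
\[
\|\theta'(t)\|=|\phi'(t-i)|\,\|\theta_{i+1}-\theta_i\|\le \tfrac32\cdot\tfrac{c}{i}.
\]
Since $t\le i+1\le 2i$ for $i\ge 1$, it follows that $1/i\le 2/t$, and hence $\|\theta'(t)\|\le 3c/t$. This gives exactly the constant $3c$ in the statement.

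The only delicate point is matching the constant: we need $\sup|\phi'|\cdot\sup_{t\in[i,i+1]}(t/i)\le 3$. The cubic Hermite step achieves $\tfrac32\cdot 2=3$, so the bound is exact. If a worse transition were chosen, one would still obtain $C c/t$ for some constant $C$, which suffices for the use in Theorem~\ref{thm:value}. Finally, we note that the hypothesis $\|\theta_{i+1}-\theta_i\|\le c/i$ itself comes from Assumption~\ref{assump:learningrate} together with the boundedness of $\widehat g_{\mathrm{KAE}}$ under Assumption~\ref{assump:boundedreward}, which gives $c=\beta\sup\|\widehat g_{\mathrm{KAE}}\|$.
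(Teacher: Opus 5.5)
Your proposal is correct and follows the paper's proof essentially line for line: the same cubic step $3s^2-2s^3$ on each $[i,i+1]$ with $i\ge 1$, the same bound $\sup|\phi'|=3/2$, and the same observation $t\le 2i$ that yields $3c/t$. Your remark that $\theta_0$ is not attained on the domain $[1,n]$ is a fair correction of the paper's loose claim that $\theta(i)=\theta_i$ for all $i=0,\ldots,n$.
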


\begin{proof}
Let $\Delta_i=\theta_{i+1}-\theta_i, \quad i=0,\ldots,n-1$ and define $\rho(u)=3u^2-2u^3$. Then it is easy to verify that $\rho(0) = \rho'(0)=\rho'(1) =0$ and $\rho(1)=1$. For each $t\in[i,i+1]$, define
\[
    \theta(t)
    =
    \theta_i+\rho(t-i)\Delta_i,
    \qquad i=1,\ldots,n-1.
\]
Clearly, $\theta(i)=\theta_i$ for all $i=0,\ldots,n$.
Moreover, since $\rho'(0)=\rho'(1)=0$, the left and right derivatives agree at each integer point. Hence $\theta$ is continuously differentiable on $[1,n]$. Additionally, for $t\in[i,i+1]$, we have $\theta'(t)=\rho'(t-i)\Delta_i$. Since $\sup_{u\in[0,1]}|\rho'(u)| = \sup_{u\in[0,1]}6u(1-u) = 3/2$, it follows that 
$$
\|\theta'(t)\|\leq\frac{3}{2}\|\Delta_i\|\leq\frac{3c}{2i}.
$$
Since $t\in[i,i+1]$, we have $t\leq i+1\leq 2i$, and hence $i^{-1}\leq 2t^{-1}$. Thus,
$$
 \|\theta'(t)\|\leq\frac{3c}{2i}\leq\frac{3c}{t}.
$$
This proves the desired bound.
\end{proof}

\section{Experimental Details}
\subsection{Baseline construction in the one-shot experiment} \label{sup:one-shot-detail}

In the one-shot regime, the training set consists of a single prompt $x$. At training step $i$, the minibatch contains $B$ repeated copies of this prompt, so that $X_i^{(b)} = x$ for all $b=1,\ldots,B$. For each copy we sample $G$ completions and compute rewards
\[
Z_i^{(b,g)} = r\!\left(X_i^{(b)}, Y_i^{(b,g)}\right), \qquad b=1,\ldots,B,\quad g=1,\ldots,G.
\]
Because all $B$ repeated copies correspond to the same prompt, one may in principle construct a baseline from the entire current minibatch,
\[
\widetilde V_i^{\mathrm{pool},(b,g)}
=
\frac{1}{BG-1}\sum_{(b',g')\neq (b,g)} Z_i^{(b',g')},
\]
which uses all but the current reward among the $BG$ rewards collected at step $i$. Our oracle method in Figure~\ref{fig:oneshot} is exactly this baseline, and its advantage is given by
\[
A_i^{\mathrm{oracle},(b,g)}
=
Z_i^{(b,g)}-\widetilde V_i^{\mathrm{pool},(b,g)}.
\]

By contrast, the one-shot GRPO baseline used in \citet{wang2025reinforcement}\footnote{In the \texttt{verl} version used for one-shot RLVR, completions are grouped by a randomly generated \texttt{uid} attached to each repeated copy in the current minibatch. Consequently, the GRPO baseline for copy $b$ is computed only from the $G$ completions generated from that copy at the current step, rather than from all $BG$ completions in the minibatch.} uses only the rewards from the current local group. Its baseline is the within-group average
\[
\widehat V_i^{\mathrm{GRPO},(b,g)}
=
\frac{1}{G-1}\sum_{k\neq g} Z_i^{(b,k)},
\]
and the corresponding advantage is
\[
A_i^{\mathrm{GRPO},(b,g)}
=
Z_i^{(b,g)}-\widehat V_i^{\mathrm{GRPO},(b,g)}.
\]
Thus, GRPO estimates the baseline from only $G$ rewards at the current step, even though the one-shot setting permits pooling over the entire minibatch.

To make the effect of KAE visible in the one-shot experiment, we also treat the $B$ repeated copies in the minibatch as $B$ distinct prompt streams when constructing KAE. KAE therefore does not pool all repeated copies at the current step. Instead, for each repeated stream $b$, it first computes the current-step local mean
\[
\bar Z_{i,-g}^{(b)}
=
\frac{1}{G-1}\sum_{k\neq g} Z_i^{(b,k)},
\]
and then smooths these local means across nearby training steps. Writing $\bar Z_j^{(b)}=G^{-1}\sum_{k=1}^G Z_j^{(b,k)}$ for the historical group mean at step $j<i$, the resulting baseline can be expressed as
\[
\widehat V_i^{\mathrm{KAE},(b,g)}
=
\frac{\sum_{j<i} K_h(i-j)\,\bar Z_j^{(b)} + K_h(0)\,\bar Z_{i,-g}^{(b)}}{\sum_{j<i} K_h(i-j) + K_h(0)}.
\]
The resulting advantage is
\[
A_i^{\mathrm{KAE},(b,g)}
=
Z_i^{(b,g)}-\widehat V_i^{\mathrm{KAE},(b,g)}.
\]
Therefore, in the one-shot comparison, the oracle baseline borrows information across all repeated copies at the current step, GRPO borrows information only within the current local group, and KAE borrows information across nearby training steps for each repeated stream.

\subsection{Value-estimation MSE diagnostic}
\label{sup:mse-detail}

We next describe the empirical MSE diagnostic used in Section~\ref{sec:exp-baseline-mse}. Fix a target training step $i$ and a prompt $x_b$ from the sticky minibatch used throughout training. In the implementation, we generate a separate target-step sample pool
\[
\left\{Z_{i,m}^{\mathrm{oracle}}(x_b)\right\}_{m=1}^{N_{\mathrm{oracle}}}
\]
to approximate the target value function by
\[
\widetilde V_i(x_b)
=
\frac{1}{N_{\mathrm{oracle}}}\sum_{m=1}^{N_{\mathrm{oracle}}}Z_{i,m}^{\mathrm{oracle}}(x_b).
\]
This quantity is treated as the oracle target in the MSE calculation.

For each Monte Carlo repeat $r=1,\ldots,R$, we draw a small current-step group from a disjoint target-step estimation pool and, for KAE, additional small groups from the history windows $\mathcal H_i$. We then compute the corresponding leave-one-out baselines for GRPO and KAE using the same constructions as in Section~\ref{sec:alg}, while REINFORCE++ uses the repeat-specific across-prompt average of the target-step group means. Denoting by $\widehat V_{i,r,-g}^{(a,b)}$ the resulting baseline assigned to the $g$th sampled response of prompt $x_b$ under method $a\in\{\mathrm{Naive},\mathrm{RF++},\mathrm{GRPO},\mathrm{KAE}\}$, the prompt-level empirical MSE is defined by
\[
\widehat{\mathrm{MSE}}_b^{(a)}
=
\frac{1}{RG}\sum_{r=1}^{R}\sum_{g=1}^{G}
\left(
\widehat V_{i,r,-g}^{(a,b)}-\widetilde V_i(x_b)
\right)^2.
\]
The quantities reported in Table~\ref{tab:baseline_mse_cross} are obtained by averaging $\widehat{\mathrm{MSE}}_b^{(a)}$ over prompts $b=1,\ldots,B$. Confidence intervals in the sensitivity plots are obtained by bootstrap resampling over prompts.

\subsection{Gradient-evaluation diagnostic}
\label{sup:grad-detail}

We also describe the empirical gradient diagnostic used in Section~\ref{sec:exp-gradvar}. Fix a target training step $i$ and prompt $x_b$. Let
\[
\left\{(Y_{i,h}^{(b)},Z_{i,h}^{(b)})\right\}_{h=1}^{N_{\mathrm{ref}}}
\]
denote the full target-step sample pool retained for this prompt. We use it to define a shared reference value
\[
\widetilde V_i^{(b)}
=
\frac{1}{N_{\mathrm{ref}}}\sum_{h=1}^{N_{\mathrm{ref}}}Z_{i,h}^{(b)}
\]
and the corresponding reference gradient
\[
\widetilde g_i^{(b)}
=
\nabla_{\theta}
\left\{
-\frac{1}{N_{\mathrm{ref}}}\sum_{h=1}^{N_{\mathrm{ref}}}
\left(Z_{i,h}^{(b)}-\widetilde V_i^{(b)}\right)
\log \pi_{\theta_i}\!\left(Y_{i,h}^{(b)}\mid x_b\right)
\right\}.
\]
This shared reference gradient is used as the center for all compared methods.

For each Monte Carlo repeat $m=1,\ldots,M$, we sample with replacement $G$ responses from every step in $\mathcal H_i\cup\{i\}$ and then construct the corresponding no-standardization advantages for Naive, REINFORCE++, GRPO, and KAE using the same baseline definitions as in Section~\ref{sec:alg}. Let $A_{i,m}^{(a,b,g)}$ denote the resulting advantage assigned to the $g$th sampled response of prompt $x_b$ under method $a$. We then form the stochastic loss
\[
L_{i,m}^{(a,b)}
=
-\frac{1}{G}\sum_{g=1}^{G}
A_{i,m}^{(a,b,g)}
\log \pi_{\theta_i}\!\left(Y_{i,m}^{(b,g)}\mid x_b\right),
\]
and the corresponding gradient sample
\[
\widehat g_{i,m}^{(a,b)}
=
\nabla_{\theta}L_{i,m}^{(a,b)}.
\]
The prompt-level empirical error reported by the diagnostic is
\[
\widehat{\mathcal E}_b^{(a)}
=
\frac{1}{M}\sum_{m=1}^{M}
\left\|
\widehat g_{i,m}^{(a,b)}-\widetilde g_i^{(b)}
\right\|_2^2,
\]
Table~\ref{tab:gradvar_stage_summary} reports the average of $\widehat{\mathcal E}_b^{(a)}$ across prompts. For the 7B MATH model, this quantity is evaluated on the tracked parameter subset used in the implementation, namely the final decoder block and the output layer.

\subsection{Policy-optimization configurations}
\label{sup:policy-config}

We finally summarize the main training configurations used in the policy-optimization experiments of Section~\ref{sec:large-scale}. Across all settings, we use an actor learning rate of $10^{-6}$, do not add KL regularization to the reward or the policy loss, set the entropy coefficient to zero, and enable gradient checkpointing. Within each experimental block, the compared methods share the same model, data split, rollout budget, and optimization setup; they differ only in the way the baseline is constructed and, when applicable, in whether KAE-style reuse of recent prompt occurrences is enabled.

\smallskip

\noindent\textbf{7B multi-stream experiments.}
For the 7B MATH experiment, we use Qwen2.5-Math-7B, train on the harder MATH training split (Level 3-5). Prompts are truncated at 1024 tokens and responses at 2048 tokens. Each update uses a minibatch of 256 prompts, a PPO minibatch size of 128, and $G=8$ completions per prompt. When KAE is enabled, each prompt can be reused for up to ten nearby updates, and the temporal weighting follows the triangular kernel in Section~\ref{sec:alg} with bandwidth $h\approx 10$. Training is run on 4 GPUs for 200 update steps, with evaluation every 5 steps.

For the 7B DAPO experiment, we again use Qwen2.5-Math-7B, now trained on the DAPO-MATH 17k dataset. The prompt budget remains 1024 tokens, and the response budget is 2048 tokens. The minibatch size is 256, the PPO minibatch size is 128, and the rollout group size is $G=4$. Here KAE reuses each prompt for up to eight nearby updates, and uses the same triangular kernel with bandwidth $h\approx 10$. Training is run on 4 GPUs for 500 update steps, with evaluation every 10 steps.

\smallskip

\noindent\textbf{1.5B single-stream experiments.}
For the 1.5B GSM8K experiment, we use Qwen2.5-1.5B-Instruct, train on the GSM8K training set, and evaluate on the GSM8K test set. This is the single-stream regime, so each prompt generates only one completion per update. To keep the total rollout budget comparable, we therefore use a much larger minibatch of 1024 prompts and a PPO minibatch size of 512. Both prompts and responses are capped at 512 tokens. The run lasts 400 update steps, evaluation is performed every 5 steps, and the reported results are averaged over repeated runs. In the KAE experiments, a prompt may be revisited for up to ten nearby updates, and the triangular kernel uses bandwidth $h\approx 10$ throughout training.

For the 1.5B MATH experiment, we use Qwen2.5-Math-1.5B, again in the single-stream regime with one completion per prompt and the same total rollout budget of 1024 prompt-response pairs per update. The minibatch size is therefore 1024 and the PPO minibatch size is 512. Prompts are truncated at 1024 tokens and responses at 2048 tokens. Training lasts 400 update steps with evaluation every 5 steps. As in the 1.5B GSM8K setting, KAE revisits each prompt for up to ten nearby updates and uses the same triangular kernel with bandwidth $h\approx 10$ throughout training.

\section{Supplementary Figures}
\begin{figure}[htbp]
  \centering
  \includegraphics[width=\linewidth]{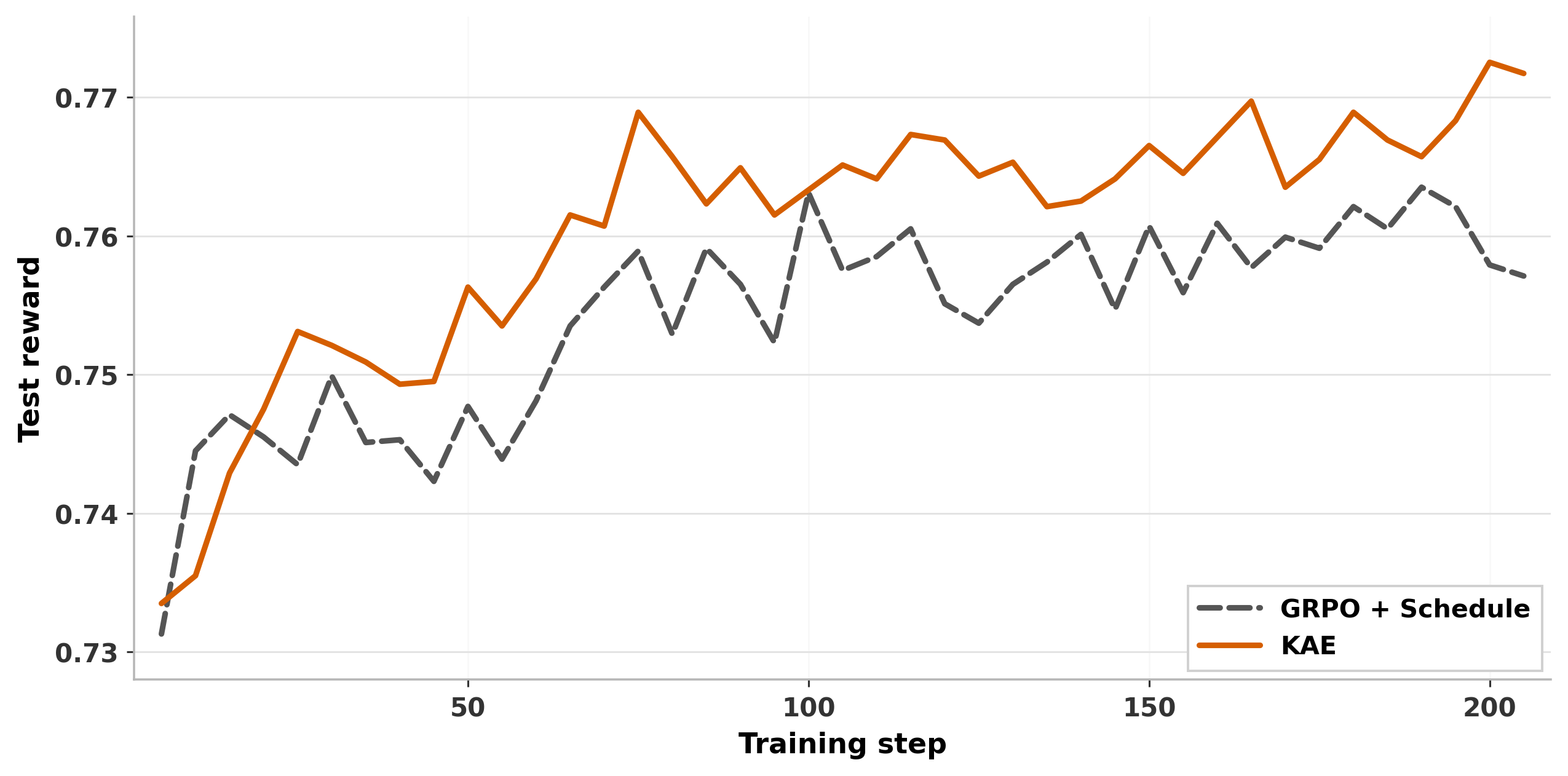}
  \caption{Test accuracy of Qwen2.5-Math-7B models trained with KAE (orange) and a GRPO variant using the proposed prompt sampling scheme (grey), on the MATH dataset.}\label{fig:ablation_math_curves}
\end{figure}

\end{document}